%% file: main.tex
\documentclass[letterpaper,journal]{IEEEtran}

\usepackage[american]{babel}
\usepackage{amsmath,amsfonts,amssymb}
\usepackage{algorithm}
\usepackage{algpseudocode}
\usepackage{array}
\usepackage{booktabs}
\usepackage{multirow}
\usepackage{tabularx}
\usepackage{makecell}
\usepackage{colortbl}
\usepackage[dvipsnames, svgnames, x11names]{xcolor}
\usepackage[caption=false,font=normalsize,labelfont=sf,textfont=sf]{subfig}
\usepackage{graphicx}
\graphicspath{{figure/}}
\usepackage{textcomp}
\usepackage{stfloats}
\usepackage{url}
\usepackage{cite}
\usepackage[hyphens]{xurl}
\usepackage{microtype}
\usepackage{bm}
\usepackage{spverbatim}
\usepackage{etoc}
\usepackage{tikz}
\usetikzlibrary{arrows.meta, positioning, calc, decorations.markings}

\usepackage{amsthm}
\theoremstyle{plain}
\newtheorem{theorem}{Theorem}
\newtheorem{proposition}[theorem]{Proposition}
\newtheorem{lemma}[theorem]{Lemma}

\theoremstyle{definition}

\newtheorem{assumption}[theorem]{Assumption}
\theoremstyle{remark}
\newtheorem{remark}[theorem]{Remark}

\usepackage[pagebackref,breaklinks,colorlinks]{hyperref}
\hypersetup{
    citecolor=green!80!black
}
\usepackage[capitalize,noabbrev]{cleveref}

\newcolumntype{C}{>{\centering\arraybackslash}X}
\newcommand{\romark}[1]{\uppercase\expandafter{\romannumeral #1\relax}}
\AtBeginEnvironment{spverbatim}{\small}
\newcommand{\norm}[1]{\left\lVert #1 \right\rVert}
\newcommand{\inp}[2]{\left\langle #1, #2 \right\rangle}
\newcommand{\ie}{\emph{i.e.}}
\newcommand{\eg}{\emph{e.g.}}
\DeclareRobustCommand{\swudi}{\mbox{\normalfont\scshape SWUDI}}
\DeclareRobustCommand{\aswudi}{\mbox{\normalfont\scshape SWUDI-A}}
\DeclareRobustCommand{\optmerge}{\mbox{\normalfont OptMerge}}
\DeclareRobustCommand{\wudi}{\mbox{\normalfont\scshape WUDI}}

\makeatletter
\providecommand{\part}{\@ifstar{\@gobble}{\@gobble}}

\providecommand{\@IEEEtablestring}{table}
\providecommand{\@IEEEfigurecaptionsepspace}{\vskip\abovecaptionskip}
\providecommand{\@IEEEtablecaptionsepspace}{\vskip\abovecaptionskip}
\renewcommand{\tablename}{Table}
\def\fnum@table{\tablename\nobreakspace\thetable}
\long\def\@makecaption#1#2{%
  \ifx\@captype\@IEEEtablestring
    {\normalfont\footnotesize\centering #1. #2\par}%
    \@IEEEtablecaptionsepspace
  \else
    \@IEEEfigurecaptionsepspace
    \setbox\@tempboxa\hbox{\normalfont\footnotesize {#1.}~#2}%
    \ifdim \wd\@tempboxa >\hsize
      \normalfont\footnotesize {#1.}~#2\par
    \else
      \hbox to\hsize{\hfil\box\@tempboxa\hfil}%
    \fi
  \fi}
\makeatother

\begin{document}

\title{Closed-Form Spectral Regularization for Multi-Task Model Merging}

\author{Yongxian~Wei, Runxi~Cheng, Xingxuan~Zhang, Li~Shen, Chun~Yuan, Peng~Cui,~\IEEEmembership{Fellow,~IEEE}, Dacheng~Tao,~\IEEEmembership{Fellow,~IEEE}
\thanks{Yongxian~Wei, Runxi~Cheng, and Chun~Yuan are with Shenzhen International Graduate School, Tsinghua University, Shenzhen 518071, China (email: \{weiyx23, crx23\}@mails.tsinghua.edu.cn, yuanc@sz.tsinghua.edu.cn).

Xingxuan~Zhang and Peng~Cui are with the Department of Computer Science and Technology, Tsinghua University, Beijing 100084, China (email: xingxuanzhang@hotmail.com, cuip@tsinghua.edu.cn).

Li~Shen is with Sun Yat-sen University, Shenzhen 510275, China (email: mathshenli@gmail.com).

Dacheng~Tao is with Nanyang Technological University, Singapore 639798 (email: dacheng.tao@gmail.com).
}%
}


\IEEEtitleabstractindextext{%
\begin{abstract}
Model merging combines several independently fine-tuned experts into a single multi-task model without any training data, reducing the storage, serving, and decentralized-development costs of large foundation models.
State-of-the-art merging methods formulate merging as a layer-wise quadratic interference minimization problem. Although this problem admits an exact closed-form pseudoinverse solution, that solution underperforms hundreds of iterations of gradient descent in practice. The iterative loop dominates the cost of the pipeline (\eg, $85$ minutes and $42$ GB of GPU memory), yet its effectiveness has remained unexplained.
We revisit this regime and show that the iterative solver does not primarily act as an optimizer; rather, it serves as an \emph{implicit spectral regularizer} for an ill-posed normal equation, where small-eigenvalue directions of the per-layer interference operator amplify proxy noise.
Building on this finding, we formalize multi-task model merging as a noisy linear inverse problem, and propose a spectral filtering estimator parameterized by a per-direction filter $h_k$. We instantiate this estimator with \swudi{}, a closed-form method that combines a soft exponential filter, which matches the gradient-flow trajectory of iterative descent, with a hard top-$K$ truncation that suppresses noise-amplifying small-eigenvalue directions. Furthermore, we propose \aswudi{}, an adaptive variant that replaces the global rank hyperparameter with per-layer rank rules, further improving robustness across architectures. Both variants share a single symmetric eigendecomposition per linear layer and require no training data or optimizer state.
Across four general benchmarks (vision/language) and a multimodal merging benchmark spanning VQA, Geometry, Chart, OCR, Grounding, and modality merging, our proposed spectral solvers match or outperform state-of-the-art merging methods. Crucially, they reduce wall-clock time by $28$--$72\times$ and peak GPU memory by up to $50\%$. Code and the extended benchmark are available at \url{https://github.com/WalkerWorldPeace/MLLMerging}.
\end{abstract}

\begin{IEEEkeywords}
Multi-task model merging, data-free, training-free, spectral regularization, closed-form solvers.
\end{IEEEkeywords}}

\maketitle
\IEEEdisplaynontitleabstractindextext
\IEEEpeerreviewmaketitle

\input{sections/01_introduction}
\input{sections/02_related_work}
\input{sections/03_preliminaries}
\input{sections/04_methods}
\input{sections/05_benchmarks}
\input{sections/06_analysis}
\input{sections/07_conclusion}

\bibliographystyle{IEEEtran}
\bibliography{references}

\clearpage
\onecolumn
\appendices

\part*{}
\addcontentsline{toc}{part}{Appendix}

\begin{center}
    \LARGE \bfseries {Supplementary Material of Closed-Form\\Spectral Regularization for Multi-Task Model Merging}
\end{center}

\vspace{0.5em}

\setcounter{tocdepth}{3}
\etocsetnexttocdepth{subsubsection}
\localtableofcontents

\input{appendices/A_notation}

\input{appendices/B_derivations}
\input{appendices/C_additional_results}
\input{appendices/D_reproducibility}

\end{document}

%% file: sections/01_introduction.tex
\section{Introduction}
\label{sec:intro}

\IEEEPARstart{U}{pdating} foundation models is costly: full pre-training or large-scale continued training requires substantial compute and data access. At the same time, domain-specialized, task-specific fine-tuned checkpoints are continually released on open-source platforms such as Hugging Face~\cite{wolf2019huggingface}. Model merging~\cite{yadav2024matters,ilharcoediting,yang2024model} aims to combine $N$ experts that share the same backbone into a single multi-task model \emph{without} any training data, dramatically reducing the storage, serving, and decentralized-development costs of large foundation models. State-of-the-art methods \wudi{}~\cite{cheng2025whoever} and \optmerge{}~\cite{wei2026optmerge} formulate model merging as a layer-wise quadratic interference minimization problem, yielding consistently strong performance across diverse tasks and models.

These iterative methods share a common computational core: they minimize the interference proxy through hundreds of iterations of a gradient-based optimizer with carefully tuned learning rates, momenta, and initializations (the precise choice, Adam for full fine-tuning versus SGD for LoRA, is detailed in Sec.~\ref{sec:prelim}). Empirical evidence reveals two key observations. \emph{First}, the proxy admits an exact closed-form minimum (a normal-equation pseudoinverse), but plugging this closed form into the merged model yields markedly worse downstream accuracy than running the iterative solver to early stopping (\eg, a $2.3$-point drop on CLIP-ViT-B/32). \emph{Second}, the iterative solver dominates the cost of the entire merging pipeline: on a 3B-parameter LLM, $300$ Adam steps take $85$ minutes and $42$\,GB of GPU memory. Why iterative descent outperforms the exact minimizer of the same objective, especially in a setting without training data, has remained unexplained.

We revisit iterative descent and show that it performs \emph{implicit spectral regularization} for an ill-posed normal equation. The per-layer interference loss
\[
\mathcal{L}(\tau)=\sum_{i=1}^N \tfrac{1}{\|\tau_i\|_F^2}\bigl\|(\tau-\tau_i)\tau_i^\top\bigr\|_F^2
\]
has a unique minimum-norm closed form $\tau^{\rm cf}=DC^\dagger$, where $A_i=\tau_i^\top \tau_i/\|\tau_i\|_F^2$, $C=\sum_i A_i$, and $D=\sum_i \tau_i A_i$. The eigenstructure $C=Q\Lambda Q^\top$ exhibits a long tail of small $\lambda_k$ that correspond to directions weakly supported by any task vector. In those directions, the proxy reduces to $y_k=\lambda_k\,\tau_k^\circ+\xi_k$, where $\xi_k$ collects the proxy-induced noise (\ie, the discrepancy between the task-vector proxy $\tau_i^\top\tau_i$ and the unobserved activation covariance it stands in for), and $\tau^{\rm cf}$ divides by $\lambda_k$, amplifying $\xi_k$. Iterative descent, by contrast, acts as an early-stopping spectral filter that down-weights small-$\lambda_k$ directions. This mechanism explains why $300$-step optimization can outperform the exact closed-form solution.

\begin{figure*}[!t]
\centering
\includegraphics[width=\textwidth]{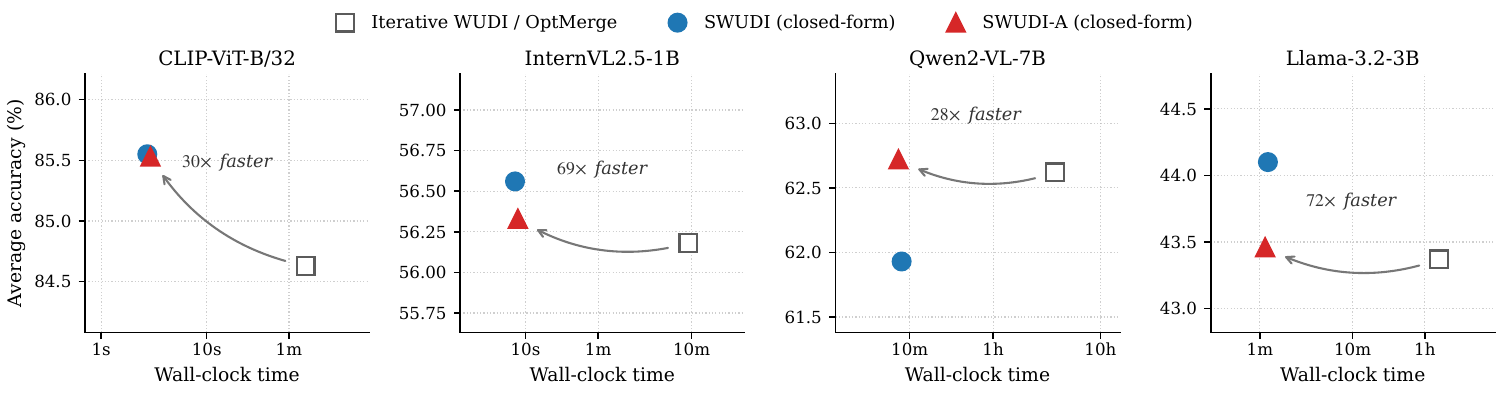}
\caption{\textbf{Accuracy--cost Pareto frontier across representative settings.} Each panel plots average accuracy against wall-clock time for the iterative \wudi{}/\optmerge{} baseline (\textcolor{black!60}{\(\square\)}) and our proposed closed-form solver \swudi{} (\textcolor[HTML]{1F77B4}{\(\bullet\)}) together with its adaptive variant \aswudi{} (\textcolor[HTML]{D62728}{\(\blacktriangle\)}). The closed-form solvers move merging toward the upper-left, achieving higher or comparable accuracy under a smaller merging budget, so the iterative baseline is Pareto-dominated.}
\label{fig:pareto}
\end{figure*}

Building on this finding, we formalize model merging as a noisy linear inverse problem $\tau C=D$, and propose a \emph{unified spectral filtering estimator} parameterized by a per-direction filter $h_k\in[0,1]$ applied to every eigendirection of $C$. The estimator subsumes the closed-form pseudoinverse, gradient flow, and rank truncation as filter choices, and any instantiation requires only a single symmetric eigendecomposition per linear layer. We materialize this framework in two stages, illustrated alongside prior merging families in Fig.~\ref{fig:method-overview}.
\textbf{(i)} \swudi{} (Spectrally Regularized \wudi{}): a tunable spectral variant of the unified estimator that couples a soft exponential filter $s_t(\lambda_k)=1-e^{-t\lambda_k}$, which exactly matches the gradient-flow trajectory of iterative descent, with a hard top-$K$ truncation $m_k=\mathbf{1}[k\le K]$, $K=\lceil r\,d_i\rceil$. The soft factor inherits the regularization that early-stopped descent already provides; the hard mask removes noise-amplifying tail directions before the soft factor can assign them non-negligible residual weight.
\textbf{(ii)} \aswudi{} (Adaptive \swudi{}): we further upgrade \swudi{} into an adaptive, parameter-free form by replacing the global rank ratio $r$ with per-layer rank rules driven by the eigenspectrum itself. For heavy-tailed spectra, we use $K_\ell=\bigl\lceil(\sum_k\sqrt{\lambda_k})^2/\sum_k\lambda_k\bigr\rceil$, an effective-rank estimator~\cite{roy2007effective} that returns exactly the oracle active rank when the spectrum is flat-and-truncated. For spiked-noise spectra, we use $K_\ell=\bigl|\{k:\sqrt{\lambda_k}>\omega(\beta)\,\mathrm{median}_j\sqrt{\lambda_j}\}\bigr|$, an asymptotically optimal singular-value threshold~\cite{marchenko1967distribution,gavish2014optimal} that retains only directions above the asymptotic random-noise floor under a spiked-noise model. These two variants are not separate algorithms but successive refinements of the same spectral framework: \swudi{} establishes the filter shape, while \aswudi{} derives its only remaining hyperparameter directly from the spectrum. Both are data- and training-free, require neither Adam states nor learning-rate schedules, and reduce the per-layer cost from hundreds of matrix multiplications to a single eigendecomposition.

We evaluate on four general benchmarks and a comprehensive multimodal benchmark (covering VQA, Geometry, Chart, OCR, Grounding, and modality merging). Spanning vision, language, multimodal, LoRA, and full-parameter settings, our solvers establish a new state-of-the-art. On CLIP-ViT, they achieve 
85.55\%, 89.57\%, and 92.51\% accuracy across the B/32, B/16, and L/14 backbones. Applying AdaMerging~\cite{yang2024adamerging} to our closed-form delta further lifts B/32 accuracy to 86.08\%, proving spectral merging is complementary to test-time adaptation. On Flan-T5 GLUE, \aswudi{} reaches +1.15\% over TSV-Merging. Furthermore, merging task-specialized MLLMs boosts general capabilities: the merged model achieves 70.58\% on integrated multimodal QA, far outperforming individual experts. Crucially, these accuracy gains are delivered with $28$--$72\times$ wall-clock speedup and up to $50\%$ peak GPU memory reduction relative to the iterative baselines.

\begin{figure*}[!t]
\centering
\includegraphics[width=0.7\textwidth]{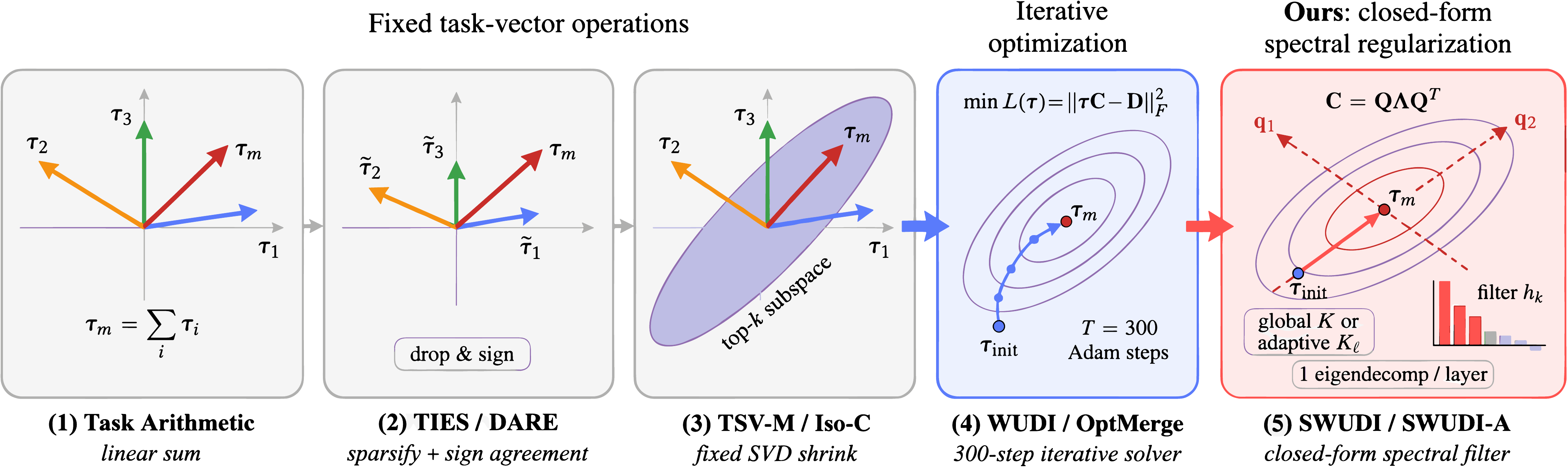}
\caption{\textbf{Illustration of different model merging methods.} Panels~1--3 apply
fixed operations to per-layer task vectors. Panel~4 shows
\wudi{}/\optmerge{} reaching the merged solution by hundreds of Adam steps
on a quadratic proxy loss. Panel~5 (Ours): \swudi{}/\aswudi{} replace this
loop with a single per-layer eigendecomposition followed by a spectral
filter that down-weights noise-amplifying small-eigenvalue directions.}
\label{fig:method-overview}
\end{figure*}

Our contributions are summarized as follows:
\begin{itemize}
\item \textbf{Theory.} We show that merging via the layer-wise quadratic interference loss is a noisy linear inverse problem whose closed-form pseudoinverse amplifies proxy noise on small-$\lambda_k$ directions, and that finite-step iterative descent acts as an implicit spectral regularizer (Propositions~\ref{prop:cf} and~\ref{prop:flow}).
\item \textbf{Methodology.} We propose a spectral filtering estimator: \swudi{}, a closed-form spectral variant that pairs an exponential gradient-flow filter with a hard rank truncation; and \aswudi{}, the adaptive form of \swudi{} that removes its hyperparameter via per-layer rank rules driven by the eigenspectrum (Sec.~\ref{sec:aswudi}). Meanwhile, our solvers reduce wall-clock time by $28$--$72\times$ across every setting and reduce peak GPU memory by up to $50\%$.
\item \textbf{Benchmark.} We introduce the first model merging benchmark that provides a fine-grained categorization of MLLM capabilities and evaluates how merging integrates multiple modalities. We train expert models for each task and publicly release their weights and code. This benchmark is designed to help the model merging community better evaluate the generalizability of their methods.
\end{itemize}

%% file: sections/02_related_work.tex
\section{Related Work}
\label{sec:related}

\subsection{Data-Free Model Merging}
Data-free merging produces a single multi-task model from $N$ fine-tuned experts that share a common base, without using any training or unlabeled test data. Existing methods can be broadly grouped into four families.

\textbf{Linear interpolation:}
Weight Averaging~\cite{wortsman2022model} averages all expert weights and works surprisingly well in narrow scenarios in which experts share a basin in parameter space. Task Arithmetic~\cite{ilharcoediting} introduces global task vectors $\bm{\tau}_i:=\Theta_i-\Theta_0$ and combines them additively as $\bm{\tau}_m=s\sum_i\bm{\tau}_i$ with a global coefficient $s$.
\textbf{Sparsification-based:}
TIES-Merging~\cite{yadav2023ties} trims, signs, and disjointly sums task vectors to suppress conflicting components. DARE~\cite{yu2024language} randomly drops and rescales task-vector entries to mitigate parameter interference.
\textbf{SVD-based:}
TSV-Merging (TSV-M)~\cite{gargiulo2024task} measures task-specific singular interference and decorrelates the dominant singular components. Iso-C~\cite{marczak2025no} flattens the singular spectrum so that no task dominates the merged model. Both methods can be interpreted as fixed spectral manipulations of the stacked task-vector geometry.
\textbf{Optimization-based:}
DOGE~\cite{wei2025modeling} frames model merging as a constrained optimization problem and solves it via adaptive projective gradient descent.
\wudi{}~\cite{cheng2025whoever} proves that, under the linear-subspace approximation, fine-tuning data are not needed: the task vectors themselves serve as a proxy for hidden activations. \optmerge{}~\cite{wei2026optmerge} augments \wudi{} with low-rank denoising of the task-vector matrix and a stable initialization; it tunes the optimizer separately for full and LoRA fine-tuning.

\subsection{Test-Time Adaptation and Dynamic Merging}
Test-time adaptation methods~\cite{yang2024adamerging,yang2024representation,daheim2024model} use unlabeled test data to learn merging coefficients. AdaMerging~\cite{yang2024adamerging} is representative: it learns per-layer scales from test inputs, whereas our solvers determine which spectral directions should be inverted in a data-free manner. These two axes are complementary in principle; we can combine AdaMerging-style scaling on top of our closed-form solutions. Dynamic (MoE-style) merging~\cite{tang2024merging,huang2024emr,lu2024twin,shen2025efficient} loads task-specific modules at inference time, which requires router training and increases storage.

\subsection{Model Merging for Multimodal LLMs}
VL-merging~\cite{sung2023empirical} merges modality-specific encoders before fine-tuning. VisionFuse~\cite{chen2024enhancing} concatenates visual features and applies task arithmetic on the LLM. UnIVAL~\cite{shukor2023unival} interpolates between multimodal-task experts. DAMC~\cite{chen2024model} composes vision/audio/video MLLMs through parameter decoupling and online activation merging. AdaMMS~\cite{du2025adamms} performs unsupervised hyperparameter search but only merges two MLLMs at a time. UQ-Merge~\cite{qu2024textttuqmerge} uses uncertainty quantification on unlabeled inputs to determine the merging order, but it treats every fine-tuning subset as a separate task without capability-level categorization. Our prior conference version~\cite{wei2026optmerge} introduced the first MLLM merging benchmark with a clean separation of training data and evaluation suites for VQA, Geometry, Chart, OCR, and Grounding, and additionally studied modality merging across vision, audio, and video.

%% file: sections/03_preliminaries.tex
\section{Rethinking Optimization-Based Merging}
\label{sec:prelim}
\label{sec:theory}

This section first introduces the task-vector merging notation and the objective of \wudi{}/\optmerge{}, and then rethinks the same objective as a noisy linear inverse problem. This organization makes the transition from existing iterative merging to the closed-form solvers in Sec.~\ref{sec:methods} explicit.

\subsection{Preliminaries}
\label{sec:preliminaries}

\subsubsection{Notation and Per-Layer Operators}

\noindent\textbf{Models.} $\Theta_0\in\mathbb{R}^d$ denotes the parameters of a shared base model, and $\Theta_1,\ldots,\Theta_N$ denote the parameters of $N$ \emph{experts} obtained by fine-tuning $\Theta_0$ on task-specific data. We restrict merging to two-dimensional weight tensors, \ie, linear and projection layers: for layer $\ell$, $W_0^{(\ell)},W_i^{(\ell)}\in\mathbb{R}^{d_o\times d_i}$. Non-two-dimensional parameters, including normalization parameters, embeddings, biases, and position indices, are merged by parameter averaging.

\noindent\textbf{Task vectors.} The global task vector of expert $i$ is $\bm{\tau}_i:=\Theta_i-\Theta_0\in\mathbb{R}^d$. Restricted to layer $\ell$, the corresponding task-vector matrix is
\begin{equation}
\tau_i^{(\ell)} \;:=\; \Pi_\ell(\bm{\tau}_i) \;=\; W_i^{(\ell)} - W_0^{(\ell)} \;\in\; \mathbb{R}^{d_o\times d_i},
\label{eq:task-vector}
\end{equation}
where $\Pi_\ell$ extracts the $\ell$-th weight block. When the layer is fixed, we omit $\ell$ and write $\tau_i$ for clarity.

\noindent\textbf{Merged delta and initial point.} The \emph{merged delta} of a method is $\bm{\tau}_m$, and the merged model is $\Theta_m=\Theta_0+\bm{\tau}_m$. We use $\tau:=\bm{\tau}_m^{(\ell)}\in\mathbb{R}^{d_o\times d_i}$ to denote the per-layer 2-D variable optimized by \wudi{} or returned by our closed-form solvers. The \emph{initial merged delta} of an iterative or closed-form solver is denoted $\tau_{\rm init}$ to distinguish it from the base model $\Theta_0$.

\subsubsection{The \wudi{} Loss}
\wudi{} Merging~\cite{cheng2025whoever} notes that, based on a linear subspace assumption, the hidden-activation interference $(\tau-\tau_i)x$ for a linear layer with input activations $x\in\mathbb{R}^{d_i\times n_s}$ can be effectively approximated by replacing $x$ with $\tau_i^\top$. This formulation yields a completely data-free per-layer loss:

\begin{equation}
\min_{\tau}\;\mathcal{L}\bigl(\tau\bigr)=\sum_{i=1}^N \frac{1}{\|\tau_i\|_F^2}\,\bigl\|(\tau-\tau_i)\,\tau_i^\top\bigr\|_F^2.
\label{eq:wudi}
\end{equation}
To minimize this loss, \wudi{} applies the Adam optimizer for $T=300$ steps.

\subsubsection{\optmerge{} Improvements}
\optmerge{}~\cite{wei2026optmerge} extends \wudi{} with three algorithmic refinements.
\emph{(i)} On full fine-tuned models, the task-vector matrix is centered as $\tilde \tau_i=\tau_i-\bar\tau$ and projected onto its top-$k$ singular components $U_{1:k}\Sigma_{1:k}V_{1:k}^\top$, where the per-layer mean is $\bar\tau:=\tfrac1N\sum_{j=1}^N\tau_j$. The projected matrix replaces $\tau_i^\top$ in Eq.~\eqref{eq:wudi}, denoising the proxy.
\emph{(ii)} On LoRA fine-tuned models, where $\tau_i$ is rank-deficient and the merged vector tends to take ``shortcuts'' by inflating its Frobenius norm, the optimizer is replaced by SGD with implicit regularization, and a low-rank truncation is applied directly to $\tau_i$ without centering.
\emph{(iii)} The per-layer variable is initialized to $\tau_{\rm init}=\bar\tau$, which stabilizes the training trajectory.

\optmerge{} produces the strongest results on the MLLM merging benchmark~\cite{wei2026optmerge}. Despite these gains, \optmerge{} retains the $300$-iteration optimization loop. The rethinking below argues that this loop is not necessary: it performs an implicit spectral regularization that we can carry out in closed form.

\subsection{Rethinking as a Noisy Linear Inverse Problem}
\label{sec:rethinking}
The two unexplained facts about iterative \wudi{}/\optmerge{} (that the exact closed-form minimum is worse than $300$-step iterative descent, and that this descent dominates the merging wall-clock time) are explained by a single change of viewpoint: \wudi{} is a noisy linear inverse problem, and iterative descent on it acts as an \emph{implicit spectral regularizer} of an ill-posed normal equation. We show that the per-layer \wudi{} objective in Eq.~\eqref{eq:wudi} is a quadratic in $\tau$ with a closed-form minimum-norm solution (Proposition~\ref{prop:cf}), explain why that closed form is suboptimal in the presence of proxy noise, and prove that gradient flow induces an exact exponential spectral filter on the closed-form pseudoinverse (Proposition~\ref{prop:flow}).

\subsubsection{Closed-Form Normal Equation}
\label{sec:cf}

For a linear layer with task vectors $\tau_i\in\mathbb{R}^{d_o\times d_i}$, define the symmetric operator
\begin{equation}
A_i \;:=\; \frac{\tau_i^\top \tau_i}{\|\tau_i\|_F^2}\in\mathbb{R}^{d_i\times d_i},\quad
C \;:=\; \sum_{i=1}^N A_i,\quad
D \;:=\; \sum_{i=1}^N \tau_i\,A_i.
\label{eq:ACD}
\end{equation}
$C$ is symmetric positive semidefinite. We let $C=Q\Lambda Q^\top$ be the eigendecomposition with eigenvalues $\lambda_1\ge\lambda_2\ge\cdots\ge\lambda_{d_i}\ge 0$ \emph{sorted in descending order} and corresponding eigenvectors $q_k$, the columns of $Q$.

\begin{proposition}[Closed-form WUDI normal equation]
\label{prop:cf}
The \wudi{} objective in Eq.~\eqref{eq:wudi} is the quadratic
\begin{equation}
\mathcal{L}(\tau) \;=\; \mathrm{tr}\bigl(\tau\,C\,\tau^\top\bigr) - 2\,\mathrm{tr}\bigl(\tau\,D^\top\bigr) + \mathrm{const},
\label{eq:wudi-quad}
\end{equation}
with gradient $\nabla_\tau\mathcal{L}(\tau)=2(\tau C-D)$. Any stationary point therefore satisfies the normal equation
\begin{equation}
\tau\,C \;=\; D.
\label{eq:normal}
\end{equation}
The set of stationary points is non-empty: each row of $D$ lies in $\mathrm{Range}(C)$, equivalently $D = D\,C^\dagger C$. Among all stationary points the unique minimum-Frobenius-norm element is
\begin{equation}
\tau^{\rm cf} \;=\; D\,C^\dagger \;=\; D\,Q\,\Lambda^\dagger\,Q^\top,
\label{eq:closed-form}
\end{equation}
where $\Lambda^\dagger$ inverts the strictly positive eigenvalues and sets the remaining entries to zero.
\end{proposition}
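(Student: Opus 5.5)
The plan is to expand each summand of Eq.~\eqref{eq:wudi} directly, using $\|M\|_F^2=\mathrm{tr}(MM^\top)$. Write $w_i:=1/\|\tau_i\|_F^2$. Then
\[
\bigl\|(\tau-\tau_i)\tau_i^\top\bigr\|_F^2=\mathrm{tr}\bigl((\tau-\tau_i)\tau_i^\top\tau_i(\tau-\tau_i)^\top\bigr).
\]
Multiplying by $w_i$ turns $\tau_i^\top\tau_i$ into $A_i$ from Eq.~\eqref{eq:ACD}. The expansion then gives
\[
\mathrm{tr}(\tau A_i\tau^\top)-2\,\mathrm{tr}(\tau A_i\tau_i^\top)+\mathrm{tr}(\tau_i A_i\tau_i^\top).
\]
The cross term combines because $A_i$ is symmetric and $\mathrm{tr}(M)=\mathrm{tr}(M^\top)$. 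Next, $\mathrm{tr}(\tau A_i\tau_i^\top)=\mathrm{tr}(\tau(\tau_iA_i)^\top)$. Summing over $i$ yields Eq.~\eqref{eq:wudi-quad}, with constant $\sum_i\mathrm{tr}(\tau_iA_i\tau_i^\top)$.

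For the gradient, I will use the standard identities $\nabla_\tau\mathrm{tr}(\tau C\tau^\top)=\tau(C+C^\top)=2\tau C$ and $\nabla_\tau\mathrm{tr}(\tau D^\top)=D$. These give $\nabla\mathcal{L}=2(\tau C-D)$, and setting it to zero gives Eq.~\eqref{eq:normal}. Since $C\succeq 0$ (each $A_i$ is a scaled Gram matrix), $\mathcal{L}$ is convex. Hence stationary points are exactly the minimizers, although only stationarity is claimed.

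The main step is consistency: every row of $D$ lies in $\mathrm{Range}(C)$. I will argue via null spaces. Take $v\in\ker C$. Then
\[
0=v^\top Cv=\sum_i w_i\|\tau_iv\|^2,
\]
and all weights are positive, so $\tau_iv=0$ for every $i$. It follows that $A_iv=0$, and therefore $Dv=\sum_i\tau_iA_iv=0$. So $\ker C\subseteq\ker D$, which means the row space of $D$ lies in $(\ker C)^\perp=\mathrm{Range}(C)$ by symmetry of $C$. Since $C^\dagger C$ is the orthogonal projector onto $\mathrm{Range}(C)$, this is equivalent to $D=DC^\dagger C$. I would also note the implicit assumption $\tau_i\neq 0$, needed for $w_i$ to be defined; zero task vectors would simply be dropped.

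Finally, $\tau^{\rm cf}=DC^\dagger$ satisfies $\tau^{\rm cf}C=DC^\dagger C=D$, so the stationary set is nonempty. The general solution is $\tau^{\rm cf}+E$ with $EC=0$, i.e.\ the rows of $E$ lie in $\ker C$. The rows of $\tau^{\rm cf}$ lie in $\mathrm{Range}(C^\dagger)=\mathrm{Range}(C)$, which is orthogonal to $\ker C$. Pythagoras then gives
\[
\|\tau^{\rm cf}+E\|_F^2=\|\tau^{\rm cf}\|_F^2+\|E\|_F^2,
\]
so $\tau^{\rm cf}$ is the unique minimum-norm solution. The spectral form $C^\dagger=Q\Lambda^\dagger Q^\top$ is the standard pseudoinverse of a symmetric matrix. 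I expect the range inclusion to be the only nonroutine step, and the null-space argument above handles it.
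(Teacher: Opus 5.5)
Your proposal is correct and follows essentially the same route as the paper's proof. You expand each normalized term as a trace quadratic to obtain Eq.~\eqref{eq:wudi-quad} and the gradient $2(\tau C-D)$, then show $\mathrm{Null}(C)\subseteq\mathrm{Null}(D)$ via $z^\top Cz=\sum_i\|\tau_i z\|_2^2/\|\tau_i\|_F^2$. Finally you use the orthogonality between $DC^\dagger$ and the null-space component, which the paper writes as $Z(I-CC^\dagger)$, to single out $DC^\dagger$ as the unique minimum-norm solution; your explicit Pythagoras step and the $\tau_i\neq 0$ remark add detail but do not change the argument.
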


\begin{table}[!t]
\centering
\caption{\wudi{} iteration-count sweep on CLIP-B/32 TA8, including the closed-form \wudi{} solution.}
\label{tab:wudi-iters}
\setlength{\tabcolsep}{3.5pt}
\renewcommand{\arraystretch}{1.08}
\resizebox{\columnwidth}{!}{%
\begin{tabular}{l|cccccc|c}
\toprule
\textbf{Solver} & \multicolumn{6}{c|}{\textbf{Iterative \wudi{} ($T$ steps)}} & \textbf{Closed form} \\
\cmidrule(lr){2-7}\cmidrule(lr){8-8}
 & 100 & 200 & 300 & 500 & 700 & 1000 & $DC^\dagger$ \\
\midrule
Avg.~Acc.~(\%) & 80.08 & 83.82 & 84.63 & \textbf{84.82} & 84.72 & 84.52 & 82.33 \\
\bottomrule
\end{tabular}}
\end{table}

\begin{proof}[Sketch]
By the definition of $A_i$, each normalized term in Eq.~\eqref{eq:wudi} is a trace quadratic in $\tau-\tau_i$; summing and collecting terms gives Eq.~\eqref{eq:wudi-quad}, and differentiating gives $\nabla_\tau\mathcal{L}=2(\tau C-D)$, hence the normal equation $\tau C=D$. Consistency follows because any $z\in\mathrm{Null}(C)$ is annihilated by every task vector: $\tau_i z=0$ for all $i$, which implies $A_i z=0$ and $Dz=0$. Thus $\mathrm{Null}(C)\subseteq\mathrm{Null}(D)$, equivalently the rows of $D$ lie in $\mathrm{Range}(C)$, or $D=DC^\dagger C$. The solutions are therefore $\tau=DC^\dagger+Z(I-CC^\dagger)$ with arbitrary $Z$. The second term lies in the null space of $C$ and is orthogonal to $DC^\dagger$, so the unique minimum-Frobenius-norm solution sets it to zero, giving Eq.~\eqref{eq:closed-form}. The eigenform follows from $C^\dagger=Q\Lambda^\dagger Q^\top$.
\end{proof}

\subsubsection{Why Exact Closed Form Is Suboptimal}
\label{sec:noise}

The closed-form solution is empirically \emph{not} optimal for downstream performance. Table~\ref{tab:wudi-iters} makes this gap concrete on CLIP-ViT-B/32: iterative \wudi{} improves at early steps, peaks at a finite iteration count, and then degrades as the trajectory approaches the exact pseudoinverse. The closed-form \wudi{} solution $DC^\dagger$ reaches only $82.33\%$, below both the $300$-step result ($84.63\%$) and the best early-stopped result ($84.82\%$). This unimodal pattern is the empirical signature that early stopping regularizes the inverse problem, whereas excessive optimization recovers noise-amplifying tail directions.

\begin{figure}[!t]
\centering
\includegraphics[width=0.78\columnwidth]{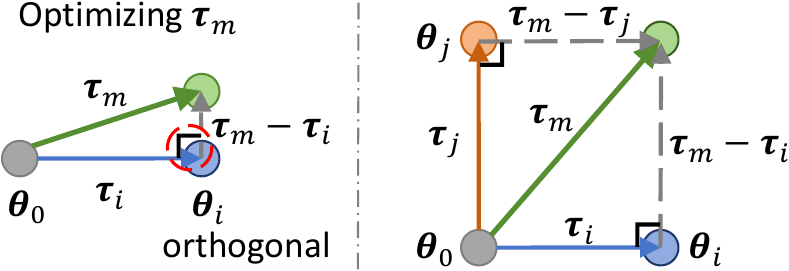}
\caption{\textbf{Norm-shortcut failure mode of unregularized iterative optimization.} When optimizing Eq.~\eqref{eq:wudi}, $\tau_m$ tends to take a shortcut by inflating its magnitude to make $(\tau_m-\tau_i)\tau_i^\top$ approximately orthogonal to each task vector, rather than aligning with the signal subspace.}
\label{fig:norm-shortcut}
\end{figure}

The reason is a noise-amplification mechanism familiar from inverse problems.
Decompose \(D=\tau^\circ C+E\), where \(\tau^\circ\) is an unobserved ideal merged delta and \(E\) denotes proxy-induced noise from replacing the true input activations \(x_i\) with the task-vector proxy \(\tau_i^\top\).
For each eigendirection \(q_k\), let \(\tau_k^\circ:=\tau^\circ q_k\in\mathbb{R}^{d_o}\).
Projecting onto \(q_k\) gives
\begin{equation}
y_k\,:=\,Dq_k \;=\; \lambda_k\,\tau_k^\circ + \xi_k,\qquad \xi_k:=Eq_k.
\label{eq:projection}
\end{equation}
For $\lambda_k>0$, the closed-form pseudoinverse gives
\begin{equation}
\tau^{\rm cf}_k = \tau^\circ_k + \xi_k/\lambda_k.
\label{eq:noise-amp}
\end{equation}
Small eigenvalues $\lambda_k$ correspond to row-space directions weakly supported by any task vector, exactly where the proxy-induced noise $\xi_k$ is large in magnitude relative to $\lambda_k\tau^\circ_k$. The pseudoinverse \emph{amplifies} this noise. Fig.~\ref{fig:theory-chain} empirically supports this view: the leading eigendirections explain nearly all proxy reduction, whereas the full pseudoinverse overfits the proxy and yields higher real interference. A regularized solver therefore replaces the inversion $1/\lambda_k$ by $h_k/\lambda_k$ for a filter $h_k\in[0,1]$ that vanishes (or shrinks) for small $\lambda_k$.

A related parameter-level instability surfaces as unconstrained inflation of $\|\tau\|_F$: when ill-conditioned directions are inverted, an iterative solver of Eq.~\eqref{eq:wudi-quad} can drive $\|\tau\|_F$ upward to make $(\tau-\tau_i)\tau_i^\top$ approximately orthogonal to each $\tau_i$, rather than recovering the underlying signal. The two phenomena are linked (both stem from poorly damped small-$\lambda_k$ directions). Fig.~\ref{fig:norm-shortcut} illustrates the geometry: when task vectors lie in a narrow subspace, the unique low-loss direction lies far from the origin, so unregularized descent on Eq.~\eqref{eq:wudi-quad} keeps inflating the merged-vector norm. The link to theory is made formal in Proposition~4 (Appendix~B): the WUDI proxy bounds the real per-layer interference up to a Frobenius slack term proportional to $\|\tau-\tau_i\|_F^2$, so once $\|\tau\|_F$ inflates, the slack term dominates and the proxy ceases to control the real interference. Suppressing small-$\lambda_k$ directions therefore plays a dual role: it removes the noise-amplifying inversion of Eq.~\eqref{eq:noise-amp} and keeps $\|\tau\|_F$ controlled, restoring tightness of the proxy bound.

\begin{figure*}[!t]
\centering
\includegraphics[width=0.7\textwidth]{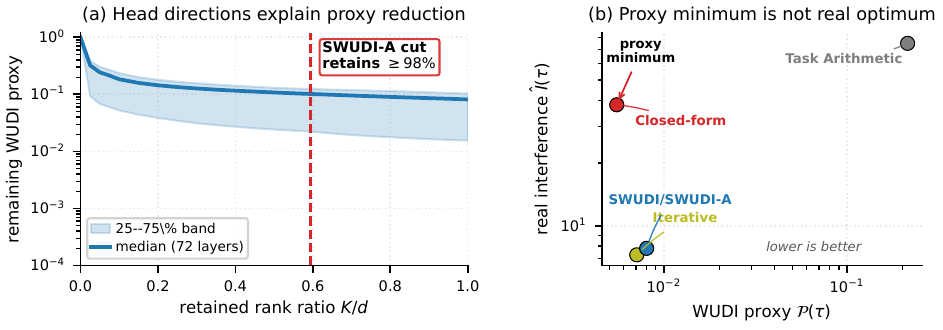}
\vspace{-0.5em}
\caption{\textbf{The exact pseudoinverse is insufficient.}
\textbf{(a)} Most WUDI proxy reduction is achieved by the leading eigendirections: the \aswudi{} cut retains at least $98\%$ of the median proxy reduction, indicating that the discarded spectral tail contributes little to the proxy objective.
\textbf{(b)} The closed-form pseudoinverse $DC^\dagger$ minimizes the WUDI proxy $\mathcal{P}(\tau)$ but yields higher real interference $\hat I(\tau)$ than the regularized alternatives in the lower-left region. This motivates regularized rather than full pseudoinversion.}\label{fig:theory-chain}
\end{figure*}

\subsubsection{Why Iterative Descent Works? Implicit Spectral Filtering}
\label{sec:flow}

\begin{proposition}[Gradient flow induces an exponential spectral filter]
\label{prop:flow}
Consider the gradient flow $\dot\tau(t)=-\tfrac{1}{2}\nabla_\tau\mathcal{L}(\tau(t))$ for the loss in Eq.~\eqref{eq:wudi-quad}, started at $\tau(0)=\tau_{\rm init}$. The flow is the linear ODE $\dot\tau(t)=D-\tau(t)\,C$, with closed-form solution
\begin{equation}
\tau(t)\;=\;\tau_{\rm init} \;+\; \bigl(D C^{\dagger} - \tau_{\rm init}\,C C^\dagger\bigr)\,Q\,\mathrm{diag}\bigl(h_k(t)\bigr)\,Q^\top,
\label{eq:flow}
\end{equation}
where the spectral filter is
\begin{equation}
h_k(t)\;=\;1-e^{-\lambda_k t},\qquad k=1,\ldots,d_i.
\label{eq:flow-filter}
\end{equation}
Thus, $h_k(t)\to 0$ as $\lambda_k\to 0^+$, while $h_k(t)\to 1$ as $\lambda_k t\to\infty$. The closed-form pseudoinverse $\tau^{\rm cf}=DC^\dagger$ is the $t\to\infty$ limit on the column space of $C$ (\ie, on the directions where $\lambda_k>0$), and the trajectory remains at $\tau_{\rm init}$ on the null space.
\end{proposition}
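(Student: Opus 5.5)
The plan is to use Proposition~\ref{prop:cf} and then diagonalize the flow in the eigenbasis of $C$. Since $\nabla_\tau\mathcal{L}=2(\tau C-D)$, the flow $\dot\tau=-\tfrac12\nabla_\tau\mathcal{L}$ is exactly the linear matrix ODE $\dot\tau=D-\tau C$. Its right-hand side is affine and Lipschitz, so the solution is unique. It therefore suffices to exhibit a candidate that satisfies the ODE and the initial condition.

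The key step is a change of variables. Set $\sigma(t):=\tau(t)Q$ and $\tilde D:=DQ$. Right-multiplying the ODE by $Q$ and using $CQ=Q\Lambda$ gives $\dot\sigma=\tilde D-\sigma\Lambda$, which decouples column by column:
\[
\dot\sigma_k=y_k-\lambda_k\sigma_k,\qquad y_k=Dq_k .
\]
There are two cases.
\begin{itemize}
\item If $\lambda_k>0$, the scalar-coefficient linear ODE gives $\sigma_k(t)=e^{-\lambda_k t}\sigma_k(0)+(1-e^{-\lambda_k t})\,y_k/\lambda_k$. Equivalently, $\sigma_k(t)=\sigma_k(0)+h_k(t)\bigl(y_k/\lambda_k-\sigma_k(0)\bigr)$ with $\sigma_k(0)=\tau_{\rm init}q_k$.
\item If $\lambda_k=0$, Proposition~\ref{prop:cf} gives $D=DC^\dagger C$, so $y_k=DC^\dagger Cq_k=0$. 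Hence $\dot\sigma_k=0$ and $\sigma_k(t)=\tau_{\rm init}q_k$. This agrees with the same formula, because $h_k(t)=1-e^{0}=0$.
\end{itemize}

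Next I would reassemble the columns. Since $DC^\dagger Q=DQ\Lambda^\dagger$ has $k$-th column $y_k/\lambda_k$ when $\lambda_k>0$ and $0$ otherwise, and $CC^\dagger Q=Q\,\mathrm{diag}(\mathbf 1[\lambda_k>0])$, the matrix $(DC^\dagger-\tau_{\rm init}CC^\dagger)Q$ has $k$-th column $y_k/\lambda_k-\tau_{\rm init}q_k$ on the positive eigenvalues and $0$ on the null ones. Multiplying on the right by $\mathrm{diag}(h_k(t))Q^\top$ and adding $\tau_{\rm init}=\tau_{\rm init}QQ^\top$ therefore reproduces $\sigma(t)Q^\top$ column by column, which is Eq.~\eqref{eq:flow}. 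As a consistency check I would differentiate Eq.~\eqref{eq:flow} directly: $\dot h_k=\lambda_k e^{-\lambda_k t}$, and one verifies $\dot\tau=D-\tau C$ using $C^\dagger C Q=CC^\dagger Q$ together with $D=DC^\dagger C$. The main obstacle is exactly this bookkeeping on the null space. The consistency identity $D=DC^\dagger C$ from Proposition~\ref{prop:cf} is what ensures that no secular term $t\,y_k$ appears when $\lambda_k=0$.

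Finally, the limits. For fixed $t$, $h_k(t)\to0$ as $\lambda_k\to0^+$. For $\lambda_k>0$, $h_k(t)\to1$ as $t\to\infty$. Hence $\sigma_k(t)\to y_k/\lambda_k=\tau^{\rm cf}q_k$ on $\mathrm{Range}(C)$, while $\sigma_k(t)\equiv\tau_{\rm init}q_k$ on $\mathrm{Null}(C)$. So $\tau(\infty)=DC^\dagger+\tau_{\rm init}(I-CC^\dagger)$, which equals $\tau^{\rm cf}$ on the column space of $C$ and stays at $\tau_{\rm init}$ on the null space, as stated.
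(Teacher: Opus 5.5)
Your proposal is correct and follows the same route as the paper. Both arguments diagonalize the flow in the eigenbasis of $C$ via $\tilde\tau=\tau Q$, solve the decoupled per-direction linear ODEs, and reassemble.

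Your version also makes explicit a step the paper's proof leaves implicit: on null directions $Dq_k=0$ by the consistency identity $D=DC^\dagger C$ from Proposition~\ref{prop:cf}, and this is what justifies $\tilde\tau_k\equiv\tilde\tau_{{\rm init},k}$ there.
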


\begin{proof}
Let $\tilde\tau(t):=\tau(t)Q$ and $\tilde D:=DQ$. The flow decouples into $d_i$ vector ODEs $d\tilde\tau_k/dt=\tilde D_k-\lambda_k\tilde\tau_k$ (one per eigendirection, each $\tilde\tau_k,\tilde D_k\in\mathbb{R}^{d_o}$). For $\lambda_k>0$ the unique solution is $\tilde\tau_k(t)=\tilde\tau_{{\rm init},k}\,e^{-\lambda_k t}+(\tilde D_k/\lambda_k)(1-e^{-\lambda_k t})$. For $\lambda_k=0$, $\tilde\tau_k(t)\equiv\tilde\tau_{{\rm init},k}$. Reassembling and identifying the filter completes the proof.
\end{proof}

\begin{figure}[!t]
\centering
\includegraphics[width=0.8\columnwidth]{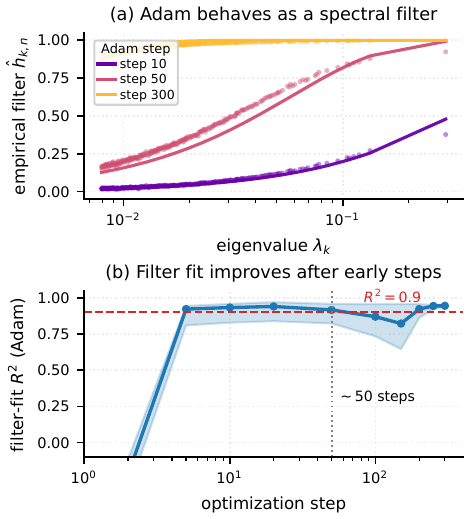}
\vspace{-0.5em}
\caption{\textbf{Iterative merging is implicit spectral filtering.} \textbf{(a)} Adam's empirical update filter is well approximated by an exponential spectral filter at three checkpoints: large-eigenvalue directions are fitted earlier than small-eigenvalue directions. \textbf{(b)} Across layers, the median fit quality exceeds $R^2=0.9$ after about $50$ steps. This supports replacing the iterative loop with the closed-form filter. The exact SGD/Landweber identity is reported in Appendix~C.}
\label{fig:optimizer}
\end{figure}

The discrete Landweber iteration $\tau_{n+1}=\tau_n+\eta(D-\tau_n C)$ admits the corresponding filter $h_k^{\rm LW}(n)=1-(1-\eta\lambda_k)^n$, stable for $0<\eta<2/\lambda_{\max}$. For the optimizer used by \wudi{}/\optmerge{}, the Adam trajectory empirically matches an exponential filter $1-e^{-t_{\rm eff}\lambda_k}$ up to a non-trivial $R^2\in[0.92,0.97]$ on per-layer fits (see Fig.~\ref{fig:optimizer}), with Spearman correlation approaching $0.99$ for step counts $\ge 50$. We therefore treat Adam as an empirically early-stopped spectral regularizer.

\subsubsection{Implication for Solver Design}
\label{sec:implication}
The analysis above gives a direct design principle: instead of minimizing the proxy objective for hundreds of optimizer steps, compute a spectrally regularized solution of the normal equation $\tau C=D$ in closed form. Concretely, the full pseudoinverse factor $1/\lambda_k$ in $DC^\dagger$ should be replaced by $h_k/\lambda_k$, where $h_k\in[0,1]$ attenuates eigendirections that are weakly supported and therefore prone to noise amplification.

From this, two complementary filter components naturally arise: (1) \textbf{Soft filtering} ($h_k=1-e^{-t\lambda_k}$), where a continuous time parameter $t$ matches the gradient-flow stopping time; and (2) \textbf{Hard truncation} ($h_k=\mathbf{1}[k\le K]$), where a rank cutoff $K$ removes the poorly conditioned spectral tail and can be tuned globally or adapted per layer. The following section integrates these components into a unified spectral filtering estimator (Eq.~\eqref{eq:unified}), instantiating this approach as a hybrid solver and an adaptive per-layer truncation rule.

\noindent\emph{Connection to existing methods.} This filter view places several data-free merging methods in a common language. Tikhonov regularization corresponds to the classical filter $h_k=\lambda_k/(\lambda_k+\alpha)$. Iso-C~\cite{marczak2025no} and TSV-Merging~\cite{gargiulo2024task} can be re-read as fixed shrinkage or truncation rules on the spectrum of the stacked task-vector matrix, equivalently on the square-root spectrum of $C$. Finite-step gradient descent on the \wudi{}~\cite{cheng2025whoever} quadratic gives the Landweber filter, while the Adam optimizer used in \wudi{}/\optmerge{} empirically behaves as an early-stopped spectral regularizer rather than an exact Landweber iteration.

More broadly, this comparison clarifies two levels at which a data-free merging method can intervene: it can denoise the proxy normal equation itself, thereby modifying the estimated operator/right-hand-side pair $(C,D)$, or it can keep the proxy equation fixed and regularize the inversion of its ill-conditioned operator. Our \optmerge{} mainly belongs to the first category and additionally stabilizes the iterative optimization trajectory. The next section pursues the second route by replacing the full pseudoinverse $C^\dagger$ with closed-form spectral filters that suppress noise-amplifying eigendirections.

%% file: sections/04_methods.tex
\section{Methodology}
\label{sec:methods}

We now turn the spectral view of Sec.~\ref{sec:theory} into closed-form, data-free merging algorithms. For each layer, we reuse the normal-equation quantities $C$ and $D$ from Eq.~\eqref{eq:ACD}, with eigendecomposition $C=Q\Lambda Q^\top$ and eigenvalues $\lambda_1\ge\cdots\ge\lambda_{d_i}$.

\subsection{\swudi{}: Spectrally Regularized \wudi{}}
\label{sec:swudi}

We first cast all closed-form spectral solvers of $\tau C=D$ into a single family parameterized by a per-direction filter $h_k\in[0,1]$, and then specialize the family to obtain \swudi{}. For spectral filter coefficients $\{h_k\}_{k=1}^{d_i}$, define $C_h^\dagger:=Q\operatorname{diag}(h_k/\lambda_k)Q^\top$, with the Moore--Penrose convention that the diagonal entry is set to $0$ on directions with $\lambda_k=0$. The unified spectral filtering estimator is
\begin{equation}
\widehat{\tau}_h \;=\; \tau_{\rm init} \;+\; (D-\tau_{\rm init}\,C)\,C_h^\dagger.
\label{eq:unified}
\end{equation}
The core operation is the per-direction filter $h_k$, which controls how strongly each eigendirection of $C$ is inverted; $\tau_{\rm init}$ is an optional base point at which the filter is applied, and the $\tau_{\rm init}=0$ specialization $\widehat{\tau}_h=D\,C_h^\dagger$ recovers the direct filtered inverse and remains close in accuracy in our experiments. Eq.~\eqref{eq:unified} makes clear that the choice of $h_k$ determines the regularization. Two filter behaviors are essential for \swudi{}: the exponential filter $h_k=1-e^{-t\lambda_k}$ exactly recovers the gradient-flow solution stopped at time $t$ from Proposition~\ref{prop:flow}, transferring the early-stopping effect of iterative \wudi{} into closed form; the hard filter $h_k=\mathbf{1}[k\le K]$ yields a rank-$K$ truncated spectral inverse, removing weakly supported tail directions. By contrast, the unregularized choice $h_k\equiv 1$ recovers the pseudoinverse limit $DC^\dagger$ and serves as the unstable reference case.

This filter perspective dictates the behavior of $h_k$. The gradient-flow filter $s_t(\lambda)=1-e^{-t\lambda}$ captures the early-stopping regularization provided by finite-step iterative descent across the bulk of the spectrum. However, its small-$\lambda$ behavior must be evaluated in terms of the quantity that actually enters $C_h^\dagger$: although $s_t(\lambda)\to 0$ as $\lambda\to 0$, the \emph{effective inverse gain} $g_t(\lambda):=s_t(\lambda)/\lambda$ converges to $t$. Consequently, pure exponential filtering still allows tail noise to propagate with finite gain across many weakly supported directions. Conversely, hard truncation $\mathbf{1}[k\le K]$ completely eliminates the tail by setting $g_t$ to zero for truncated indices, but it fails to reproduce the gradient-flow regularization on the retained directions.
\swudi{} combines the two by multiplying them into a single two-factor filter that we plug into Eq.~\eqref{eq:unified}:
\begin{equation}
\begin{aligned}
h_k &= m_k \cdot s_t(\lambda_k),\quad s_t(\lambda_k)=1-e^{-t\lambda_k},\\
m_k &= \mathbf{1}[k\le K],\quad K=\lceil r\,d_i\rceil.
\end{aligned}
\label{eq:swudi}
\end{equation}
Therefore, \swudi{} improves merging quality not through exact proxy minimization, but by preventing the proxy inverse from overfitting to noise-amplifying tail directions. The retained head directions capture most of the transferable task signal while keeping the merged delta norm $\|\tau\|_F$ controlled (Sec.~\ref{sec:noise}). Two hyperparameters control this regularizer: a continuous exponential time $t\ge 0$, which corresponds to the gradient-flow stopping time of \wudi{}, and a rank ratio $r\in(0,1]$. The soft factor $s_t$ applies early-stopping regularization to the retained directions, while the hard mask $m_k$ zeroes out the effective inverse gain $g_t$ on the long tail of small-$\lambda_k$ directions before they can introduce noise-amplifying weights into $C_h^\dagger$. Ultimately, the merged delta $\widehat{\tau}_h$ is computed using Eq.~\eqref{eq:unified}, with the filter $h_k$ defined in Eq.~\eqref{eq:swudi}.

\begin{algorithm}[!t]
\caption{Unified closed-form spectral merging}
\label{alg:swudi}
\small
\begin{algorithmic}[1]
\Require expert deltas $\{\tau_i^{(\ell)}\}_{i=1}^N$ for every linear layer; optional \swudi{} parameters $(t,r)$
\Ensure merged delta $\bm{\tau}_m$
\For{each linear layer $\ell$ with $\tau_i\in\mathbb{R}^{d_o\times d_i}$}
  \State $A_i\gets \tau_i^\top\tau_i/\|\tau_i\|_F^2$,\quad $C\gets\sum_i A_i$,\quad $D\gets\sum_i\tau_i A_i$
  \State $C=Q\operatorname{diag}(\lambda_1,\ldots,\lambda_{d_i})Q^\top$,\quad $\lambda_1\ge\cdots\ge\lambda_{d_i}$
  \Statex \hspace{1.2em}\Comment{$C$ is the spectral operator; $D$ is the right-hand side.}
  \State $\tau_{\rm init}\gets\sum_i\tau_i$
  \State $K_{\rm S}\gets\lceil r d_i\rceil$
  \State $K_{\rm A}^{(\ell)}\gets\left\lceil(\sum_k\sqrt{\lambda_k})^2/\sum_k\lambda_k\right\rceil$
  \Statex \hspace{1.2em}$h_k=\begin{cases}
  \mathbf{1}[k\le K_{\rm S}](1-e^{-t\lambda_k}), & \swudi{},\\
  \mathbf{1}[k\le K_{\rm A}^{(\ell)}], & \aswudi{}.
  \end{cases}$
  \Statex \hspace{1.2em}\Comment{$h_k$ regularizes each eigendirection.}
  \State $C_h^\dagger\gets Q\operatorname{diag}(h_k/\lambda_k)Q^\top$
  \State $\bm{\tau}_m^{(\ell)}\gets \tau_{\rm init}+(D-\tau_{\rm init}C)\,C_h^\dagger$
\EndFor
\State Average non-2-D parameters and return $\bm{\tau}_m$.
\end{algorithmic}
\end{algorithm}

\subsection{\aswudi{}: Adaptive Variant}
\label{sec:aswudi}

The rank ratio $r$ in \swudi{} is global. However, spectra differ significantly across layers (\eg, attention $q/k/v/o$, MLP, embedding) and architectures (\eg, CLIP-ViT, Flan-T5, Llama, MLLMs). The adaptive variant, \aswudi{}, addresses this by choosing $K_\ell$ per layer using a closed-form rank rule based on the eigenvalues $\{\lambda_k^{(\ell)}\}$, thereby eliminating the need for a global rank hyperparameter. Within the unified spectral estimator (Eq.~\eqref{eq:unified}), \aswudi{} acts as a hard-truncation specialization: it sets the soft factor to the identity for retained directions and replaces the global rank ratio with a layer-wise spectral rank rule $K_\ell$, allowing the spectrum itself to dictate the cutoff.

\textbf{Layer-wise rank selection:}
We provide two parameter-free layer-wise rank rules, each corresponding to a specific spectral regime and computable from the existing eigendecomposition. Both operate on the singular values of the stacked task-vector matrix $M:=[\tau_1/\|\tau_1\|_F;\ldots;\tau_N/\|\tau_N\|_F]\in\mathbb{R}^{Nd_o\times d_i}$. Because $M^\top M=\sum_i A_i=C$, the singular values of $M$ are simply $\sigma_k=\sqrt{\lambda_k}$, allowing these rank rules to be evaluated directly from the eigenspectrum of $C$.

\emph{(i) Participation-square-root rule.}
\begin{equation}
K_\ell^{\rm psqrt}\;=\;\left\lceil \frac{\bigl(\sum_k\sigma_k\bigr)^2}{\sum_k\sigma_k^2}\right\rceil
=\left\lceil\frac{\bigl(\sum_k\sqrt{\lambda_k^{(\ell)}}\bigr)^2}{\sum_k\lambda_k^{(\ell)}}\right\rceil.
\label{eq:psqrt}
\end{equation}
This applies a participation-ratio effective-rank estimator~\cite{roy2007effective} to $\sigma_k$, thus measuring the effective column rank of $M$ rather than the squared-energy rank of $C$. This prevents undue concentration on the largest eigenvalues in heavy-tailed spectra. We utilize this as the default rule when the spectrum decays smoothly without a distinct noise floor.

\emph{(ii) Marchenko--Pastur Gavish--Donoho rule.}
\begin{equation}
K_\ell^{\rm Gavish}\;=\;\bigl|\bigl\{k:\sigma_k>\omega_{\rm GD}(\beta)\,\widehat\sigma_{\rm med}\bigr\}\bigr|,
\label{eq:gd}
\end{equation}
where $\beta=\min(Nd_o,d_i)/\max(Nd_o,d_i)$, $\omega_{\rm GD}(\beta)$ is the Gavish--Donoho ratio~\cite{gavish2014optimal}, and $\widehat\sigma_{\rm med}=\mathrm{median}_k(\sigma_k)$ robustly estimates the noise scale. Under a Marchenko--Pastur spiked model $M=M^\circ+\Xi$ with low-rank $M^\circ$ and i.i.d.\ noise $\Xi$, this rule recovers the spike rank with high probability~\cite{marchenko1967distribution}. We apply it to spectra with a clear noise bulk and isolated spikes.

In summary, \texttt{psqrt} provides a smooth participation count that consistently returns a positive rank and tolerates heavy tails, whereas \texttt{Gavish-Donoho} acts as a strict noise-floor test that may return a rank of $0$ if no singular value is significant. Consequently, the appropriate rule can be selected based on the spectrum and fine-tuning regime prior to downstream evaluation. Fig.~\ref{fig:spectral-rank} visualizes these regimes, with detailed per-architecture statistics provided in Appendix~C. Algorithm~\ref{alg:swudi} summarizes the unified closed-form procedure.

\subsection{Computational Complexity}
\label{sec:complexity}
For a single linear layer, the dominant cost is the symmetric eigendecomposition of $C\in\mathbb{R}^{d_i\times d_i}$, which is $O(d_i^3)$ time and $O(d_i^2)$ memory. Forming $C$ and $D$ costs $O(N d_o d_i^2)$ FLOPs. Iterative \wudi{}/\optmerge{} performs $T$ matrix multiplications of similar shapes per layer plus Adam first/second moments, so the wall-clock speedup is roughly $T/c$, where $c$ captures implementation-dependent constants. Empirically, we observe $28$--$72\times$ speedups (Sec.~\ref{sec:analysis}, Table~\ref{tab:efficiency}). Because no Adam state is needed, peak GPU memory is reduced by approximately the size of the optimizer state.

%% file: sections/05_benchmarks.tex
\section{Benchmarks and Experimental Results}
\label{sec:bench}
\label{sec:exp_results}

This section details our experimental setup and results. We evaluate five merging scenarios spanning vision, language, and multimodal foundation models, utilizing both LoRA and full fine-tuning settings. Sec.~\ref{sec:bench-mllm} focuses on our proposed MLLM merging benchmark, while Sec.~\ref{sec:bench-general} extends the evaluation to four widely adopted model merging benchmarks. Finally, we provide a comprehensive discussion and analysis.

\subsection{MLLM Merging Benchmark}
\label{sec:bench-mllm}

We evaluate our approach on our MLLM merging benchmark, briefly summarizing the setup here while deferring comprehensive details to Appendix~D. Fig.~\ref{fig:bench-setting} illustrates the benchmark's two settings, capability merging and modality merging.

\begin{figure}[!t]
\centering
\includegraphics[width=0.92\columnwidth]{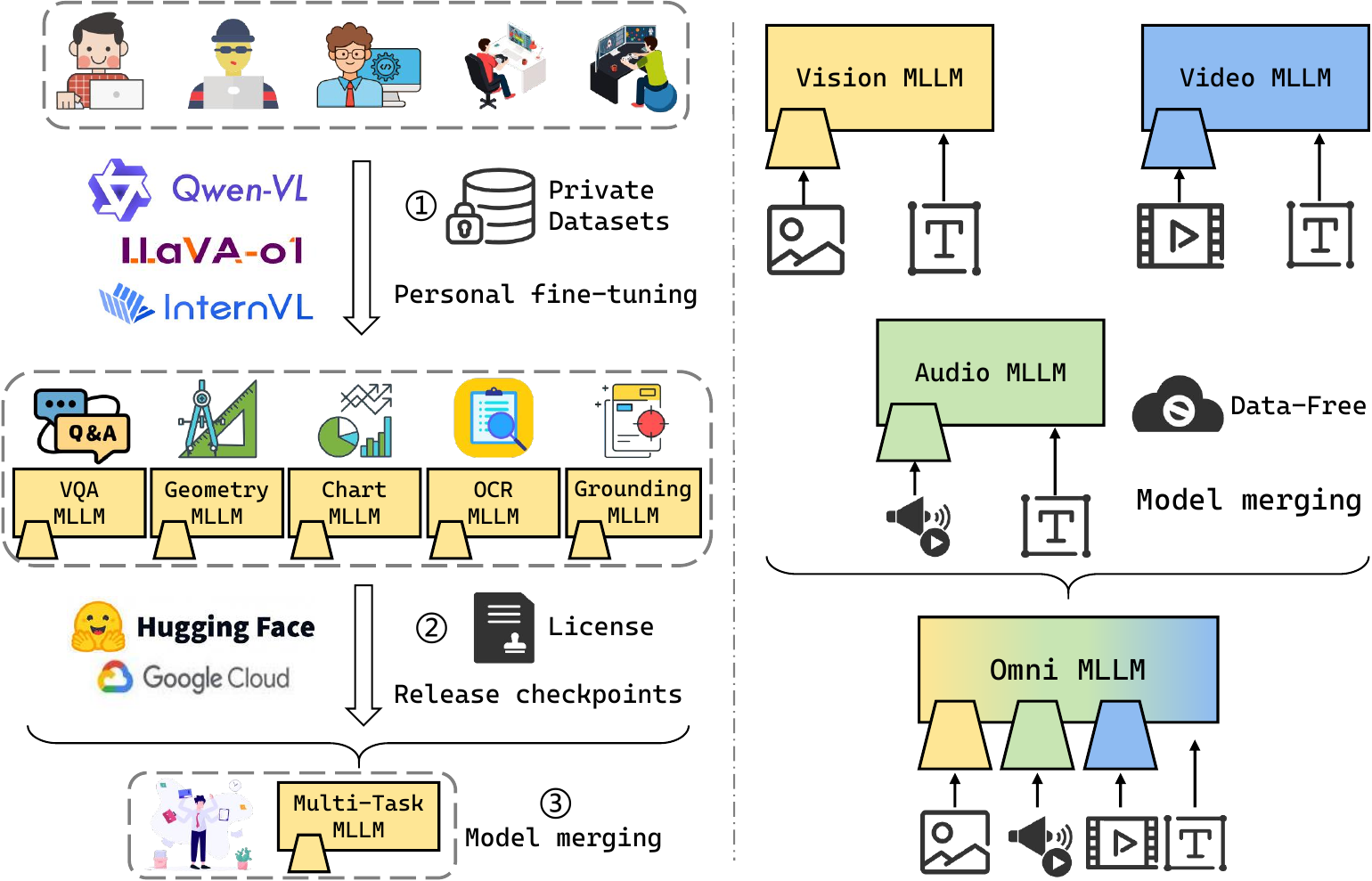}
\caption{\textbf{Two settings of the MLLM merging benchmark.} \emph{Capability merging} (left) combines task-specialized experts that share the same MLLM backbone into a single multi-task model covering VQA, Geometry, Chart, OCR, and Grounding. \emph{Modality merging} (right) composes vision-, audio-, and video-language experts that share an LLM backbone but use modality-specific encoders and connectors. Both settings are data-free, enabling the merged model to retain expert capabilities without requiring joint training data.}
\label{fig:bench-setting}
\end{figure}

\noindent\textbf{Backbones.} We consider two MLLMs that cover both fine-tuning regimes: \emph{InternVL2.5-1B-Instruct}~\cite{chen2024expanding} (full fine-tuning) and \emph{Qwen2-VL-7B-Base}~\cite{wang2024qwen2} (LoRA fine-tuning). For modality merging, we follow~\cite{chen2024model} and use Vicuna-7B-v1.5~\cite{zheng2023judging} paired with CLIP-ViT-L-336px for vision, BEATs-Iter3+ with a Q-Former for audio, and LanguageBind for video.

\noindent\textbf{Tasks and data.} We consider five capabilities (VQA, Geometry, Chart, OCR, Grounding), each with at least 100K training samples. The dataset table is detailed in Appendix~D.

\noindent\textbf{Evaluation.} We use VLMEvalKit~\cite{duan2024vlmevalkit} and lmms-eval~\cite{zhang2024lmms} under matched settings. For capability evaluation, we report results on VizWiz~\cite{gurari2018vizwiz}, GQA~\cite{hudson2019gqa}, MathVista~\cite{lu2024mathvista}, MATH-Vision~\cite{wang2024measuring}, ChartQA~\cite{masry2022chartqa}, TextVQA~\cite{singh2019towards}, OCRVQA~\cite{mishra2019ocr}, and RefCOCO/+/g~\cite{kazemzadeh2014referitgame}. For integrated multimodal QA, we report results on MMMU~\cite{yue2024mmmu}, DocVQA~\cite{mathew2021docvqa}, ScienceQA~\cite{lu2022learn}, AI2D~\cite{kembhavi2016diagram}, and InfographicVQA~\cite{mathew2022infographicvqa}. For modality merging, we report results on MUSIC-AVQA~\cite{li2022learning} and AVQA~\cite{yang2022avqa}.

\subsection{General Model Merging Benchmarks}
\label{sec:bench-general}

\noindent\textbf{Benchmarks.}
\textbf{(i) CLIP-ViT TA8.}
The standard 8-task vision benchmark from FusionBench~\cite{tang2024fusionbench} (SUN397, Cars, RESISC45, EuroSAT, SVHN, GTSRB, MNIST, DTD). We evaluate three CLIP-pretrained backbones~\cite{radford2021learning}: ViT-B/32, ViT-B/16, and ViT-L/14, reporting the mean per-task accuracy.
\textbf{(ii) CLIP-ViT-B/32 TALL20.}
A 20-task extension used to evaluate scalability as the number of merged tasks increases.
\textbf{(iii) Flan-T5-base on GLUE.}
Eight GLUE tasks~\cite{wang2018glue} fine-tuned with rank-16 LoRA on Flan-T5-base~\cite{chung2024scaling}. This evaluates our method on rank-deficient task-vector matrices in the NLP domain.
\textbf{(iv) Llama-3.2-3B.}
Following MergeBench~\cite{he2026mergebench}, five domain experts (math, code, instruction following, safety, multilingual) are merged into a single Llama-3.2-3B model~\cite{dubey2024llama}. Evaluation uses \texttt{lm\_eval} across GSM8K~\cite{cobbe2021training}, HumanEval/MBPP+~\cite{chen2021evaluating,liu2023is}, IFEval~\cite{zhou2023instruction}, TruthfulQA~\cite{lin2022truthfulqa}, MMLU~\cite{hendrycks2021measuring}, ARC~\cite{clark2018think}, and HellaSwag~\cite{zellers2019hellaswag}. This setting employs full-parameter deltas with 3B trainable parameters per expert.

\noindent\textbf{Methods.}
We compare our proposed \swudi{} together with its adaptive variant \aswudi{}, against several model merging methods: Weight Average~\cite{wortsman2022model}, Task Arithmetic~\cite{ilharcoediting}, TIES~\cite{yadav2023ties}, DARE-TA and DARE-TIES~\cite{yu2024language}, TSV-Merging~\cite{gargiulo2024task}, Iso-C~\cite{marczak2025no}, \wudi{} Merging~\cite{cheng2025whoever}, and our \optmerge{}~\cite{wei2026optmerge}.

\noindent\textbf{Hyperparameters.}
Following common practice, we search over the global scaling coefficient $s\in\{0.1,0.2,0.3,0.5,1.0\}$ for all merging methods. For \swudi{}, we additionally tune $r\in\{0.55,0.60,0.65,0.70,0.75,0.85\}$ and $t\in\{300,500,700,1000,1300,1800\}$. In contrast, \aswudi{} requires no continuous hyperparameters beyond the global $s$. Specifically, \aswudi{} applies the participation-square-root rule in Eq.~\eqref{eq:psqrt} on CLIP-ViT, Flan-T5, and the MLLMerging benchmark, while employing the Gavish--Donoho rule in Eq.~\eqref{eq:gd} on the Llama-3.2-3B MergeBench (a full-parameter LLM with spiked-noise spectra).

\begin{table*}[!t]
\centering
\caption{Capability merging results on InternVL2.5 (full fine-tuning) across multiple tasks. Best scores are bolded and second-best scores are underlined.}
\label{tab:intern}
\setlength{\tabcolsep}{2pt}
\renewcommand{\arraystretch}{1.05}
\resizebox{0.7\textwidth}{!}{%
\begin{tabular}{l|cc|cc|c|cc|ccc|c}
\toprule
\multirow{2}{*}{\textbf{Method}} & \multicolumn{2}{c|}{\textbf{VQA}} & \multicolumn{2}{c|}{\textbf{Geometry}} & \textbf{Chart} & \multicolumn{2}{c|}{\textbf{OCR}} & \multicolumn{3}{c|}{\textbf{Grounding}} & \multirow{2}{*}{\textbf{Avg.}}\\
\cmidrule{2-11}
 & VizWiz & GQA & MathVista & MATH-Vision & ChartQA & TextVQA & OCRVQA & RefCOCO & RefCOCO+ & RefCOCOg & \\
\midrule
InternVL2.5-Instruct & 29.15 & 54.62 & 45.40 & 18.09 & 69.48 & 72.51 & 41.08 & 71.69 & 65.41 & 67.40 & 53.48 \\
\midrule
Weight Average        & 29.96 & 54.89 & 42.30 & 17.76 & 71.64 & 74.54 & 41.86 & 52.62 & 45.29 & 52.39 & 48.33 \\
Task Arithmetic       & 30.67 & 56.34 & 40.70 & 17.43 & \underline{72.88} & 76.26 & 43.39 & 74.90 & 68.15 & 72.75 & 55.35 \\
TIES Merging          & 30.63 & 56.48 & 44.10 & 16.78 & 72.28 & \underline{76.29} & 44.01 & 76.01 & 68.45 & 73.65 & 55.87 \\
TA w/ DARE            & 30.61 & 56.48 & 40.40 & 15.79 & \textbf{73.08} & \textbf{76.30} & 43.03 & 74.94 & 68.07 & 73.02 & 55.17 \\
TIES w/ DARE          & 30.65 & 56.11 & 44.30 & \underline{18.09} & 72.72 & 76.19 & 43.33 & 75.10 & 68.48 & 73.55 & 55.85 \\
TSV Merging           & \underline{31.15} & 56.67 & 44.90 & 17.76 & 70.56 & 75.66 & 45.38 & 65.19 & 58.51 & 59.17 & 52.50 \\
Iso-C                 & 28.21 & 55.36 & 42.10 & \underline{18.09} & 70.56 & 69.34 & \textbf{46.51} & 72.72 & 66.56 & 68.50 & 53.80 \\
\rowcolor{gray!10}WUDI Merging & 31.02 & 56.96 & 44.80 & 15.31 & 69.19 & 75.95 & 46.12 & 76.06 & \underline{70.14} & \textbf{74.48} & 56.00 \\
\rowcolor{gray!10}OptMerge & 30.85 & \textbf{57.05} & \textbf{46.90} & 15.79 & 68.80 & 75.98 & \underline{46.35} & 76.09 & 69.82 & \underline{74.18} & 56.18 \\
\midrule
\rowcolor{gray!10}\swudi{} & 31.11 & \underline{57.04} & \underline{46.60} & \textbf{18.42} & 69.76 & 76.04 & 46.06 & \textbf{76.24} & \textbf{70.18} & 74.12 & \textbf{56.56} \\
\rowcolor{gray!10}\aswudi{}        & \textbf{31.25} & 56.85 & 46.10 & 16.45 & 70.44 & 76.00 & 45.90 & \underline{76.20} & 69.99 & 74.08 & \underline{56.33} \\
\midrule
Mixture Training      & 29.79 & 61.33 & 45.00 & 17.11 & 70.32 & 72.96 & 60.25 & 72.06 & 65.93 & 67.46 & 56.22 \\
\bottomrule
\end{tabular}}
\end{table*}

\begin{table*}[!t]
\centering
\caption{Capability merging results on Qwen2-VL (LoRA fine-tuning) across multiple tasks. Best scores are bolded and second-best scores are underlined.}
\label{tab:qwen}
\setlength{\tabcolsep}{2pt}
\renewcommand{\arraystretch}{1.05}
\resizebox{0.7\textwidth}{!}{%
\begin{tabular}{l|cc|cc|c|cc|ccc|c}
\toprule
\multirow{2}{*}{\textbf{Method}} & \multicolumn{2}{c|}{\textbf{VQA}} & \multicolumn{2}{c|}{\textbf{Geometry}} & \textbf{Chart} & \multicolumn{2}{c|}{\textbf{OCR}} & \multicolumn{3}{c|}{\textbf{Grounding}} & \multirow{2}{*}{\textbf{Avg.}}\\
\cmidrule{2-11}
 & VizWiz & GQA & MathVista & MATH-Vision & ChartQA & TextVQA & OCRVQA & RefCOCO & RefCOCO+ & RefCOCOg & \\
\midrule
Qwen2-VL-Base         & 5.52 & 5.39 & 54.00 & 21.05 & 0.36 & 20.22 & 1.07 & 45.32 & 37.55 & 31.26 & 22.17 \\
\midrule
Weight Average        & 41.47 & 57.33 & 57.90 & \underline{25.66} & 59.56 & 81.09 & 57.85 & \underline{80.72} & 65.37 & 77.68 & 60.46 \\
Task Arithmetic       & 40.52 & \underline{62.31} & \textbf{58.40} & 23.68 & \underline{79.67} & 81.09 & 59.50 & 75.96 & 61.33 & 75.85 & 61.83 \\
TIES Merging          & 41.38 & 59.08 & 52.60 & 19.41 & 67.24 & 81.42 & 58.53 & 80.63 & 65.36 & 77.65 & 60.33 \\
TA w/ DARE            & 40.64 & \textbf{62.38} & \underline{58.10} & 23.68 & \textbf{79.76} & 81.04 & 59.34 & 75.83 & 61.41 & 75.80 & 61.80 \\
TIES w/ DARE          & \textbf{41.63} & 59.96 & 54.50 & 23.03 & 70.68 & 81.53 & 59.63 & \textbf{80.73} & \underline{65.65} & \underline{77.77} & 61.51 \\
TSV Merging           & 41.43 & 57.31 & 54.30 & 23.68 & 59.44 & 81.25 & 57.81 & 80.71 & 65.34 & 77.76 & 59.90 \\
Iso-C                 & 12.31 & 13.44 & 49.70 & 20.07 & 2.80 & 30.05 & 6.12 & 53.68 & 38.96 & 41.90 & 26.90 \\
\rowcolor{gray!10}WUDI Merging & 37.19 & 56.45 & 54.70 & \underline{25.66} & 67.84 & 79.92 & \textbf{65.56} & 76.25 & 60.72 & 71.99 & 59.63 \\
\rowcolor{gray!10}OptMerge & \underline{41.54} & 61.21 & \textbf{58.40} & \textbf{25.99} & 74.24 & 81.48 & 60.03 & 80.45 & \textbf{65.96} & 76.92 & \underline{62.62} \\
\midrule
\rowcolor{gray!10}\swudi{}  & 40.31 & 60.21 & 57.60 & 23.03 & 70.96 & \underline{81.60} & 63.96 & 80.12 & 65.45 & 76.07 & 61.93 \\
\rowcolor{gray!10}\aswudi{}     & 40.63 & 60.80 & 57.00 & 23.36 & 75.32 & \textbf{81.63} & \underline{64.23} & 80.22 & 65.62 & \textbf{78.42} & \textbf{62.72} \\
\midrule
Qwen2-VL-Instruct    & 44.09 & 62.18 & 57.20 & 17.43 & 70.04 & 78.38 & 65.42 & 82.89 & 77.87 & 75.63 & 63.11 \\
\bottomrule
\end{tabular}}
\end{table*}

\subsection{Multimodal Model Merging}
\label{sec:exp_modality}

We first evaluate the proposed solvers on our multimodal merging benchmark, covering full-parameter InternVL2.5, LoRA-fine-tuned Qwen2-VL, integrated multimodal QA, and modality merging across vision, audio, and video experts. Tables~\ref{tab:intern} and~\ref{tab:qwen} show that the closed-form spectral solvers consistently match or exceed the strongest iterative \wudi{}/\optmerge{} baselines on capability merging. On InternVL2.5-1B, \swudi{} obtains the best average accuracy ($56.56$), while \aswudi{} remains close behind ($56.33$) without method-specific rank tuning. On Qwen2-VL-7B, \aswudi{} reaches the best average ($62.72$), slightly above iterative \optmerge{} ($62.62$), which demonstrates that adaptive spectral truncation is especially useful when LoRA deltas are intrinsically low-rank.

The two backbones expose complementary benefits. In the full-parameter InternVL2.5 setting, spectral filtering preserves shared multimodal capabilities while improving the average over both optimization-based and spectrum-based baselines. In the Qwen2-VL LoRA setting, the adaptive rank rule prevents the norm-inflation failure mode of iterative data-free objectives and retains the compact directions that carry most of the LoRA signal. The multimodal results therefore support the central claim from two regimes: explicit spectral regularization is not only faster than iterative optimization, but also more stable when the task-vector geometry is low-rank or noisy.

The capability-level wins extend to comparisons against the corresponding mixture-trained model, the natural data-rich baseline. \swudi{} on InternVL2.5-1B slightly exceeds mixture training on average ($56.56$ vs.\ $56.22$), and \aswudi{} on Qwen2-VL-7B is within $0.4$ points of mixture training ($62.72$ vs.\ $63.11$). Reaching this accuracy regime without any joint training data, using only the experts' parameter deltas and a single eigendecomposition per layer, is the practical case that capability merging makes for production multimodal systems.

Next, we examine modality merging, where vision-, audio-, and video-language Vicuna-7B experts are integrated into an Omni-language model~\cite{chen2024model} (in Table~\ref{tab:omni}). \aswudi{} outperforms both \optmerge{} and all offline merging baselines. It even surpasses online composition methods that require modality-specific, inference-time composition. These results demonstrate that the spectral regularization principle generalizes effectively from capability merging to cross-modal composition, a setting where preserving complementary modality information is more critical than optimizing for any single expert.

\begin{table*}[t]
  \renewcommand{\arraystretch}{1}
  \caption{\textbf{Modality merging results on zero-shot image-audio-video question answering tasks} by merging vision-language, audio-language, and video-language models. The ``Individual Modalities'' columns show baseline performance for each single-modality model.}
  \label{tab:omni}
  \vspace{-0.4em}
  \centering
  \setlength{\tabcolsep}{2pt}
  \resizebox{0.8\textwidth}{!}{
\begin{tabular}{l|ccc|cccccccc|cc}
  \toprule
  & \multicolumn{3}{c|}{\textbf{Individual Modalities}} & \multicolumn{8}{c|}{\textbf{Merging Methods}} & \multicolumn{2}{c}{\textbf{Online Composing}} \\
  \cmidrule{2-14}
  \textbf{Datasets} & Vision & Audio & Video & \begin{tabular}[c]{@{}c@{}}Weight\\Average\end{tabular} & \begin{tabular}[c]{@{}c@{}}Task\\Arithmetic\end{tabular} & \begin{tabular}[c]{@{}c@{}}TIES\\Merging\end{tabular} & \begin{tabular}[c]{@{}c@{}}TSV\\Merging\end{tabular} & \begin{tabular}[c]{@{}c@{}}Iso-C\end{tabular} & \begin{tabular}[c]{@{}c@{}}WUDI\\Merging\end{tabular} & \begin{tabular}[c]{@{}c@{}}OptMerge\end{tabular} & \begin{tabular}[c]{@{}c@{}}SWUDI-A\end{tabular} & \begin{tabular}[c]{@{}c@{}}NaiveMC\end{tabular} & \begin{tabular}[c]{@{}c@{}}DAMC\end{tabular} \\
  \midrule
  \textbf{MUSIC-AVQA} & 50.77 & 27.93 & 49.02 & 47.75 & 52.14 & 50.35 & \underline{53.78} & 52.77 & 52.43 & 53.17 & \textbf{53.91} & 53.50 & 52.80 \\
  \textbf{AVQA} & 75.55 & 47.57 & 79.20 & 69.39 & 78.62 & 75.84 & \underline{80.90} & 77.51 & 76.86 & 80.82 & \textbf{81.26} & 80.26 & 80.78 \\
  \hline
  \textbf{Avg.} & 63.16 & 37.75 & 64.11 & 58.57 & 65.38 & 63.10 & \underline{67.34} & 65.14 & 64.65 & 67.00 & \textbf{67.59} & 66.88 & 66.79 \\
  \bottomrule
\end{tabular}
}
\end{table*}

\begin{table}[t]
\centering
\caption{Evaluation on general multimodal QA benchmarks.}
\label{tab:integrated}
\setlength{\tabcolsep}{4pt}
\renewcommand{\arraystretch}{1.05}
\resizebox{\columnwidth}{!}{%
\begin{tabular}{l|ccccc|c}
\toprule
\textbf{Method} & MMMU & DocVQA & SciQA & AI2D & InfoVQA & Avg. \\
\midrule
Individual VQA       & 26.00 & 62.93 & 50.83 & 44.59 & 39.07 & 44.68 \\
Individual Chart     & 30.33 & 57.13 & 40.01 & 29.86 & 26.02 & 36.67 \\
Individual Geometry  & 33.67 & 64.29 & 73.25 & 62.27 & 29.79 & 52.65 \\
Individual Grounding & 34.22 & 65.64 & 76.54 & 63.24 & 33.82 & 54.69 \\
Individual OCR       & 38.00 & 77.67 & 63.66 & 54.39 & 41.97 & 55.14 \\
\midrule
\rowcolor{gray!10}OptMerge & \textbf{39.33} & \textbf{84.18} & \underline{91.89} & \underline{79.44} & \textbf{56.84} & \underline{70.34} \\
\rowcolor{gray!10}\aswudi{}   & \textbf{39.33} & \underline{84.14} & \textbf{93.41} & \textbf{79.47} & \underline{56.57} & \textbf{70.58} \\
\bottomrule
\end{tabular}}
\end{table}

Finally, we assess whether the merged model preserves composite abilities rather than only isolated capabilities. Table~\ref{tab:integrated} evaluates the InternVL2.5-1B merge on integrated multimodal QA benchmarks. \aswudi{} attains the highest average ($70.58$), improving over the \optmerge{} result ($70.34$) and giving the largest gain on ScienceQA. This pattern is consistent with the noise-amplification analysis in Sec.~\ref{sec:noise}: integrated tasks are sensitive to spurious low-eigenvalue updates, so explicitly filtering those directions improves robustness beyond the per-capability averages.

Across capability, integrated-QA, and modality-merging evaluations, the closed-form solvers match or exceed iterative \wudi{}/\optmerge{} in nearly all average multimodal metrics, while reducing the merging cost by over an order of magnitude. The results demonstrate that merged multimodal experts can surpass individual or mixture-trained models when their complementary skills are optimally combined, further indicating that these benefits arise from explicit spectral regularization rather than a costly iterative optimizer.

\subsection{Vision Model Merging}
\label{sec:exp_general}
\label{sec:exp_clip}
\label{sec:exp_tall20}

\begin{table}[!t]
\centering
\caption{Cross-backbone average accuracy ($\%$) on CLIP-ViT TA8 (B/32, B/16, L/14) and the 20-task extension TALL20 (B/32). Detailed per-task results for all three TA8 backbones are provided in Appendix~C. The best and second-best results in each column are \textbf{bolded} and \underline{underlined}, respectively. The last two rows further apply AdaMerging~\cite{yang2024adamerging} to the closed-form merged delta using unlabeled test data.}
\label{tab:clip-summary}
\setlength{\tabcolsep}{6pt}
\renewcommand{\arraystretch}{1.05}
\begin{tabular}{l|ccc|c}
\toprule
\multirow{2}{*}{\textbf{Method}} & \multicolumn{3}{c|}{\textbf{TA8}} & \textbf{TALL20} \\
\cmidrule(lr){2-4}\cmidrule(lr){5-5}
 & \textbf{B/32} & \textbf{B/16} & \textbf{L/14} & \textbf{B/32} \\
\midrule
Weight Average    & 66.32 & 72.33 & 79.87 & 61.10 \\
Task Arithmetic   & 67.55 & 77.14 & 80.47 & 60.62 \\
TIES Merging      & 71.90 & 77.60 & 83.83 & 62.76 \\
TA w/ DARE        & 67.46 & 77.15 & 80.49 & 60.55 \\
TIES w/ DARE      & 60.96 & 74.30 & 74.33 & 62.22 \\
TSV Merging       & 83.07 & 87.10 & 90.57 & 73.22 \\
Iso-C             & 80.39 & 85.07 & 90.65 & 70.35 \\
$\tau^{\rm cf}=DC^\dagger$ & 82.33 & 88.04 & 91.69 & 72.54 \\
\rowcolor{gray!10}WUDI Merging      & 84.63 & 89.17 & 92.16 & 61.06 \\
\rowcolor{gray!10}OptMerge          & 84.53 & 89.49 & 92.38 & 61.71 \\
\midrule
\rowcolor{gray!10}\swudi{}    & 85.55 & 89.57 & 92.51 & 75.60 \\
\rowcolor{gray!10}\aswudi{}   & 85.53 & 89.49 & 92.52 & 75.62 \\
\midrule
\rowcolor{gray!10}\swudi{}\,$\to$\,AdaMerging  & \textbf{86.08} & \underline{89.78} & \underline{92.72} & \textbf{78.12} \\
\rowcolor{gray!10}\aswudi{}\,$\to$\,AdaMerging & \underline{86.05} & \textbf{89.81} & \textbf{92.75} & \underline{78.03} \\
\bottomrule
\end{tabular}
\end{table}

We next evaluate whether the spectral solvers generalize from multimodal models to vision models. Table~\ref{tab:clip-summary} summarizes the results for CLIP-ViT on the TA8 benchmark across three backbones, as well as the larger TALL20 setting. On TA8, \swudi{} and \aswudi{} achieve the best or second-best data-free averages across all backbones, with \aswudi{} matching the tuned \swudi{} while eliminating the need for method-specific rank tuning.

To understand this performance, we include the unregularized closed-form pseudoinverse $\tau^{\rm cf}=DC^\dagger$ as a direct ablation. Its performance gap to \aswudi{} shrinks monotonically as the backbone size increases ($-3.20$ pt on B/32, $-1.45$ pt on B/16, $-0.83$ pt on L/14). This aligns with the noise-amplification analysis in Sec.~\ref{sec:noise}: larger backbones yield better-conditioned task-vector spectra, meaning division by small $\lambda_k$ causes less degradation. However, on TALL20, this gap widens again to $-3.08$ pt, reflecting the re-emergence of long-tail noise as the number of tasks increases.

The TALL20 setting further demonstrates the robustness of our solvers in a more densely populated task space. Here, \aswudi{} outperforms TSV-Merging and significantly exceeds the iterative \wudi{} and \optmerge{}. Furthermore, the performance drop from TA8 to TALL20 is substantially smaller for the spectral solvers than for iterative methods, confirming that suppressing noise-amplifying tail directions becomes increasingly critical as more task vectors interact. Finally, when unlabeled test data are available, applying AdaMerging~\cite{yang2024adamerging} on top of our spectral anchors yields further improvements: $+0.2$ to $+0.5$ pt on TA8, and a substantial $+2.4$ to $+2.5$ pt on TALL20. This demonstrates that closed-form spectral merging provides a robust data-free initialization that remains highly complementary to test-time adaptation.

\subsection{Language Model Merging}
\label{sec:exp_flant5}
\label{sec:exp_llama}

We evaluate language-model merging in two contrasting regimes: LoRA fine-tuning on Flan-T5 GLUE~\cite{chung2024scaling} and full-parameter fine-tuning on Llama-3.2-3B MergeBench~\cite{he2026mergebench}. Table~\ref{tab:flant5} shows that the Flan-T5 LoRA setting tightens the gap between iterative and closed-form data-free objectives: \wudi{} and \optmerge{} reach $79.36\%$ and $81.03\%$, respectively, sitting close to TSV Merging ($81.83\%$) but still trailing the proposed solvers, with \aswudi{} achieving the best average ($82.98\%$) and \swudi{} the second-best ($82.70\%$). The benefit comes from matching the solver to the low-rank structure of LoRA deltas: adaptive truncation preserves the informative subspace and avoids the interference that the iterative quadratic loss leaves under rank-deficient $C$, where many small eigendirections couple weakly to the proxy gradient and slow convergence.

\begin{table*}[!t]
\centering
\begin{minipage}[t]{0.48\textwidth}
\centering
\caption{Multi-task performance when merging Flan-T5-base (LoRA fine-tuned) models on all eight tasks. The metric is accuracy except for STSB (Spearman $\rho$).}
\label{tab:flant5}
\setlength{\tabcolsep}{4pt}
\renewcommand{\arraystretch}{1.05}
\resizebox{\linewidth}{!}{%
\begin{tabular}{l|cccccccc|c}
\toprule
\textbf{Method} & \textbf{CoLA} & \textbf{MNLI} & \textbf{MRPC} & \textbf{QNLI} & \textbf{QQP} & \textbf{RTE} & \textbf{SST2} & \textbf{STSB} & \textbf{Avg.} \\
\midrule
Weight Average                   & \textbf{69.70} & 59.66 & 78.92 & \textbf{90.08} & \textbf{83.79} & \underline{80.51} & 91.17 & 72.00 & 78.23 \\
Task Arithmetic                  & 68.84 & 55.18 & 78.68 & 89.79 & 83.67 & 79.06 & 91.51 & 72.38 & 77.39 \\
TIES Merging                     & 68.17 & 48.96 & 78.92 & 89.31 & 83.43 & 79.78 & 91.51 & 74.22 & 76.79 \\
TA w/ DARE                       & 68.94 & 55.10 & 78.92 & 89.73 & \underline{83.71} & 79.06 & 91.51 & 72.58 & 77.44 \\
TIES w/ DARE                     & 31.16 & 0.43  & 79.90 & 84.44 & 82.23 & 76.90 & 89.56 & 75.94 & 65.07 \\
TSV Merging                      & \underline{69.32} & 77.09 & \underline{80.39} & \underline{90.04} & 83.62 & 79.06 & \underline{92.55} & 82.55 & 81.83 \\
Iso-C                            & 69.13 & 57.35 & 76.72 & 88.63 & 82.66 & 80.14 & 91.28 & 63.32 & 76.15 \\
\rowcolor{gray!10}WUDI Merging   & 68.65 & 72.18 & 78.43 & 84.64 & 82.70 & 71.48 & \textbf{93.00} & 83.82 & 79.36 \\
\rowcolor{gray!10}OptMerge       & 68.36 & 70.29 & \underline{80.39} & 89.58 & 83.28 & 79.06 & \textbf{93.00} & 84.27 & 81.03 \\
\midrule
\rowcolor{gray!10}\swudi{}     & 69.22 & \textbf{82.00} & 77.94 & 89.80 & 83.46 & \textbf{80.87} & \textbf{93.00} & \textbf{85.33} & \underline{82.70} \\
\rowcolor{gray!10}\aswudi{}    & 68.94 & \underline{80.00} & \textbf{83.33} & 89.77 & 83.26 & \textbf{80.87} & \underline{92.55} & \underline{85.10} & \textbf{82.98} \\
\bottomrule
\end{tabular}}
\end{minipage}\hfill
\begin{minipage}[t]{0.5\textwidth}
\centering
\caption{Llama-3.2-3B MergeBench with five experts. We report per-task and average accuracy ($\%$) on eight tasks. Multilingual subtasks follow the fr-only legacy protocol. Best results are shown in bold, and the second-best results are underlined.}
\label{tab:llama}
\setlength{\tabcolsep}{4pt}
\renewcommand{\arraystretch}{1.05}
\resizebox{\linewidth}{!}{%
\begin{tabular}{l|cccccccc|c}
\toprule
\textbf{Method} & \textbf{GSM8K} & \textbf{HE+} & \textbf{MBPP+} & \textbf{IFEval} & \textbf{TQA} & \textbf{MMLU$_\text{fr}$} & \textbf{ARC$_\text{fr}$} & \textbf{HSwag$_\text{fr}$} & \textbf{Avg.} \\
\midrule
Weight Average                  & 42.76 & 31.71 & \underline{59.52} & 9.24  & 46.02 & 46.35 & 35.76 & 44.53 & 39.49 \\
Task Arithmetic                 & 44.73 & 33.54 & \textbf{59.79} & 14.42 & 47.38 & 46.34 & 36.27 & 44.88 & 40.92 \\
TIES Merging                    & 42.61 & 30.49 & 57.14 & 7.58  & 44.91 & \textbf{48.25} & 37.13 & 44.75 & 39.11 \\
TA w/ DARE                      & 46.70 & 33.54 & 59.26 & 18.67 & \textbf{47.80} & 48.12 & 36.44 & \textbf{45.56} & 42.01 \\
TIES w/ DARE                    & 52.99 & 35.37 & 57.41 & \textbf{25.51} & \underline{47.42} & 47.55 & 37.04 & \underline{45.19} & \underline{43.56} \\
TSV Merging                     & 55.72 & 36.59 & 56.88 & 20.15 & 46.28 & \underline{48.17} & 37.13 & 45.01 & 43.24 \\
Iso-C                           & 48.22 & 35.37 & 55.56 & 8.13  & 44.83 & 47.49 & 37.04 & 44.29 & 40.12 \\
\rowcolor{gray!10}WUDI Merging  & 52.54 & \underline{37.20} & 57.14 & 17.93 & 46.56 & 44.86 & 37.21 & 44.17 & 42.20 \\
\rowcolor{gray!10}OptMerge      & 53.53 & 34.76 & 58.99 & \textbf{25.51} & 45.23 & 46.99 & \underline{37.30} & 44.66 & 43.37 \\
\midrule
\rowcolor{gray!10}\swudi{}       & \textbf{58.45} & \textbf{37.80} & 57.67 & \underline{24.58} & 45.02 & 47.01 & \textbf{37.55} & 44.69 & \textbf{44.10} \\
\rowcolor{gray!10}\aswudi{}    & \underline{58.15} & 36.59 & 56.35 & 22.18 & 45.42 & 46.85 & \underline{37.30} & 44.83 & 43.46 \\
\bottomrule
\end{tabular}}
\end{minipage}
\end{table*}

The Llama-3.2-3B benchmark complements this LoRA case with full-parameter experts spanning math, code, instruction following, safety, and multilingual tasks. As reported in Table~\ref{tab:llama}, \swudi{} obtains the best average among the merging methods, and \aswudi{} (with the Gavish--Donoho rank rule appropriate for the spiked-noise spectra of full-parameter LLM deltas) remains the strongest tuning-free alternative. The contrast with Flan-T5 illustrates why a single rank rule is not sufficient: LoRA deltas favor a heavy-tailed low-rank prior, whereas full-parameter LLM deltas are better described by a spiked-noise spectrum. In both regimes, the same benefit emerges: spectral filtering converts an unstable proxy inversion into a controlled merge that improves accuracy while avoiding the cost of iterative optimization.

Evaluating $5$ to $20$ experts across multimodal, vision, and language foundation models, we find that \swudi{} and \aswudi{} define the high-accuracy end of the closed-form Pareto frontier. Consequently, both solvers transform the implicit regularization, previously obtained through hundreds of optimizer steps, into an explicit closed-form computation. This yields the accuracy benefits of spectral filtering with substantially lower wall-clock time and memory costs.

%% file: sections/06_analysis.tex
\section{Efficiency and Diagnostic Analysis}
\label{sec:analysis}

This section distills the empirical analysis into two high-level messages. First, replacing iterative optimization with closed-form spectral filtering substantially reduces wall-clock time and memory. Second, spectral diagnostics explain why adaptive truncation is needed across architectures.

\subsection{Efficiency and Accuracy--Cost Trade-off}
\label{sec:efficiency}
\label{sec:pareto}

Table~\ref{tab:efficiency} reports the raw wall-clock time and peak GPU memory of the merging step on representative settings. The closed-form solvers consistently reduce both quantities because they eliminate optimizer state and per-iteration workspaces and replace hundreds of matrix multiplications with one symmetric eigendecomposition per layer. The magnitude of the memory saving depends on the regime: it is modest when resident model parameters dominate the footprint, as in full-parameter Llama merging, but substantial when optimizer states dominate, as in Qwen2-VL LoRA merging.

\begin{table}[!t]
\centering
\caption{Wall-clock time, peak GPU memory, and speedup for the merging step. Speedup is measured against the iterative data-free baseline in each setting (\wudi{} for CLIP/Llama and \optmerge{} for MLLMs). Mixture-training rows report the cost of jointly fine-tuning a single multi-task model.}
\label{tab:efficiency}
\setlength{\tabcolsep}{3pt}
\renewcommand{\arraystretch}{1.08}
\resizebox{\columnwidth}{!}{%
\begin{tabular}{ll|rrc}
\toprule
\textbf{Setting} & \textbf{Method} & \textbf{Time} & \textbf{Peak Mem.} & \textbf{Speedup} \\
\midrule
\multirow{2}{*}{CLIP-B/32}
 & WUDI Merging & $86.3$\,s & $4.82$\,GB & $1.0\times$ \\
 & \swudi{} / \aswudi{} & $2.8$\,s & ${3.64}$\,GB & $30.8\times$ \\
\midrule
\multirow{2}{*}{Llama-3.2-3B}
 & WUDI Merging & $5126.1$\,s & $42.03$\,GB & $1.0\times$ \\
 & \swudi{} / \aswudi{} & $70.7$\,s & ${39.78}$\,GB & $72.5\times$ \\
\midrule
\multirow{3}{*}{Qwen2-VL-7B}
 & OptMerge & $13608.9$\,s & $21.97$\,GB & $1.0\times$ \\
 & \swudi{} / \aswudi{} & $487.6$\,s & $10.81$\,GB & $27.9\times$ \\
 & Mixture training & $24.56$\,h & $256$\,GB & --- \\
\midrule
\multirow{3}{*}{InternVL2.5-1B}
 & OptMerge & $552.3$\,s & $7.03$\,GB & $1.0\times$ \\
 & \swudi{} / \aswudi{} & $8.0$\,s & $4.16$\,GB & $69.0\times$ \\
 & Mixture training & $25.38$\,h & $240$\,GB & --- \\
\bottomrule
\end{tabular}}\\[2pt]
\end{table}

These efficiency gains are most compelling when viewed alongside accuracy. Combining Table~\ref{tab:efficiency} with the benchmark results demonstrates that the proposed solvers shift the merging process toward the upper left of the accuracy-cost plane. They achieve comparable or superior accuracy under a substantially reduced computational budget. Fig.~\ref{fig:pareto} illustrates this trade-off across four representative settings. Here, \aswudi{} acts as the ideal low-cost, tuning-free solution, while \swudi{} provides a high-accuracy alternative if hyperparameter tuning is permitted. Both approaches successfully replace the iterative optimizer with explicit spectral regularization.

The mixture-training reference rows provide further context for these computational savings. Joint multi-task fine-tuning of Qwen2-VL-7B and InternVL2.5-1B requires approximately $24$ to $25$ hours and $240$ to $256$\,GB of aggregate GPU memory. This represents roughly $180\times$ the wall-clock time and $20\times$ the peak memory required by \aswudi{} on the same backbones, even under the strict assumption that all task-specific training data are centrally co-located. Therefore, closed-form spectral merging not only reduces the cost of iterative \optmerge{} by an order of magnitude but also offers a data-free alternative to the natural baseline (joint training) at a mere fraction of its computational and data-governance costs.

\subsection{Spectral Diagnostics}
\label{sec:diag}

The spectral diagnostics in Fig.~\ref{fig:spectral-rank} explain why explicit spectral regularization is needed. Small-eigenvalue directions are most vulnerable to pseudoinverse instability: dividing by a small $\lambda_k$ amplifies proxy noise, so the spectral tail should not be inverted without regularization. This supports the hard truncation used by \swudi{} and \aswudi{}.

The same diagnostics also show why the cutoff should be adaptive. Spectra vary substantially across layers and architectures: vision and LoRA merges often exhibit a head-and-tail structure, where a participation-style rule~\cite{roy2007effective} preserves the useful subspace, whereas full-parameter LLM merges more often resemble a bulk-plus-spike regime, favoring a more conservative Gavish--Donoho cutoff~\cite{marchenko1967distribution,gavish2014optimal}. The retained ranks reflect the underlying fine-tuning geometry: for example, \aswudi{}-\texttt{psqrt} keeps a mean rank ratio of $0.149$ on Qwen2-VL-7B LoRA, versus $0.615$ on fully fine-tuned InternVL2.5-1B. These diagnostics are explanatory rather than tuning criteria; they show why layer-wise rank adaptation is preferable to a single global cutoff. Full per-architecture statistics are reported in Table~XVII (Appendix~C).

\begin{figure}[!t]
\centering
\includegraphics[width=0.8\columnwidth]{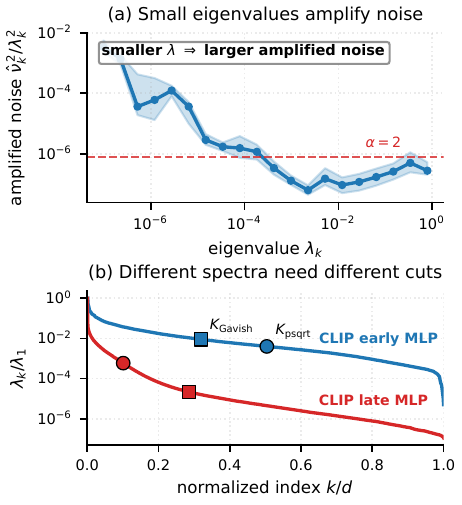}
\caption{\textbf{Noise amplification motivates adaptive rank truncation.} \textbf{(a)} Small-eigenvalue directions are associated with larger amplified noise $\hat\nu_k^{\,2}/\lambda_k^2$ under the closed-form pseudoinverse (binned median + $25$--$75\%$ band, pooled over CLIP-ViT-B/32 TA8 layers). \textbf{(b)} Different layer spectra lead to different \aswudi{} rank-rule cuts: the $\rm psqrt$ rule retains more directions in heavy-tailed spectra, whereas the Gavish--Donoho rule is more conservative for concentrated spectra. This motivates layer-wise adaptive rank selection.}
\label{fig:spectral-rank}
\end{figure}

Together, these results show that \wudi{}/\optmerge{} works primarily through implicit spectral regularization. Our closed-form solvers make this regularization explicit, preserving accuracy across diverse settings while avoiding hundreds of optimizer steps.

%% file: sections/07_conclusion.tex
\section{Conclusion}
\label{sec:conclusion}

We revisit data-free model merging as a noisy linear inverse problem. While \wudi{} and \optmerge{} optimize a quadratic objective over hundreds of steps, this objective has a closed-form pseudoinverse whose small-eigenvalue directions amplify proxy noise. Iterative descent succeeds largely because it implicitly filters these unstable directions. This insight motivates a spectral-filtering estimator and closed-form solvers: \swudi{}, which combines an exponential filter with hard rank truncation, and \aswudi{}, which uses layer-wise spectral rules to eliminate rank hyperparameters. Both require only one eigendecomposition per layer, with no training data or optimizer state. We also introduce a multimodal benchmark for capability and modality merging. Across vision, language, LoRA, full-parameter LLM, and MLLM settings, our solvers match or exceed iterative baselines while running $28\times$ to $72\times$ faster and using up to $50\%$ less peak memory. These findings suggest that effective merging need not rely on long optimization trajectories: once the inverse problem is formulated, the central design choice is the spectral filter to impose.

%% file: appendices/A_notation.tex
\section{Notations}
\label{app:notation}

This appendix lists the symbols that are used repeatedly across the theory, method, and diagnostic sections. Unless explicitly written, the layer index $\ell$ is suppressed for per-layer matrices; indices $i$ and $k$ denote experts and eigendirections, respectively. Hats indicate constructed estimators, and the superscript ${\rm cf}$ denotes a closed-form solution.

\subsection{Models and Task Vectors}
This group fixes the model-level and per-layer objects used throughout the paper. Table~\ref{tab:notation-models} distinguishes full-model parameters from 2-D layer blocks and records the main task-vector quantities used by the closed-form estimator.

\begin{table}[!t]
\centering
\caption{Notation: models and task vectors.}
\label{tab:notation-models}
\setlength{\tabcolsep}{6pt}
\renewcommand{\arraystretch}{1.15}
\begin{tabularx}{\textwidth}{lX}
\toprule
\textbf{Symbol} & \textbf{Meaning} \\
\midrule
$\Theta_0,\Theta_i,\Theta_m$ & base, fine-tuned expert, and merged model parameters \\
$W_0^{(\ell)},W_i^{(\ell)}\in\mathbb{R}^{d_o\times d_i}$ & $\ell$-th 2-D weight block of the base/expert model \\
$\tau_i$ or $\tau_i^{(\ell)}$ & per-layer task vector $W_i^{(\ell)}-W_0^{(\ell)}$ \\
$\bm{\tau}_m$ and $\tau$ & full-model merged delta and its per-layer variable, $\tau:=\bm{\tau}_m^{(\ell)}$ \\
$\tau_{\rm init}$ & optional initialization in the spectral filtering estimator; default $\sum_i\tau_i$, or $0$ for the direct filtered inverse \\
$\tau^{\rm cf}=DC^\dagger$ & minimum-norm closed-form solution of the WUDI normal equation \\
$\tau^\circ$ & unobserved ideal merged delta used only in the inverse-problem analysis \\
$\widehat{\tau}_h$ & spectral filtering estimator in Eq.~\eqref{eq:unified} \\
\bottomrule
\end{tabularx}
\end{table}

\subsection{Spectral and Filter Quantities}
The spectral notation in Table~\ref{tab:notation-spectral} is used to express the WUDI objective as a normal equation. The matrix $C$ defines the eigendirections to be inverted, $D$ is the right-hand side, and $h_k$ controls how strongly each direction is retained.

\begin{table}[!t]
\centering
\caption{Notation: spectral and filter quantities.}
\label{tab:notation-spectral}
\setlength{\tabcolsep}{6pt}
\renewcommand{\arraystretch}{1.15}
\begin{tabularx}{\textwidth}{lX}
\toprule
\textbf{Symbol} & \textbf{Meaning} \\
\midrule
$A_i=\tau_i^\top\tau_i/\|\tau_i\|_F^2$ & single-task input-side covariance proxy \\
$C=\sum_i A_i$ & aggregated input-side covariance proxy \\
$D=\sum_i\tau_i A_i$ & right-hand side of the normal equation $\tau C=D$ \\
$M=[\tau_1/\|\tau_1\|_F;\ldots;\tau_N/\|\tau_N\|_F]$ & stacked normalized task vectors, satisfying $M^\top M=C$ \\
$C=Q\Lambda Q^\top$ & eigendecomposition of $C$ \\
$\lambda_k,q_k$ & $k$-th eigenvalue and eigenvector of $C$ \\
$\sigma_k=\sqrt{\lambda_k}$ & $k$-th singular value of $M$ \\
$C^\dagger$ & Moore--Penrose pseudoinverse of $C$ \\
$h_k$ & spectral filter coefficient applied to direction $q_k$ \\
$s_t(\lambda_k)=1-e^{-t\lambda_k}$ & soft exponential filter (gradient-flow stopping time $t$) \\
$m_k=\mathbf{1}[k\le K]$ & hard top-$K$ truncation mask \\
$h_k=m_k\,s_t(\lambda_k)$ & \swudi{} two-factor spectral filter, see Eq.~\eqref{eq:swudi} \\
$g_t(\lambda_k)=s_t(\lambda_k)/\lambda_k$ & effective inverse gain on direction $q_k$ \\
$C_h^\dagger=Q\,\mathrm{diag}(h_k/\lambda_k)\,Q^\top$ & filtered pseudoinverse used in Eq.~\eqref{eq:unified} \\
\bottomrule
\end{tabularx}
\end{table}

\subsection{Rank Rules and Hyperparameters}
Table~\ref{tab:notation-hp} collects the quantities that control truncation and final scaling. We keep only the symbols needed to describe \swudi{} and \aswudi{}. Other quantities are defined where they are used.

\begin{table}[!t]
\centering
\caption{Notation: rank rules and hyperparameters.}
\label{tab:notation-hp}
\setlength{\tabcolsep}{6pt}
\renewcommand{\arraystretch}{1.15}
\begin{tabularx}{\textwidth}{lX}
\toprule
\textbf{Symbol} & \textbf{Meaning} \\
\midrule
$K$ & retained-rank cutoff \\
$K_\ell$ & per-layer retained rank in \aswudi{} \\
$r\in(0,1]$ & \swudi{} global rank ratio, $K=\lceil r d_i\rceil$ \\
$t$ & \swudi{} exponential time parameter \\
$s$ & global scaling coefficient applied to the merged delta \\
$K_\ell^{\rm psqrt}$ & participation-square-root rank rule, $\left\lceil(\sum_k\sqrt{\lambda_k})^2/\sum_k\lambda_k\right\rceil$ \\
$K_\ell^{\rm Gavish}$ & Gavish--Donoho hard-threshold rank rule for spiked-noise spectra \\
\bottomrule
\end{tabularx}
\end{table}

\subsection{Diagnostics and Proxies}
The final notation block, summarized in Table~\ref{tab:notation-diag}, supports the diagnostic figures and the noise-amplification analysis. It separates the computable WUDI proxy from the unobserved signal/noise quantities used to explain why small-eigenvalue directions should be regularized.

\begin{table}[!t]
\centering
\caption{Notation: diagnostics, proxies, and per-direction noise model.}
\label{tab:notation-diag}
\setlength{\tabcolsep}{6pt}
\renewcommand{\arraystretch}{1.15}
\begin{tabularx}{\textwidth}{lX}
\toprule
\textbf{Symbol} & \textbf{Meaning} \\
\midrule
$\mathcal{P}(\tau)$ & WUDI proxy, $\sum_i\|(\tau-\tau_i)\tau_i^\top\|_F^2/\|\tau_i\|_F^2$ \\
$\hat I(\tau)$ & calibration-based estimate of real layer-wise interference \\
$E=D-\tau^\circ C$ & proxy-noise matrix in the inverse-problem view \\
$\xi_k=Eq_k$ & proxy noise projected onto eigendirection $q_k$ \\
\bottomrule
\end{tabularx}
\end{table}

%% file: appendices/B_derivations.tex
\section{Theoretical Proofs}
\label{app:proofs}

This appendix expands the derivations underlying Sec.~\ref{sec:theory} and Sec.~\ref{sec:methods}. We first derive the WUDI normal equation and justify the task-vector proxy for the input subspace; we then analyze the resulting inverse problem, spectral filters, adaptive rank rules, and parameter-drift bound. Throughout, we work per linear layer with task vectors $\tau_i\in\mathbb{R}^{d_o\times d_i}$, and we use the matrix calculus identity $\partial_X\,\mathrm{tr}(XAX^\top B)=BXA+B^\top X A^\top$.

\subsection{From the \wudi{} Loss to the Normal Equation}
\label{app:normal}
Recall that
\[
\mathcal{L}(\tau)=\sum_{i=1}^N\frac{1}{\|\tau_i\|_F^2}\bigl\|(\tau-\tau_i)\tau_i^\top\bigr\|_F^2.
\]
Expanding term $i$:
\[
\bigl\|(\tau-\tau_i)\tau_i^\top\bigr\|_F^2 = \mathrm{tr}\bigl((\tau-\tau_i)\,\tau_i^\top \tau_i\,(\tau-\tau_i)^\top\bigr).
\]
Let $A_i:=\tau_i^\top \tau_i/\|\tau_i\|_F^2$. Then
\[
\mathcal{L}(\tau)=\sum_i \mathrm{tr}\bigl((\tau-\tau_i)A_i(\tau-\tau_i)^\top\bigr).
\]
Expanding the quadratic and summing,
\[
\mathcal{L}(\tau)=\mathrm{tr}\bigl(\tau\,C\,\tau^\top\bigr)-2\,\mathrm{tr}\bigl(\tau\,D^\top\bigr)+\mathrm{const},
\]
where $C=\sum_i A_i$, $D=\sum_i \tau_i A_i$. Differentiating, $\nabla_\tau\mathcal{L}=2(\tau C-D)$. Stationary points satisfy $\tau C=D$. We first verify solvability: for any $z\in\mathrm{Null}(C)$, $z^\top C z=\sum_i\|\tau_i z\|_2^2/\|\tau_i\|_F^2=0$ forces $\tau_i z=0$ for all $i$, hence $A_i z=0$ and $D z=0$. Therefore $\mathrm{Null}(C)\subseteq\mathrm{Null}(D)$, equivalently each row of $D$ lies in $\mathrm{Range}(C)$. The full set of stationary points is parameterized by $\tau=DC^\dagger+Z(I-CC^\dagger)$ for an arbitrary free matrix $Z\in\mathbb{R}^{d_o\times d_i}$, which spans the null-space component. The minimum-Frobenius-norm choice is $Z=0$, yielding the closed form $\tau^{\rm cf}=D\,C^\dagger$, computed via the eigendecomposition $C=Q\Lambda Q^\top$.

\subsection{Task-Vector Proxy for the Input Subspace}
\label{app:tau-as-x}
\paragraph{Row-space justification}
The \wudi{} loss in Eq.~\eqref{eq:wudi} substitutes the transpose of the task vector $\bm{\tau}_i$ for the input subspace $\bm{x}_i$. We provide a self-contained justification. For a single linear layer with weight matrix $W_l$, the per-sample loss gradient has the standard outer-product form
\[
\nabla_{W_l}\mathcal{L}_{t,n}\;=\;g_{t,n}\,x_{t,n}^{\top},
\]
where $x_{t,n}$ is the layer's input activation at training step $t$ on sample $n$ and $g_{t,n}$ is the back-propagated output-side gradient. Summing the resulting GD updates across iterations gives
\[
\tau_{i,l}\;=\;-\eta\sum_{t=1}^{T}\sum_{n=1}^{B_{\rm s}}\,g_{t,n}\,x_{t,n}^{\top},
\]
where $B_{\rm s}$ is the per-step batch size.
Each row of $\tau_{i,l}$ is therefore a $g$-weighted superposition of input vectors visited during fine-tuning. Equivalently, the row space of $\tau_{i,l}$ is contained in (or, in finite-trajectory practice, biased toward) the activation subspace $\mathrm{span}\{x_{t,n}\}$. This containment is the formal content of the WUDI substitution: the proxy operator $\tau_i^\top\tau_i$ acts on the same subspace as the unobserved input Gram $\sum_{t,n}x_{t,n}x_{t,n}^\top$, up to the gradient-induced reweighting that we collect into the residual term $E$ of Sec.~\ref{sec:noise}. In particular, $\tau C=D$ in Sec.~\ref{sec:cf} can be read as a noisy linear inverse problem with respect to that activation subspace, which is the foundation on which the spectral filters in Sec.~\ref{sec:methods} are built.
\paragraph{Covariance-dominance bound}
\label{app:cov-ineq}
The discussion above establishes a row-space containment between $\tau_{i,l}$ and the input activation subspace. We now upgrade it to a quantitative inequality that justifies treating the WUDI proxy as a computable upper bound on the real per-layer interference
\[
\mathcal{I}_i(\tau)
\;:=\;\mathbb{E}_{x\sim\mathcal{D}_{i,l}}\!\bigl[\,\bigl\|(\tau-\tau_{i,l})\,x\bigr\|_2^2\,\bigr]
\;=\;\mathrm{tr}\!\bigl((\tau-\tau_{i,l})\,\Sigma_{i,l}\,(\tau-\tau_{i,l})^\top\bigr),
\]
where $\Sigma_{i,l}:=\mathbb{E}[x_{i,l}^{}x_{i,l}^\top]$ is the input second moment. Equivalently, given an empirical activation matrix $X_{i,l}\in\mathbb{R}^{d_i\times n_i}$ with $n_i$ samples, $\mathcal{I}_i(\tau)=\tfrac{1}{n_i}\|(\tau-\tau_{i,l})X_{i,l}\|_F^2$ and $\Sigma_{i,l}=\tfrac{1}{n_i}X_{i,l}X_{i,l}^\top$.

\begin{assumption}[Activation covariance dominated by task-vector Gram]\label{ass:cov}
For each task $i$ and each linear layer $l$, the empirical input second moment $\Sigma_{i,l}:=\mathbb E[x_{i,l}^{}x_{i,l}^\top]$ admits the decomposition
\begin{equation}
\Sigma_{i,l}\;\preceq\;a_{i,l}\,\frac{\tau_{i,l}^\top\tau_{i,l}}{\|\tau_{i,l}\|_F^2}\;+\;R_{i,l},
\label{eq:cov-decomp}
\end{equation}
with constants $a_{i,l}>0$ and a residual operator $R_{i,l}\succeq 0$ whose action on any layer-wise delta $\delta_{i,l}:=\tau-\tau_{i,l}$ is bounded by
\begin{equation}
\mathrm{tr}\bigl(\delta_{i,l}\,R_{i,l}\,\delta_{i,l}^\top\bigr)\;\le\;b_{i,l}\,\|\delta_{i,l}\|_F^2,
\label{eq:residual-bound}
\end{equation}
for some $b_{i,l}\ge 0$.
\end{assumption}

The decomposition in Eqs.~\eqref{eq:cov-decomp}--\eqref{eq:residual-bound} has two complementary readings. (i) The first term states that the directions on which $\Sigma_{i,l}$ has appreciable mass are precisely the directions on which $\tau_{i,l}^\top\tau_{i,l}$ has appreciable mass, scaled by the layer-specific constant $a_{i,l}$. This is the formal version of ``task vectors approximate the input subspace they were trained on.'' (ii) The residual $R_{i,l}$ collects every input direction that the gradient trajectory failed to cover (e.g., directions visited only at very early or very late iterations); $b_{i,l}$ controls how much $R_{i,l}$ can leak into the interference computation.

\begin{proposition}[Computable upper bound on real interference]
\label{prop:cov-bound}
Under Assumption~\ref{ass:cov}, the real per-layer interference satisfies
\[
\mathcal{I}_i(\tau)\;\le\;a_{i,l}\,\bigl\|(\tau-\tau_{i,l})\,\tau_{i,l}^\top\bigr\|_F^2\big/\|\tau_{i,l}\|_F^2\;+\;b_{i,l}\,\|\tau-\tau_{i,l}\|_F^2.
\]
\end{proposition}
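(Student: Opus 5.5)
The plan is to write the real interference as a trace against the input second moment and then push the Loewner-order bound of Assumption~\ref{ass:cov} through the quadratic form. Fix a task $i$ and layer $l$, and set $\delta:=\tau-\tau_{i,l}$. From the definition preceding the assumption,
\[
\mathcal{I}_i(\tau)=\mathrm{tr}\bigl(\delta\,\Sigma_{i,l}\,\delta^\top\bigr).
\]
The key observation is that $S\mapsto\mathrm{tr}(\delta S\delta^\top)$ is monotone with respect to $\preceq$. If $S_1\preceq S_2$, then
\[
\mathrm{tr}\bigl(\delta(S_2-S_1)\delta^\top\bigr)=\sum_j \delta_j^\top (S_2-S_1)\,\delta_j\ge 0,
\]
where $\delta_j^\top$ are the rows of $\delta$. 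Equivalently, $\delta(S_2-S_1)\delta^\top$ is a congruence of a PSD matrix, so it is PSD and has nonnegative trace.

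Applying this monotonicity with $S_1=\Sigma_{i,l}$ and $S_2$ the right-hand side of Eq.~\eqref{eq:cov-decomp}, and using linearity of the trace, gives
\[
\mathcal{I}_i(\tau)\le \frac{a_{i,l}}{\|\tau_{i,l}\|_F^2}\,\mathrm{tr}\bigl(\delta\,\tau_{i,l}^\top\tau_{i,l}\,\delta^\top\bigr)+\mathrm{tr}\bigl(\delta R_{i,l}\delta^\top\bigr).
\]
The first trace is exactly $\|\delta\,\tau_{i,l}^\top\|_F^2=\|(\tau-\tau_{i,l})\tau_{i,l}^\top\|_F^2$, by the same identity used in the expansion of the \wudi{} loss in Appendix~\ref{app:normal}. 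The second trace is at most $b_{i,l}\|\delta\|_F^2$ by Eq.~\eqref{eq:residual-bound}. Combining the two bounds yields the stated inequality.

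No step here is hard. The only point that needs care is justifying that the Loewner inequality transfers to the trace quadratic form, which the row-wise decomposition above handles. One should also note that $\tau_{i,l}\neq 0$, so that the normalization by $\|\tau_{i,l}\|_F^2$ is well defined; this is implicit in Assumption~\ref{ass:cov}.
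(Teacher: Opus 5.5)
Your proposal is correct and follows essentially the same route as the paper's proof. Both write $\mathcal{I}_i(\tau)=\mathrm{tr}(\delta\,\Sigma_{i,l}\,\delta^\top)$, substitute the Loewner bound of Assumption~\ref{ass:cov}, identify the first trace with $\|\delta\,\tau_{i,l}^\top\|_F^2$, and bound the residual trace by $b_{i,l}\|\delta\|_F^2$; your row-wise argument that $\preceq$ carries over to the trace quadratic form spells out a step the paper leaves implicit.
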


\begin{proof}
Write $\delta:=\tau-\tau_{i,l}$. By definition, $\mathcal{I}_i(\tau)=\mathrm{tr}(\delta\,\Sigma_{i,l}\,\delta^\top)$. Substituting Eq.~\eqref{eq:cov-decomp} and using Eq.~\eqref{eq:residual-bound},
$\mathcal{I}_i(\tau)\le a_{i,l}\,\mathrm{tr}(\delta\,\tau_{i,l}^\top\tau_{i,l}\,\delta^\top)/\|\tau_{i,l}\|_F^2+b_{i,l}\,\|\delta\|_F^2$. The first term equals $a_{i,l}\,\|\delta\,\tau_{i,l}^\top\|_F^2/\|\tau_{i,l}\|_F^2$ since $\|\delta\,\tau_{i,l}^\top\|_F^2=\mathrm{tr}(\delta\,\tau_{i,l}^\top\tau_{i,l}\,\delta^\top)$.
\end{proof}

The first summand is exactly the WUDI proxy contribution from task $i$ (Eq.~\eqref{eq:wudi}); the second summand is a Frobenius-norm regularization on the merged delta. Minimizing the WUDI proxy therefore controls the real interference up to a $\|\delta\|_F$ slack term. Two consequences follow. First, the spectral filters in Sec.~\ref{sec:methods} that suppress small-$\lambda_k$ directions of $C$ are precisely those that make the proxy a tight bound: discarding low-eigenvalue directions reduces the proxy without inflating $\|\delta\|_F$. Second, the Frobenius-norm-inflation regime documented in Fig.~\ref{fig:norm-shortcut} is exactly the failure mode in which iterative \wudi{} drives the proxy down by inflating $\|\delta\|_F$, leaving the second summand large; closed-form spectral solvers avoid this regime by construction.

The empirical validity of Assumption~\ref{ass:cov} is supported by the capture-gap diagnostics in Fig.~\ref{fig:dropped-panels-1}(a): the task-vector subspaces capture input energy in early and middle layers (gap $+0.18$--$0.43$ vs.\ random subspaces). The last MLP layer is a documented exception (gap $\approx 0$); for that layer $b_{i,l}$ is comparable to $a_{i,l}$ and the proxy is correspondingly looser, in line with the observation that the input-subspace assumption is layer-conditional rather than global.

\subsection{\texorpdfstring{Noise Amplification and Spectral Risk}{Noise Amplification and Spectral Risk}}
\label{app:noise}
\paragraph{Inverse-problem view}
Decompose $D=\tau^\circ C+E$, where $\tau^\circ$ is an unobserved ideal merged delta (the model that would minimize the true downstream loss) and $E$ collects the proxy mismatch (replacing $x_i$ by $\tau_i^\top$, plus the linear-subspace approximation error). Project on $q_k$, the $k$-th eigenvector of $C$:
\[
y_k:=Dq_k=\lambda_k\tau_k^\circ+\xi_k,\quad \xi_k:=Eq_k.
\]
The closed-form solution gives, for $\lambda_k>0$, $\tau^{\rm cf}_k=y_k/\lambda_k=\tau_k^\circ+\xi_k/\lambda_k$. As $\lambda_k\to 0^+$, the noise term $\xi_k/\lambda_k$ dominates. Equivalently, defining the residual signal $R_k^\circ:=(\tau^\circ-\tau_{\rm init})q_k$ and the residual right-hand side $B:=D-\tau_{\rm init}\,C$, so that $B q_k=\lambda_k R_k^\circ+\xi_k$, any spectral filter $h_k$ with $h_k\to 0$ as $\lambda_k\to 0$ produces a regularized residual estimator $\widehat R_k=h_k(B q_k/\lambda_k)$ and merged update $\widehat\tau q_k=\tau_{\rm init} q_k+\widehat R_k$, in which the noise contribution scales as $h_k/\lambda_k$, controllable by the filter shape. This shrinks the residual rather than the absolute estimate, so discarded directions retain the initial point $\tau_{\rm init}$, matching the implementation of \aswudi{}.

\paragraph{Empirical noise-amplification fit}
\label{app:noise-fit}
The inverse-problem view above treats the residual $\xi_k=Eq_k$ as an arbitrary noise vector. Empirically, its squared norm is well described by a power law in $\lambda_k$:
\begin{equation}
\hat\nu_k^{\,2}\;\approx\;\sigma_0^2 + \sigma_1^2\,\lambda_k^{\alpha},
\label{eq:noise-fit}
\end{equation}
with three parameters $(\sigma_0^2,\sigma_1^2,\alpha)\ge 0$ fitted per layer in log--log space. Across the $72$ linear layers of CLIP-ViT-B/32 TA8, we obtain $\alpha$ with mean $1.871$ and median $1.855$; $68\%$ of layers fit exponents below $2$, and only a small minority fit exponents above. The empirical $\alpha\approx 2$ regime means $\hat\nu_k^{\,2}/\lambda_k^2$ behaves as $\sigma_0^2/\lambda_k^2+\sigma_1^2$ in the small-$\lambda_k$ tail, so the closed-form pseudoinverse risk diverges only through the offset $\sigma_0^2$, while the bulk of the spectrum (where $\lambda_k$ is large) sees a vanishing noise contribution. This is consistent with the binned-median plot in Fig.~\ref{fig:spectral-rank}(a) and motivates suppressing small-$\lambda_k$ directions with a spectral filter.

\subsection{Spectral Filters and the Unified Estimator}
\label{app:flow}
\paragraph{Gradient flow and Landweber filters}
The Frobenius gradient flow is
\[
\dot\tau(t)=D-\tau(t)C,\quad \tau(0)=\tau_{\rm init}.
\]
Multiplying both sides on the right by $Q$ and writing $\tilde\tau(t)=\tau(t)Q$, $\tilde D=DQ$, we obtain $d_i$ decoupled vector ODEs $\dot{\tilde\tau}_k(t)=\tilde D_k - \lambda_k\tilde\tau_k(t)$ (one per eigendirection, each $\tilde\tau_k,\tilde D_k\in\mathbb{R}^{d_o}$), each solved by
\[
\tilde\tau_k(t)=\tilde\tau_{{\rm init},k}+\frac{\tilde D_k-\lambda_k\tilde\tau_{{\rm init},k}}{\lambda_k}\bigl(1-e^{-\lambda_k t}\bigr).
\]
Reassembling and identifying the spectral filter,
\[
\tau(t)=\tau_{\rm init}+\bigl(D C^\dagger-\tau_{\rm init}\,C C^\dagger\bigr)Q\,\mathrm{diag}\bigl(1-e^{-\lambda_k t}\bigr)Q^\top.
\]
The discrete Landweber iteration $\tau_{n+1}=\tau_n+\eta(D-\tau_n C)$ has filter
$h_k^{\rm LW}(n)=1-(1-\eta\lambda_k)^n$, stable for $0<\eta<2/\lambda_{\max}$. As $n\to\infty$, both filters converge to $1$ on $\lambda_k>0$, recovering $\tau^{\rm cf}$. We refer to~\cite{engl1996regularization} for the classical theory of spectral filters as regularizers for ill-posed linear inverse problems.

\paragraph{Derivation of the unified spectral filtering estimator}
\label{app:unified}
The estimator $\widehat{\tau}_h$ in Eq.~\eqref{eq:unified} follows directly from the linear ODE solution. Let $R(t):=\tau(t)-\tau_{\rm init}$ be the deviation from the initial point and recall $B=D-\tau_{\rm init}\,C$ from Appendix~\ref{app:noise}. The corresponding ODE $\dot R(t)=B-R(t)\,C$ with $R(0)=0$ has solution $R(t)=B\,Q\,\mathrm{diag}\bigl((1-e^{-\lambda_k t})/\lambda_k\bigr)_{\lambda_k>0}\,Q^\top$. Replacing $1-e^{-\lambda_k t}$ by an arbitrary filter $h_k$ in the eigenbasis yields $R=B\,C_h^\dagger$, which is exactly Eq.~\eqref{eq:unified} after adding $\tau_{\rm init}$. Setting $\tau_{\rm init}=0$ gives the direct filtered inverse $\widehat{\tau}_h=D\,C_h^\dagger$. Fig.~\ref{fig:S1} compares the resulting filters---the full pseudoinverse, the Wiener filter, the tuned \swudi{} filter, and the adaptive \aswudi{} cutoff---together with their predicted residual risk and the boundary signal-to-noise ratio at the cutoff.

\begin{figure*}[!t]
\centering
\includegraphics[width=\textwidth]{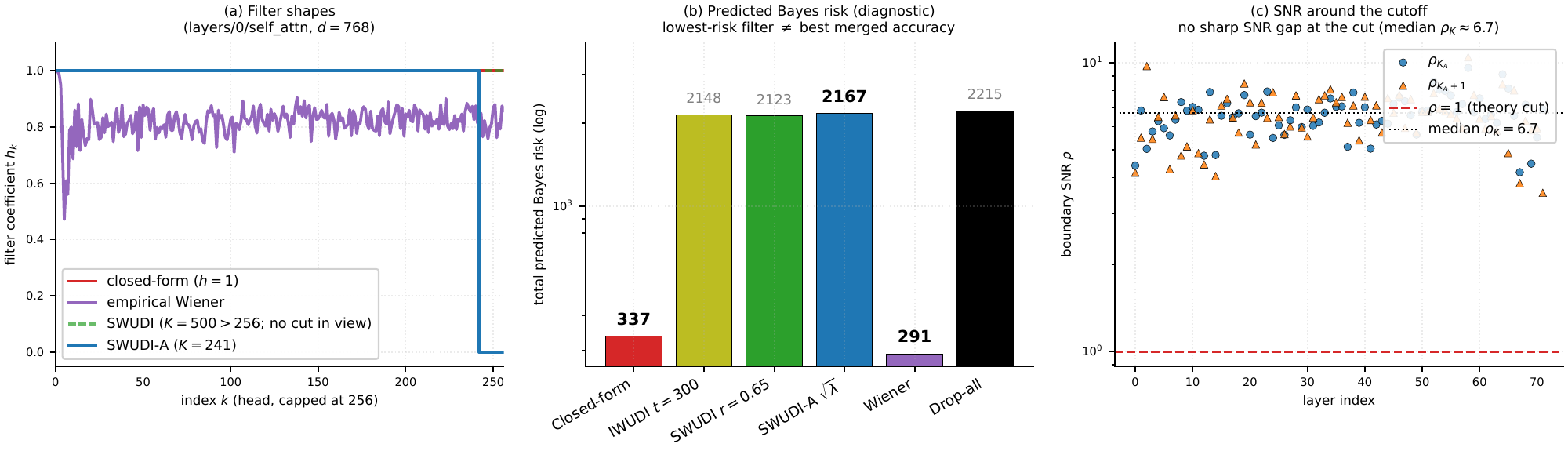}
\caption{\textbf{Spectral-filter diagnostics on CLIP-ViT-B/32 TA8.} This figure is diagnostic rather than prescriptive: it shows that simple Wiener/Bayes-risk or SNR-gap criteria do not by themselves pick the best merging filter, which is why \aswudi{} instead relies on the conservative rank rules of Appendix~\ref{app:adaptive-rank}. Panel (a) plots the filter value $h_k$ (vertical axis) against the eigen-direction index $k$ sorted by decreasing eigenvalue (horizontal axis), comparing the full pseudoinverse ($h_k\!=\!1$), the empirical Wiener filter, the tuned \swudi{} hard cutoff, and the adaptive \aswudi{} cutoff; the tuned \swudi{} cut ($K\!=\!500$) lies beyond the displayed head range, so within view it coincides with the pseudoinverse. Panel (b) compares the total predicted residual (Bayes) risk of these filters, where lower bars indicate smaller risk under the residual-noise model introduced in Sec.~\ref{sec:noise} and detailed in Appendix~\ref{app:noise-fit}; the Wiener and pseudoinverse filters attain the lowest predicted risk, yet they are \emph{not} the best on real merged accuracy, so this risk model is a negative diagnostic and not a selection criterion. Panel (c) plots the boundary signal-to-noise ratio $\rho_k$ at the eigen-direction $k$ (the ratio of retained signal energy to residual-noise energy at that boundary) evaluated at and just after the \aswudi{} cutoff $K_A$ across layers, where $K_A$ is the per-layer \aswudi{} retained rank. The median boundary SNR stays well above $1$ with no sharp drop across the cut, confirming that \aswudi{} is a conservative spectral-rank rule rather than a sharp SNR threshold: it removes the spectral tail while retaining the directions that dominate the proxy reduction.}
\label{fig:S1}
\end{figure*}

\subsection{Adaptive Rank Rules}
\label{app:adaptive-rank}
The goal of this section is to explain how \aswudi{} selects a retained rank $K_\ell$ for each layer without using a global rank ratio. We use two complementary rules: a participation-ratio rule for heavy-tailed spectra, and a Gavish--Donoho threshold for spectra with a clearer signal-plus-noise structure.

\subsubsection{Participation-Square-Root Rule}
\label{app:psqrt}
For singular values $\sigma_k=\sqrt{\lambda_k}$, the participation ratio
\[
R_{\rm part}:=\frac{(\sum_k\sigma_k)^2}{\sum_k\sigma_k^2}
\]
counts the effective number of comparable singular components. We set
\[
K_\ell^{\rm psqrt}=\left\lceil R_{\rm part}\right\rceil
=\left\lceil\frac{(\sum_k\sqrt{\lambda_k})^2}{\sum_k\lambda_k}\right\rceil .
\]
If the active singular spectrum is flat and supported on exactly $K$ directions, then $R_{\rm part}=K$, so the rule recovers the active rank exactly. More generally, suppose the active singular values are $\sigma_k=\mu(1+\delta_k)$ for $k\le K$, with empirical mean perturbation close to zero and empirical second moment $c^2:=K^{-1}\sum_{k\le K}\delta_k^2$. A first-order expansion gives
\[
\sum_{k\le K}\sigma_k\approx K\mu,
\qquad
\sum_{k\le K}\sigma_k^2\approx K\mu^2(1+c^2),
\]
and therefore
\[
R_{\rm part}^{(\sigma)}\approx \frac{K}{1+c^2}.
\]
Thus the rule contracts the ideal active rank only according to the relative spread of the active singular values, not their absolute scale. By contrast, computing the same participation ratio on eigenvalues $\lambda_k=\sigma_k^2$ gives $R_{\rm part}^{(\lambda)}\approx K(1-4c^2)$ for small $c$, which is more sensitive to spectral spread and tends to retain too few directions in practice.

\subsubsection{Marchenko--Pastur Gavish--Donoho Rule}
\label{app:gd}
Stack the normalized task-vector matrices vertically as $M\in\mathbb{R}^{Nd_o\times d_i}$. Since $M^\top M=C$ (Sec.~\ref{sec:aswudi}), the singular values of $M$ are $\sigma_k=\sqrt{\lambda_k(C)}$. When the spectrum resembles a low-rank signal plus random noise, the Marchenko--Pastur law and the Gavish--Donoho threshold give a conservative hard cutoff: retain singular values above
\[
\omega_{\rm GD}(\beta)\,\widehat\sigma_{\rm med},\qquad
\beta=\frac{\min(Nd_o,d_i)}{\max(Nd_o,d_i)},
\]
where $\widehat\sigma_{\rm med}$ is the empirical median singular value. In the unknown-noise setting, $\omega_{\rm GD}(\beta)=\lambda_*(\beta)/\sqrt{\mu_\beta}$, where $\mu_\beta$ is the median of the Marchenko--Pastur distribution and
\begin{equation}
\lambda_*(\beta)=\sqrt{\,2(\beta+1)+\dfrac{8\beta}{(\beta+1)+\sqrt{\beta^2+14\beta+1}}\,}
\label{eq:gd-known-noise}
\end{equation}
is the known-noise threshold of \cite[Eq.~6]{gavish2014optimal}. This gives the per-layer rank in Eq.~\eqref{eq:gd}. We use this rule as a conservative alternative when the spectrum has a visible noise bulk rather than a long heavy tail.

Fig.~\ref{fig:S2} supports the need for adaptive rank selection: different layer types prefer different retained ranks, while SGD/Adam trajectory diagnostics connect the rank-rule behavior back to the spectral-filtering view.

\begin{figure*}[!t]
\centering
\includegraphics[width=\textwidth]{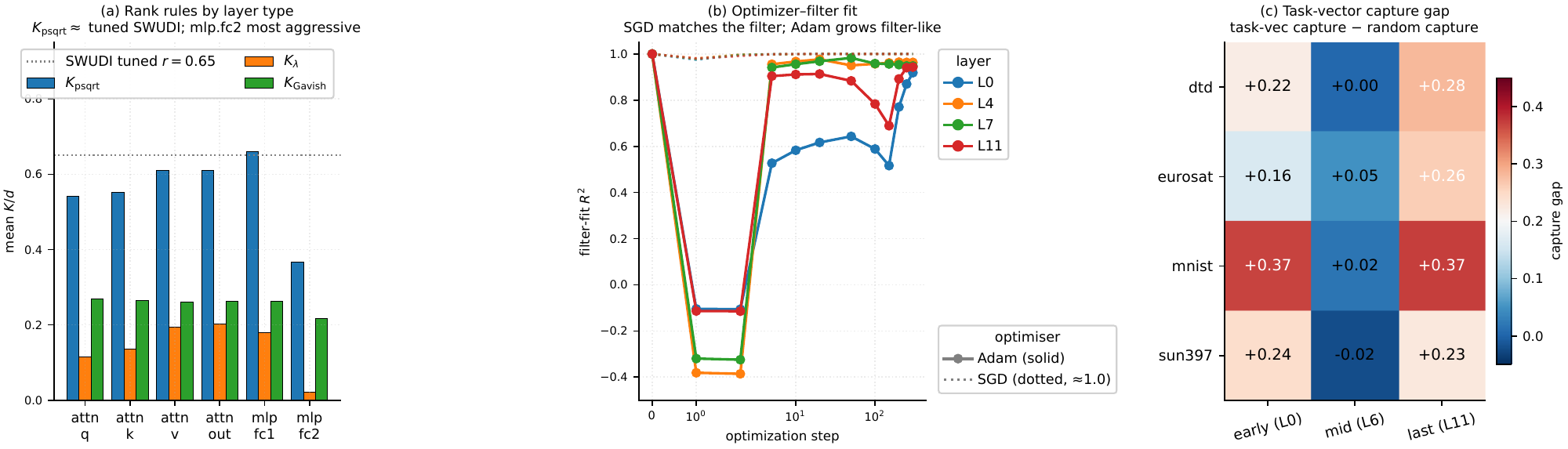}
\caption{\textbf{Rank-rule and optimizer-trajectory diagnostics on CLIP-ViT-B/32 TA8.} Panel (a) plots the mean retained-rank ratio $K/d_i$ (vertical axis) for different layer types (horizontal axis) under the participation-square-root rule $K_{\rm psqrt}$, the eigenvalue-participation rule $K_{\lambda}$, and the Gavish--Donoho rule $K_{\rm Gavish}$; the useful rank varies across layers (with \texttt{mlp.fc2} the most aggressively truncated), so it should be selected adaptively rather than fixed globally. Panel (b) plots how well iterative optimizer updates match a spectral filter: the horizontal axis is the optimization step (symmetric-log, so step $0$ is the initialization) and the vertical axis is the filter-fit $R^2$, with higher values indicating a closer spectral-filter interpretation; SGD (dotted) matches the filter throughout, while Adam (solid) starts unstable in the first few steps and becomes increasingly filter-like with training. Panel (c) reports the capture gap---task-vector capture minus random capture---by task (rows) and layer depth (columns); cell color and the printed value give the gap (warm red for a positive gap, cool blue for a near-zero or negative gap), so every cell carries a value and none are missing. Together, the diagnostics support layer-wise rank adaptation and the view that iterative merging behaves like spectral filtering.}
\label{fig:S2}
\end{figure*}

\subsection{Parameter-Drift Bound and Empirical Companion}
\label{app:conf-theorem}
This subsection reproduces the parameter-drift theorem of the conference version~\cite{wei2026optmerge}. The bound motivates the expert-construction protocol used in our merging experiments, namely controlling parameter drift during fine-tuning so that experts remain near a common basin around the base model.

\subsubsection{Notation and Setting}

\paragraph{Tasks and losses} For task $i$, let the loss $\mathcal{L}_i:\mathbb{R}^d\to\mathbb{R}$ be evaluated at parameters $\bm{\Theta}\in\mathbb{R}^d$.

\paragraph{Task vectors} After $T$ steps of (deterministic) gradient descent (GD) with fixed step size $\eta>0$ from a common initialization $\bm{\Theta}$, the task vector for task $i$ is
\[
\bm{\tau}_i := -\eta\sum_{t=0}^{T-1}\nabla\mathcal{L}_i(\bm{\Theta}^{(i)}_t).
\]

\paragraph{Merged update} Let $\bm{\tau}_m := \sum_{j=1}^N \alpha_j \bm{\tau}_j$ with nonnegative weights $\alpha_j\ge 0$. We study the loss of task $i$ at the merged point $\bm{\Theta}+\bm{\tau}_m$.

\paragraph{Norm and inner product} $\norm{\cdot}$ denotes the Euclidean norm and $\inp{\cdot}{\cdot}$ the Euclidean inner product. For nonzero vectors $u,v$, $\cos(u,v):=\inp{u}{v}/(\norm{u}\norm{v})$.

\subsubsection{Assumptions}

\begin{assumption}[$L$-smoothness]\label{ass:L-smooth}
Each $\mathcal{L}_i$ has $L$-Lipschitz continuous gradients: for all $\bm{\Theta},\bm{\Theta}'$,
\[
\norm{\nabla\mathcal{L}_i(\bm{\Theta})-\nabla\mathcal{L}_i(\bm{\Theta}')}\le L\norm{\bm{\Theta}-\bm{\Theta}'},
\]
equivalently, for any $\Delta$,
\[
\mathcal{L}_i(\bm{\Theta}+\Delta)\le\mathcal{L}_i(\bm{\Theta})+\inp{\nabla\mathcal{L}_i(\bm{\Theta})}{\Delta}+\tfrac{L}{2}\norm{\Delta}^2.
\]
\end{assumption}

\begin{assumption}[Polyak--\L{}ojasiewicz (PL) condition]\label{ass:PL}
Each $\mathcal{L}_i$ satisfies, for some $\mu>0$,
\[
\tfrac12\norm{\nabla\mathcal{L}_i(\bm{\Theta})}^2\ge\mu\big(\mathcal{L}_i(\bm{\Theta})-\mathcal{L}_i^*\big),
\]
where $\mathcal{L}_i^*:=\inf_{\bm{\Theta}}\mathcal{L}_i(\bm{\Theta})$.
\end{assumption}

\begin{assumption}[Directional similarity]\label{ass:similar}
For each $i$ and some $\kappa\in(0,1]$,
\[
\cos\big(-\nabla\mathcal{L}_i(\bm{\Theta}),\bm{\tau}_i\big)\ge\kappa,
\]
equivalently,
\[
\inp{\nabla\mathcal{L}_i(\bm{\Theta})}{\bm{\tau}_i}\le -\kappa\norm{\nabla\mathcal{L}_i(\bm{\Theta})}\norm{\bm{\tau}_i}.
\]
This ensures $\bm{\tau}_i$ is a descent direction for task $i$, with alignment quantified by $\kappa$.
\end{assumption}

\begin{assumption}[Approximate orthogonality]\label{ass:ortho}
For all $i\neq j$ and some $\varepsilon\in[0,1)$,
\[
\cos(\bm{\tau}_i,\bm{\tau}_j)\le\varepsilon.
\]
Prior works~\cite{ilharcoediting,ortiz-jimenez2023task} show that task vectors are nearly orthogonal in high-dimensional parameter space, which helps explain the success of model merging. A small $\varepsilon$ means that tasks are nearly orthogonal in update space, reducing negative transfer.
\end{assumption}

\begin{assumption}[Bounded gradients]\label{ass:gradbound}
There exists $G>0$ such that for all $i$ and all $\bm{\Theta}$ on the trajectory,
\[
\norm{\nabla\mathcal{L}_i(\bm{\Theta})}\le G.
\]
This boundedness condition is widely adopted in the optimization literature~\cite{gower2019sgd_assump,khaled2020better_assump}.
\end{assumption}

\subsubsection{Supporting Lemmas}

\begin{lemma}[Cross-task cosine leakage]\label{lem:cos}
Under Assumptions~\ref{ass:similar}--\ref{ass:ortho}, with $\nabla\mathcal{L}_i(\bm{\Theta})\neq\mathbf{0}$ and $\bm{\tau}_j\neq\mathbf{0}$, for $i\neq j$,
\[
\big|\cos(\nabla\mathcal{L}_i(\bm{\Theta}),\bm{\tau}_j)\big|\le\delta,\qquad \delta:=\kappa\varepsilon+\sqrt{1-\kappa^2}\sqrt{1-\varepsilon^2}.
\]
\end{lemma}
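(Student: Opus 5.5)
The plan is to reduce the claim to a statement about angles between three unit vectors and apply the triangle inequality for the geodesic (angular) distance on the unit sphere. Set $u:=-\nabla\mathcal{L}_i(\bm{\Theta})/\norm{\nabla\mathcal{L}_i(\bm{\Theta})}$, $v:=\bm{\tau}_i/\norm{\bm{\tau}_i}$ and $w:=\bm{\tau}_j/\norm{\bm{\tau}_j}$. These are well defined by the nonvanishing hypotheses, together with $\bm{\tau}_i\neq\mathbf{0}$, which is implicit in Assumption~\ref{ass:similar}. Write $\theta(x,y):=\arccos\inp{x}{y}\in[0,\pi]$, which is a metric on the sphere. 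Since $\cos(\nabla\mathcal{L}_i(\bm{\Theta}),\bm{\tau}_j)=-\inp{u}{w}$, it suffices to show $|\inp{u}{w}|\le\delta$.

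The first step translates the hypotheses into angles. Assumption~\ref{ass:similar} gives $a:=\theta(u,v)\le\arccos\kappa=:\alpha$. For the absolute value in the conclusion we need a two-sided reading of Assumption~\ref{ass:ortho}, namely $|\cos(\bm{\tau}_i,\bm{\tau}_j)|\le\varepsilon$. This yields $b:=\theta(v,w)\in[\beta,\pi-\beta]$ with $\beta:=\arccos\varepsilon$.

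The second step applies the triangle inequality in both directions: $\theta(u,w)\ge b-a$ and $\theta(u,w)\le a+b$. Combining these with the ranges for $a$ and $b$ gives
\[
\theta(u,w)\in\bigl[\beta-\alpha,\;\pi-(\beta-\alpha)\bigr].
\]
Cosine is decreasing on $[0,\pi]$, so $|\inp{u}{w}|\le\cos(\beta-\alpha)$. The addition formula then gives
\[
\cos(\beta-\alpha)=\varepsilon\kappa+\sqrt{1-\varepsilon^2}\sqrt{1-\kappa^2}=\delta,
\]
using $\sin\arccos x=\sqrt{1-x^2}$.

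The main obstacle is the validity regime of this argument rather than any computation. The interval step needs $\beta\ge\alpha$, that is $\varepsilon\le\kappa$, and it needs Assumption~\ref{ass:ortho} to hold in the two-sided form. Both conditions are genuinely necessary. If $\varepsilon>\kappa$, one can take $u=w$, which gives $|\inp{u}{w}|=1>\delta$. If only the one-sided bound $\cos(\bm{\tau}_i,\bm{\tau}_j)\le\varepsilon$ is assumed, one can take $w=-v$, which gives $|\inp{u}{w}|=|\inp{u}{v}|$; this can equal $1$. I would therefore state these two conditions explicitly as the intended reading of the lemma: near-orthogonality in absolute value, and $\varepsilon\le\kappa$, which is the regime of interest. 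I would also remark that without them the bound should be read as $\min\{1,\delta\}$ on the appropriate one-sided quantity.
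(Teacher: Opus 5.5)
Your proof is correct, and it is essentially the paper's argument in geometric form. The paper decomposes $u$ and $w$ along $v$ and applies Cauchy--Schwarz to the orthogonal parts, giving $|\inp{u}{w}|\le a|b|+\sqrt{1-a^2}\sqrt{1-b^2}$ with $a=\inp{u}{v}$ and $b=\inp{v}{w}$, and then asserts this is at most $\delta$. That decomposition is exactly what proves the angular triangle inequality you use. Your version makes the final step visible. The right-hand side equals $\cos(\arccos|b|-\arccos a)$, and bounding it by $\delta=\cos(\beta-\alpha)$ uniformly over admissible $a\ge\kappa$ and $b$ requires $|b|\le\varepsilon$ and $\varepsilon\le\kappa$, which are precisely your two extra conditions. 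The paper's sketch records only the one-sided $\inp{v}{w}\le\varepsilon$ and never checks this maximization, so it silently needs both conditions. Your counterexamples ($u=w$ with $\kappa\le\inp{u}{v}\le\varepsilon$ when $\varepsilon>\kappa$; and $u=v$, $w=-v$ under the one-sided reading) are valid and show the lemma as stated is false without them.

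One slip is in your closing remark. Since $\delta=\cos(\beta-\alpha)\le 1$ always, $\min\{1,\delta\}=\delta$, so that replacement repairs nothing. For $\varepsilon>\kappa$ the correct bound is simply $1$. Under the one-sided reading, only $\inp{u}{w}\le\delta$ survives, and that is the side Theorem~\ref{thm:fixed-eta} does \emph{not} use. Its cross-term estimate needs $\inp{u}{w}\ge-\delta$, which comes from $\cos(\bm{\tau}_i,\bm{\tau}_j)\ge-\varepsilon$. So the two-sided reading is what the paper actually requires.
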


\begin{proof}[Proof sketch]
Normalize $u=-\nabla\mathcal{L}_i/\norm{\nabla\mathcal{L}_i}$, $v_i=\bm{\tau}_i/\norm{\bm{\tau}_i}$, $v_j=\bm{\tau}_j/\norm{\bm{\tau}_j}$. Assumption~\ref{ass:similar} gives $\inp{u}{v_i}\ge\kappa$ and Assumption~\ref{ass:ortho} gives $\inp{v_i}{v_j}\le\varepsilon$. Decomposing $u$ and $v_j$ along $v_i$ and its orthogonal complement and applying Cauchy--Schwarz yields the stated bound.
\end{proof}

\begin{lemma}[PL convergence under GD]\label{lem:PL}
Under Assumptions~\ref{ass:L-smooth}--\ref{ass:PL} and $\eta\in(0,1/L]$, the GD iterates for task $i$ satisfy
\[
\mathcal{L}_i(\bm{\Theta}_T)-\mathcal{L}_i^*\le(1-\eta\mu)^T\big(\mathcal{L}_i(\bm{\Theta}_0)-\mathcal{L}_i^*\big).
\]
\end{lemma}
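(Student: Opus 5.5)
The argument is the standard descent-lemma-plus-PL contraction applied to one GD step, then unrolled. I will work with a single step $\bm{\Theta}_{t+1}=\bm{\Theta}_t-\eta\nabla\mathcal{L}_i(\bm{\Theta}_t)$ and apply the quadratic upper bound of Assumption~\ref{ass:L-smooth} with $\Delta=-\eta\nabla\mathcal{L}_i(\bm{\Theta}_t)$. This gives
\[
\mathcal{L}_i(\bm{\Theta}_{t+1})\le\mathcal{L}_i(\bm{\Theta}_t)-\eta\norm{\nabla\mathcal{L}_i(\bm{\Theta}_t)}^2+\tfrac{L\eta^2}{2}\norm{\nabla\mathcal{L}_i(\bm{\Theta}_t)}^2
=\mathcal{L}_i(\bm{\Theta}_t)-\eta\bigl(1-\tfrac{L\eta}{2}\bigr)\norm{\nabla\mathcal{L}_i(\bm{\Theta}_t)}^2 .
\]
Because $\eta\le 1/L$, we have $1-L\eta/2\ge 1/2$. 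The decrease term is therefore at least $\tfrac{\eta}{2}\norm{\nabla\mathcal{L}_i(\bm{\Theta}_t)}^2$; this yields the clean constant and is the only place the step-size restriction is used.

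Next I invoke Assumption~\ref{ass:PL} to replace $\tfrac12\norm{\nabla\mathcal{L}_i(\bm{\Theta}_t)}^2$ by the lower bound $\mu(\mathcal{L}_i(\bm{\Theta}_t)-\mathcal{L}_i^*)$. Subtracting $\mathcal{L}_i^*$ from both sides, and noting that $\mathcal{L}_i^*$ is finite because PL is stated relative to it, gives the one-step contraction
\[
\mathcal{L}_i(\bm{\Theta}_{t+1})-\mathcal{L}_i^*\le(1-\eta\mu)\bigl(\mathcal{L}_i(\bm{\Theta}_t)-\mathcal{L}_i^*\bigr).
\]
To unroll this, I need $1-\eta\mu\ge 0$. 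Combining $L$-smoothness with PL gives the standard fact $\mu\le L$: take any $\bm{\Theta}$, let $\bm{\Theta}'=\bm{\Theta}-\tfrac1L\nabla\mathcal{L}_i(\bm{\Theta})$, and the descent bound yields $\mathcal{L}_i^*\le\mathcal{L}_i(\bm{\Theta})-\tfrac{1}{2L}\norm{\nabla\mathcal{L}_i}^2$, so $\tfrac{1}{2L}\norm{\nabla\mathcal{L}_i}^2\le\mathcal{L}_i-\mathcal{L}_i^*\le\tfrac{1}{2\mu}\norm{\nabla\mathcal{L}_i}^2$ at any non-stationary point. Hence $\eta\mu\le\mu/L\le1$. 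If every point is stationary, the claim is trivial because the iterates never move.

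Finally, induction on $t=0,\dots,T-1$ multiplies the contractions and gives $(1-\eta\mu)^T$ times the initial gap, which is the stated bound. No step is a real obstacle. The only delicate points are the sign check $1-\eta\mu\in[0,1)$ and using the factor $1-L\eta/2\ge1/2$ rather than the sharper $\eta\le 2/L$ regime, so that the constant comes out exactly as $\eta\mu$.
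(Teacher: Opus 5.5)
Your proposal is correct and follows the paper's proof: the descent lemma with $\eta\le 1/L$ gives a decrease of at least $\tfrac{\eta}{2}\norm{\nabla\mathcal{L}_i(\bm{\Theta}_t)}^2$, PL turns this into a one-step contraction by $(1-\eta\mu)$, and the recursion is unrolled over $T$ steps. Your extra check that $1-\eta\mu\ge 0$ (via $\mu\le L$), which unrolling the inequality needs, is a useful piece of rigor that the paper leaves implicit.
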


\begin{proof}
For one GD step $\bm{\Theta}_{t+1}=\bm{\Theta}_t-\eta\nabla\mathcal{L}_i(\bm{\Theta}_t)$, the $L$-smooth upper bound (Assumption~\ref{ass:L-smooth}) gives
\[
\mathcal{L}_i(\bm{\Theta}_{t+1}) \le \mathcal{L}_i(\bm{\Theta}_t) - \eta\!\left(1-\tfrac{L\eta}{2}\right)\!\norm{\nabla\mathcal{L}_i(\bm{\Theta}_t)}^2.
\]
Since $\eta\le 1/L$, we have $1-L\eta/2\ge 1/2$, hence
\[
\mathcal{L}_i(\bm{\Theta}_{t+1}) \le \mathcal{L}_i(\bm{\Theta}_t) - \tfrac{\eta}{2}\norm{\nabla\mathcal{L}_i(\bm{\Theta}_t)}^2.
\]
Applying the PL inequality $\tfrac12\norm{\nabla\mathcal{L}_i(\bm{\Theta}_t)}^2\ge\mu(\mathcal{L}_i(\bm{\Theta}_t)-\mathcal{L}_i^*)$ yields
\[
\mathcal{L}_i(\bm{\Theta}_{t+1})-\mathcal{L}_i^* \le (1-\eta\mu)\big(\mathcal{L}_i(\bm{\Theta}_t)-\mathcal{L}_i^*\big).
\]
Unrolling this recursion over $t=0,\dots,T-1$ gives the claim.
\end{proof}

\begin{lemma}[Task-vector norm bound]\label{lem:tau-norm}
If $\bm{\tau}_j=-\eta\sum_{t=0}^{T-1}\nabla\mathcal{L}_j(\bm{\Theta}^{(j)}_t)$ and $\norm{\nabla\mathcal{L}_j(\bm{\Theta}^{(j)}_t)}\le G$ for all $t$, then $\norm{\bm{\tau}_j}\le\eta T G$.
\end{lemma}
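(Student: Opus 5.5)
The bound follows from the triangle inequality together with the fact that the norm of a sum of vectors is at most the sum of their norms. No assumption beyond the stated gradient bound is needed; in particular, PL, smoothness and the alignment assumptions play no role here. The plan is a direct two-line estimate.

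\textbf{Step 1 (triangle inequality).} Starting from the definition $\bm{\tau}_j=-\eta\sum_{t=0}^{T-1}\nabla\mathcal{L}_j(\bm{\Theta}^{(j)}_t)$, I pull out the scalar: $\norm{\bm{\tau}_j}=\eta\,\norm{\sum_{t=0}^{T-1}\nabla\mathcal{L}_j(\bm{\Theta}^{(j)}_t)}$, using $\eta>0$. The triangle inequality applied $T-1$ times (or by induction on the number of summands) then gives
\[
\norm{\bm{\tau}_j}\le\eta\sum_{t=0}^{T-1}\norm{\nabla\mathcal{L}_j(\bm{\Theta}^{(j)}_t)}.
\]

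\textbf{Step 2 (uniform bound).} By hypothesis (which is exactly Assumption~\ref{ass:gradbound} restricted to the trajectory of task $j$), each summand is at most $G$. There are $T$ summands, so the sum is at most $TG$, and hence $\norm{\bm{\tau}_j}\le\eta TG$.

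There is no real obstacle. The only point worth stating explicitly is that the gradient bound is needed only at the iterates $\bm{\Theta}^{(j)}_t$ actually visited, for $t=0,\dots,T-1$. This is precisely what the lemma assumes, so no global bound over all of $\mathbb{R}^d$ is required.
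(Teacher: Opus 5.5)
Your proof is correct and follows essentially the same route as the paper: pull out the scalar $\eta>0$, apply the triangle inequality to the $T$-term sum, and bound each summand by $G$. Your observation that the gradient bound is needed only at the visited iterates $\bm{\Theta}^{(j)}_t$ matches how the lemma is stated.
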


\begin{proof}
By the triangle inequality,
\[
\norm{\bm{\tau}_j}\le\eta\sum_{t=0}^{T-1}\norm{\nabla\mathcal{L}_j(\bm{\Theta}^{(j)}_t)}\le\eta\sum_{t=0}^{T-1} G = \eta T G.\qedhere
\]
\end{proof}

\begin{lemma}[Inner-product upper bound]\label{lem:ip}
Under Assumptions~\ref{ass:L-smooth}--\ref{ass:PL} and $\eta\in(0,1/L]$,
\[
\inp{\nabla\mathcal{L}_i(\bm{\Theta})}{\bm{\tau}_i}\le -\big(1-(1-\eta\mu)^T\big)\big(\mathcal{L}_i(\bm{\Theta})-\mathcal{L}_i^*\big)+\tfrac{L}{2}\norm{\bm{\tau}_i}^2.
\]
\end{lemma}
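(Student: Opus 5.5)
The plan is to combine the two-sided quadratic bound implied by $L$-smoothness with the PL contraction of Lemma~\ref{lem:PL}. The key observation is that, by the definition of the task vector, $\bm{\Theta}+\bm{\tau}_i=\bm{\Theta}^{(i)}_T$ is exactly the $T$-th GD iterate for task $i$ started at $\bm{\Theta}^{(i)}_0=\bm{\Theta}$. This holds because $\bm{\tau}_i=-\eta\sum_{t=0}^{T-1}\nabla\mathcal{L}_i(\bm{\Theta}^{(i)}_t)$ telescopes the GD updates. So the inner product $\inp{\nabla\mathcal{L}_i(\bm{\Theta})}{\bm{\tau}_i}$ can be related to the loss decrease $\mathcal{L}_i(\bm{\Theta}^{(i)}_T)-\mathcal{L}_i(\bm{\Theta})$ along the trajectory.

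\emph{Step 1 (lower quadratic bound).} Assumption~\ref{ass:L-smooth} is stated as an upper bound, but we need the opposite direction. From the Lipschitz-gradient form we write
\[
\mathcal{L}_i(\bm{\Theta}+\Delta)-\mathcal{L}_i(\bm{\Theta})-\inp{\nabla\mathcal{L}_i(\bm{\Theta})}{\Delta}=\int_0^1\inp{\nabla\mathcal{L}_i(\bm{\Theta}+s\Delta)-\nabla\mathcal{L}_i(\bm{\Theta})}{\Delta}\,ds .
\]
By Cauchy--Schwarz and the Lipschitz bound, this is at least $-\tfrac{L}{2}\norm{\Delta}^2$. With $\Delta=\bm{\tau}_i$ this gives
\[
\inp{\nabla\mathcal{L}_i(\bm{\Theta})}{\bm{\tau}_i}\le \mathcal{L}_i(\bm{\Theta}^{(i)}_T)-\mathcal{L}_i(\bm{\Theta})+\tfrac{L}{2}\norm{\bm{\tau}_i}^2 .
\]

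\emph{Step 2 (PL contraction).} Write $\mathcal{L}_i(\bm{\Theta}^{(i)}_T)-\mathcal{L}_i(\bm{\Theta})=\bigl(\mathcal{L}_i(\bm{\Theta}^{(i)}_T)-\mathcal{L}_i^*\bigr)-\bigl(\mathcal{L}_i(\bm{\Theta})-\mathcal{L}_i^*\bigr)$. Since $\eta\in(0,1/L]$, Lemma~\ref{lem:PL} bounds the first bracket by $(1-\eta\mu)^T(\mathcal{L}_i(\bm{\Theta})-\mathcal{L}_i^*)$. The difference is therefore at most $-\bigl(1-(1-\eta\mu)^T\bigr)(\mathcal{L}_i(\bm{\Theta})-\mathcal{L}_i^*)$. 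Substituting into Step 1 yields the claim.

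The only real obstacle is Step 1. The paper calls the one-sided descent lemma "equivalent" to $L$-smoothness, but the proof needs the reverse inequality, so I would derive it explicitly from the gradient-Lipschitz form via the integral identity above rather than cite the displayed one-sided bound. I would also remark that $1-\eta\mu\in[0,1)$, since PL together with $L$-smoothness forces $\mu\le L$. This keeps the coefficient $1-(1-\eta\mu)^T$ in $(0,1]$, although the inequality itself does not require the sign.
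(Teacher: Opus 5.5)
Your proposal is correct and matches the paper's proof. Both apply the quadratic lower bound implied by $L$-Lipschitz gradients with $\Delta=\bm{\tau}_i$, identify $\bm{\Theta}+\bm{\tau}_i$ with the $T$-th GD iterate, and then invoke Lemma~\ref{lem:PL} to bound $\mathcal{L}_i(\bm{\Theta}+\bm{\tau}_i)-\mathcal{L}_i(\bm{\Theta})$ by $-\bigl(1-(1-\eta\mu)^T\bigr)\bigl(\mathcal{L}_i(\bm{\Theta})-\mathcal{L}_i^*\bigr)$; your explicit integral derivation of the lower bound simply makes rigorous the step the paper asserts in one line, rather than relying on the one-sided inequality called ``equivalent'' in Assumption~\ref{ass:L-smooth}.
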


\begin{proof}
$L$-Lipschitz gradients (Assumption~\ref{ass:L-smooth}) also imply the quadratic lower bound; applying it with $\Delta=\bm{\tau}_i$,
\[
\mathcal{L}_i(\bm{\Theta}+\bm{\tau}_i)\ge\mathcal{L}_i(\bm{\Theta})+\inp{\nabla\mathcal{L}_i(\bm{\Theta})}{\bm{\tau}_i}-\tfrac{L}{2}\norm{\bm{\tau}_i}^2,
\]
and rearrange to
\[
\inp{\nabla\mathcal{L}_i(\bm{\Theta})}{\bm{\tau}_i}\le\mathcal{L}_i(\bm{\Theta}+\bm{\tau}_i)-\mathcal{L}_i(\bm{\Theta})+\tfrac{L}{2}\norm{\bm{\tau}_i}^2.
\]
Since $\bm{\Theta}+\bm{\tau}_i=\bm{\Theta}_T$, Lemma~\ref{lem:PL} gives $\mathcal{L}_i(\bm{\Theta}_T)-\mathcal{L}_i^*\le(1-\eta\mu)^T(\mathcal{L}_i(\bm{\Theta})-\mathcal{L}_i^*)$, hence
\[
\mathcal{L}_i(\bm{\Theta}+\bm{\tau}_i)-\mathcal{L}_i(\bm{\Theta})\le -\big(1-(1-\eta\mu)^T\big)\big(\mathcal{L}_i(\bm{\Theta})-\mathcal{L}_i^*\big),
\]
which combined with the rearranged smoothness inequality yields the claim.
\end{proof}

\subsubsection{Main Theorems}

\begin{theorem}[Finite-step parameter-drift bound]\label{thm:fixed-eta}
Consider task $i$ trained for $T$ iterations of gradient descent with a fixed step size $\eta\in(0,1/L]$, and let $\gamma:=1-\eta\mu\in(0,1)$. Then the merged update $\bm{\tau}_m=\sum_{j=1}^N\alpha_j\bm{\tau}_j$ satisfies
\[
\mathcal{L}_i(\bm{\Theta}+\bm{\tau}_m)\le C_i+\mathcal{O}(\gamma^T)+\mathcal{O}(\delta\eta T)+\mathcal{O}(\eta^2 T^2),
\]
where $\mathcal{O}(\gamma^T)$ is the residual error from incomplete convergence on task~$i$, $\mathcal{O}(\delta\eta T)$ is the cross-task interference term, and $\mathcal{O}(\eta^2 T^2)$ is the curvature term from $L$-smoothness.
\end{theorem}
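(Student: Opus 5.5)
The plan is to apply the $L$-smooth upper bound (Assumption~\ref{ass:L-smooth}) at the base point $\bm{\Theta}$ with perturbation $\Delta=\bm{\tau}_m$:
\[
\mathcal{L}_i(\bm{\Theta}+\bm{\tau}_m)\le\mathcal{L}_i(\bm{\Theta})+\alpha_i\inp{\nabla\mathcal{L}_i(\bm{\Theta})}{\bm{\tau}_i}+\sum_{j\neq i}\alpha_j\inp{\nabla\mathcal{L}_i(\bm{\Theta})}{\bm{\tau}_j}+\tfrac{L}{2}\norm{\bm{\tau}_m}^2.
\]
Each of the three summands after $\mathcal{L}_i(\bm{\Theta})$ is then bounded separately, and each bound corresponds to one of the $\mathcal{O}(\cdot)$ terms in the statement. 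The constant $C_i$ will absorb the $T$-independent quantities $\mathcal{L}_i^*$ and $\alpha_i$-weighted pieces of $\mathcal{L}_i(\bm{\Theta})$.

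\emph{Self term.} I will invoke Lemma~\ref{lem:ip}. It gives
\[
\alpha_i\inp{\nabla\mathcal{L}_i(\bm{\Theta})}{\bm{\tau}_i}\le-\alpha_i(1-\gamma^T)\bigl(\mathcal{L}_i(\bm{\Theta})-\mathcal{L}_i^*\bigr)+\tfrac{\alpha_iL}{2}\norm{\bm{\tau}_i}^2.
\]
Adding $\mathcal{L}_i(\bm{\Theta})$ produces
\[
\mathcal{L}_i(\bm{\Theta})-\alpha_i\bigl(\mathcal{L}_i(\bm{\Theta})-\mathcal{L}_i^*\bigr)+\alpha_i\gamma^T\bigl(\mathcal{L}_i(\bm{\Theta})-\mathcal{L}_i^*\bigr).
\]
I define $C_i:=\mathcal{L}_i(\bm{\Theta})-\alpha_i(\mathcal{L}_i(\bm{\Theta})-\mathcal{L}_i^*)$. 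The remainder is then the $\mathcal{O}(\gamma^T)$ term, with constant $\alpha_i(\mathcal{L}_i(\bm{\Theta})-\mathcal{L}_i^*)$. By Lemma~\ref{lem:tau-norm}, the leftover $\tfrac{\alpha_iL}{2}\norm{\bm{\tau}_i}^2$ is at most $\tfrac{\alpha_iL}{2}\eta^2T^2G^2$, so it goes into the curvature term.

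\emph{Cross terms.} For $j\neq i$, Lemma~\ref{lem:cos} gives
\[
\bigl|\inp{\nabla\mathcal{L}_i(\bm{\Theta})}{\bm{\tau}_j}\bigr|\le\delta\norm{\nabla\mathcal{L}_i(\bm{\Theta})}\norm{\bm{\tau}_j}.
\]
Assumption~\ref{ass:gradbound} and Lemma~\ref{lem:tau-norm} bound the right-hand side by $\delta G\cdot\eta TG$. Summing over $j$ yields at most $\delta\eta TG^2\sum_{j\neq i}\alpha_j$, which is the $\mathcal{O}(\delta\eta T)$ interference term. If the gradient or $\bm{\tau}_j$ vanishes, the inner product is zero and the bound is trivial.

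\emph{Curvature term.} By the triangle inequality and Lemma~\ref{lem:tau-norm}, $\norm{\bm{\tau}_m}\le\sum_j\alpha_j\norm{\bm{\tau}_j}\le\eta TG\sum_j\alpha_j$. Hence $\tfrac{L}{2}\norm{\bm{\tau}_m}^2\le\tfrac{L}{2}(\sum_j\alpha_j)^2G^2\eta^2T^2=\mathcal{O}(\eta^2T^2)$. Combining the three estimates proves the theorem.

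The main obstacle is bookkeeping rather than analysis. The argument needs a sensible definition of $C_i$, and it must hold the constants ($\alpha_j$, $G$, $L$, the initial gap) fixed as $T$ varies. It must also check that Assumption~\ref{ass:gradbound} covers both the base point $\bm{\Theta}$ and the trajectories, since the base point is the start of every trajectory. One more check is needed: $\gamma\in(0,1)$ requires $\eta\mu<1$. This follows from $\eta\le1/L$ together with $\mu\le L$, which holds whenever the PL and smoothness assumptions are both satisfied for nonconstant losses.
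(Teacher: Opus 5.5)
Your proposal is correct and is essentially the paper's proof: the same $L$-smooth expansion at $\bm{\Theta}$ with $\Delta=\bm{\tau}_m$, the same constant $C_i=\mathcal{L}_i(\bm{\Theta})-\alpha_i(\mathcal{L}_i(\bm{\Theta})-\mathcal{L}_i^*)$, and the same three bounds. These are Lemma~\ref{lem:ip} plus Lemma~\ref{lem:tau-norm} for the self term, Lemma~\ref{lem:cos} with Assumption~\ref{ass:gradbound} and Lemma~\ref{lem:tau-norm} for the cross terms, and the triangle inequality for the curvature term. One small nit in your closing remark: $\eta\le 1/L$ and $\mu\le L$ only give $\eta\mu\le 1$ (so $\gamma\ge 0$), but the theorem already takes $\gamma\in(0,1)$ as a hypothesis, and $\gamma=0$ would only make the residual term vanish.
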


\begin{proof}
Define the $\eta,T$-independent constant
\[
C_i:=\mathcal{L}_i(\bm{\Theta})-\alpha_i\big(\mathcal{L}_i(\bm{\Theta})-\mathcal{L}_i^*\big).
\]
By $L$-smoothness,
\[
\mathcal{L}_i(\bm{\Theta}+\bm{\tau}_m)\le\mathcal{L}_i(\bm{\Theta})+\inp{\nabla\mathcal{L}_i(\bm{\Theta})}{\bm{\tau}_m}+\tfrac{L}{2}\norm{\bm{\tau}_m}^2.
\]
Decompose the inner product as
\[
\inp{\nabla\mathcal{L}_i}{\bm{\tau}_m}=\alpha_i\inp{\nabla\mathcal{L}_i}{\bm{\tau}_i}+\sum_{j\ne i}\alpha_j\inp{\nabla\mathcal{L}_i}{\bm{\tau}_j}.
\]
For the self term, Lemma~\ref{lem:ip} provides a constant part absorbed into $C_i$ and a residual term of order $\mathcal{O}(\gamma^T)$, plus a curvature correction $\mathcal{O}(\eta^2 T^2)$ via Lemma~\ref{lem:tau-norm}. For the cross terms, Lemma~\ref{lem:cos} together with Assumption~\ref{ass:gradbound} gives
\[
\big|\inp{\nabla\mathcal{L}_i}{\bm{\tau}_j}\big|\le\delta\eta T G^2,
\]
so the sum over $j\ne i$ is $\mathcal{O}(\delta\eta T)$. Finally, $\norm{\bm{\tau}_m}\le\eta T G\sum_j\alpha_j$ implies the smoothness term is $\mathcal{O}(\eta^2 T^2)$. Combining all contributions yields the stated bound.
\end{proof}

\begin{theorem}[Near-convergence regime]\label{thm:conv}
Suppose the residual PL error after $T$ steps is below a tolerance $\zeta>0$:
\[
(1-\eta\mu)^T\big(\mathcal{L}_i(\bm{\Theta})-\mathcal{L}_i^*\big)\le\zeta,
\]
equivalently,
\[
T\ge\frac{\ln\!\big((\mathcal{L}_i(\bm{\Theta})-\mathcal{L}_i^*)/\zeta\big)}{-\ln(1-\eta\mu)}.
\]
Then
\[
\mathcal{L}_i(\bm{\Theta}+\bm{\tau}_m)\le C_i+\mathcal{O}(\zeta)+\mathcal{O}(\delta\eta T)+\mathcal{O}(\eta^2 T^2),
\]
with the same $C_i$ as in Theorem~\ref{thm:fixed-eta}.
\end{theorem}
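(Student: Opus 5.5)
The plan is to reuse the proof of Theorem~\ref{thm:fixed-eta} essentially unchanged and replace only the step that produced the $\mathcal{O}(\gamma^T)$ residual. First I apply $L$-smoothness (Assumption~\ref{ass:L-smooth}) at $\bm{\Theta}$ with $\Delta=\bm{\tau}_m$:
\[
\mathcal{L}_i(\bm{\Theta}+\bm{\tau}_m)\le\mathcal{L}_i(\bm{\Theta})+\alpha_i\inp{\nabla\mathcal{L}_i(\bm{\Theta})}{\bm{\tau}_i}+\sum_{j\neq i}\alpha_j\inp{\nabla\mathcal{L}_i(\bm{\Theta})}{\bm{\tau}_j}+\tfrac{L}{2}\norm{\bm{\tau}_m}^2 .
\]
This splits the bound into a self term, the cross terms and a curvature term, exactly as before.

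For the self term I use Lemma~\ref{lem:ip}, but I stop one step earlier so that the hypothesis can be inserted directly. Smoothness and $\bm{\Theta}+\bm{\tau}_i=\bm{\Theta}_T$ give
\[
\inp{\nabla\mathcal{L}_i}{\bm{\tau}_i}\le\mathcal{L}_i(\bm{\Theta}_T)-\mathcal{L}_i(\bm{\Theta})+\tfrac{L}{2}\norm{\bm{\tau}_i}^2 .
\]
I then write $\mathcal{L}_i(\bm{\Theta}_T)-\mathcal{L}_i(\bm{\Theta})=-(\mathcal{L}_i(\bm{\Theta})-\mathcal{L}_i^*)+(\mathcal{L}_i(\bm{\Theta}_T)-\mathcal{L}_i^*)$. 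By Lemma~\ref{lem:PL} and the assumed tolerance, the second piece is at most $(1-\eta\mu)^T(\mathcal{L}_i(\bm{\Theta})-\mathcal{L}_i^*)\le\zeta$. Multiplying by $\alpha_i\ge0$, the term $-\alpha_i(\mathcal{L}_i(\bm{\Theta})-\mathcal{L}_i^*)$ combines with $\mathcal{L}_i(\bm{\Theta})$ to give exactly $C_i$, and what remains is $\alpha_i\zeta=\mathcal{O}(\zeta)$. The norm $\tfrac{L}{2}\alpha_i\norm{\bm{\tau}_i}^2$ is bounded by $\tfrac{L}{2}\alpha_i\eta^2T^2G^2$ via Lemma~\ref{lem:tau-norm}. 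The equivalence with the lower bound on $T$ is just a rearrangement: take logarithms and divide by $-\ln(1-\eta\mu)>0$, which is valid since $0<\eta\mu\le\mu/L\le1$. The edge case $\eta\mu=1$ makes the residual vanish trivially.

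The cross terms and the curvature term are handled exactly as in Theorem~\ref{thm:fixed-eta}. Lemma~\ref{lem:cos}, Assumption~\ref{ass:gradbound} and Lemma~\ref{lem:tau-norm} give $|\inp{\nabla\mathcal{L}_i}{\bm{\tau}_j}|\le\delta G\norm{\bm{\tau}_j}\le\delta\eta TG^2$, so their sum is $\mathcal{O}(\delta\eta T)$. The triangle inequality gives $\norm{\bm{\tau}_m}\le\eta TG\sum_j\alpha_j$, so the curvature term is $\mathcal{O}(\eta^2T^2)$. Degenerate cases, where $\nabla\mathcal{L}_i(\bm{\Theta})=\mathbf 0$ or some $\bm{\tau}_j=\mathbf 0$, make the corresponding inner products zero and need no cosine bound.

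The main obstacle is bookkeeping rather than analysis. I need to check that the hidden constants in $\mathcal{O}(\cdot)$ depend only on $L$, $G$ and the $\alpha_j$, and not on $T$ or $\zeta$. I also need to be careful that $T$ is not treated as a constant: the hypothesis forces $T$ to grow like $\log(1/\zeta)$, so the $\mathcal{O}(\delta\eta T)$ and $\mathcal{O}(\eta^2T^2)$ terms must be kept explicit and not absorbed into $\mathcal{O}(\zeta)$.
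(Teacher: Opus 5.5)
Your proposal is correct and follows the paper's own argument. The paper also reruns the proof of Theorem~\ref{thm:fixed-eta}, replaces the residual $\mathcal{O}(\gamma^T)$ coming from Lemma~\ref{lem:ip} by $\mathcal{O}(\zeta)$ using the near-convergence hypothesis, and leaves the cross-task and curvature terms unchanged. You make that substitution explicit, so the residual becomes $\alpha_i\zeta$ once $-\alpha_i(\mathcal{L}_i(\bm{\Theta})-\mathcal{L}_i^*)$ is absorbed into $C_i$, and you add the routine check of the equivalent lower bound on $T$.
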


\begin{proof}
Starting from Theorem~\ref{thm:fixed-eta}, replace the residual term $\mathcal{O}(\gamma^T)$ by $\mathcal{O}(\zeta)$ using the near-convergence assumption. The cross-task and curvature terms are unchanged.
\end{proof}

\begin{remark}\label{rem:drift}
At a fixed learning rate, the improvement on the target task (captured by $1-\gamma^T$) typically outweighs the influence of other task vectors in the early training stage, especially when those vectors are close to orthogonal (small $\varepsilon$, hence small $\delta$). As training approaches convergence, the negative impact from cross-task interference grows linearly in $T$ as $\mathcal{O}(\delta\eta T)$, and curvature errors grow quadratically as $\mathcal{O}(\eta^2 T^2)$; even when individual single-task losses keep decreasing, the merged loss can worsen due to accumulated interference. Once $(1-\eta\mu)^T(\mathcal{L}_i-\mathcal{L}_i^*)\le\zeta$, the dominant residual terms are interference and curvature; reducing directional leakage (small $\delta$) and limiting $\eta T$ are therefore essential for high-quality merging. This motivates the conference benchmark's choice to fine-tune each MLLM expert for one epoch with a reduced learning rate, and is consistent with prior empirical observations that less intensive fine-tuning often yields stronger merging~\cite{yu2024language,li2025map} and that fine-tuned models tend to converge near the base model~\cite{merlin2023happens,chung2024scaling,wu2023pi}.
\end{remark}

\subsubsection{Empirical Fine-Tuning Step Sweep}
\label{app:rethinking}
To illustrate Theorem~\ref{thm:fixed-eta} empirically, we run the standard CLIP-ViT-B/32 merging benchmark following the FusionBench fine-tuning setup~\cite{tang2024fusionbench}. We train each task expert with Adam at learning rate $10^{-5}$ for $4{,}000$ steps with batch size $32$, and save checkpoints every $500$ steps. Across the eight TA8 tasks, single-task accuracy on the corresponding test split typically converges around $3{,}000$ steps (Fig.~\ref{fig:clip-steps-pertask}), whereas merged accuracy peaks earlier and then declines as fine-tuning proceeds (Fig.~\ref{fig:clip-merge-vs-steps}).

\begin{figure}[tbh]
\centering
\includegraphics[width=0.55\textwidth]{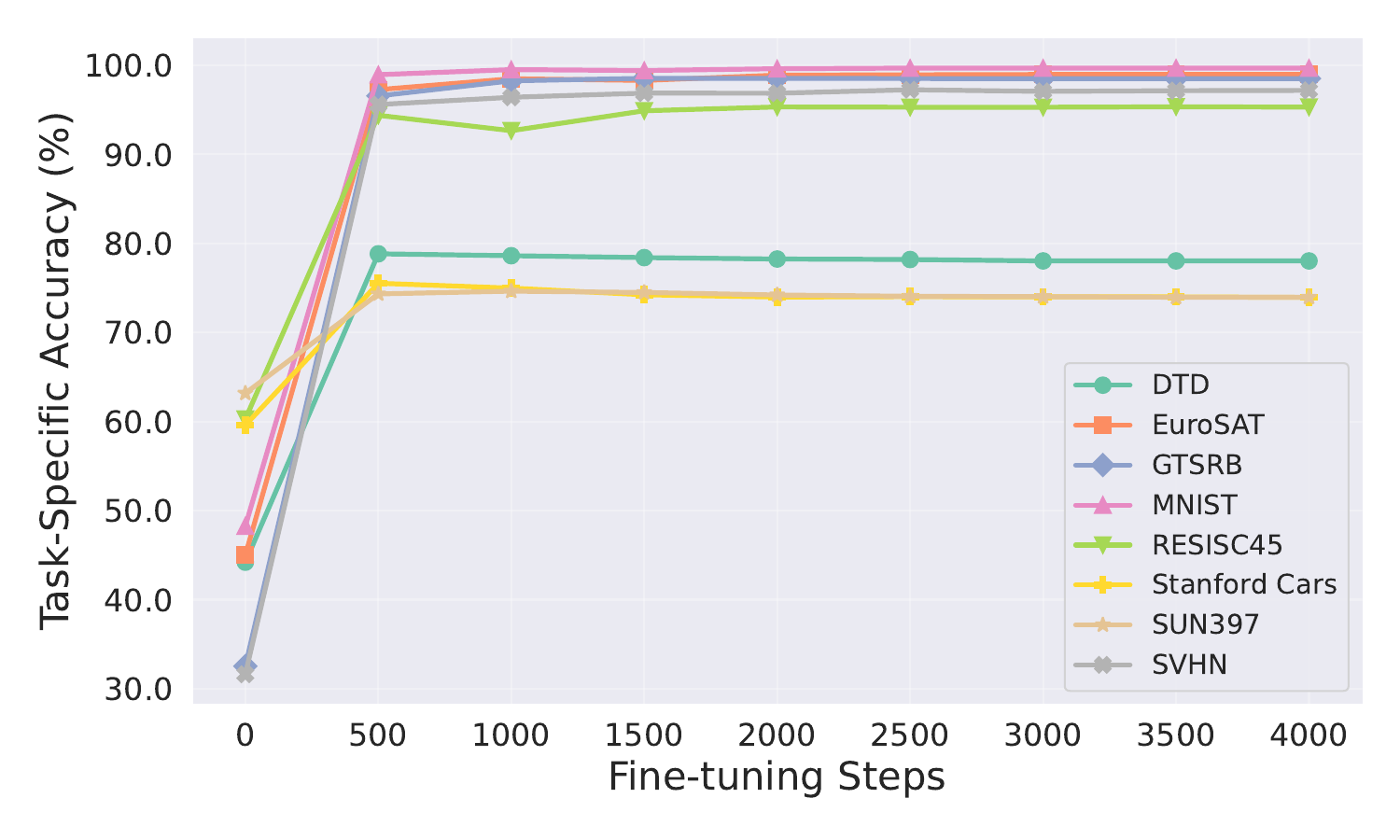}
\caption{\textbf{Single-task fine-tuning accuracy of CLIP-ViT-B/32 on the eight TA8 tasks as a function of fine-tuning steps.} Accuracy converges around $3{,}000$ steps on every task, providing the per-task ground truth against which merging accuracy is measured in Fig.~\ref{fig:clip-merge-vs-steps}.}
\label{fig:clip-steps-pertask}
\end{figure}

\begin{figure*}[tbh]
\centering
\resizebox{0.9\textwidth}{!}{%
\begin{minipage}{\textwidth}
\centering
\subfloat[Task Arithmetic]{\includegraphics[width=0.48\textwidth]{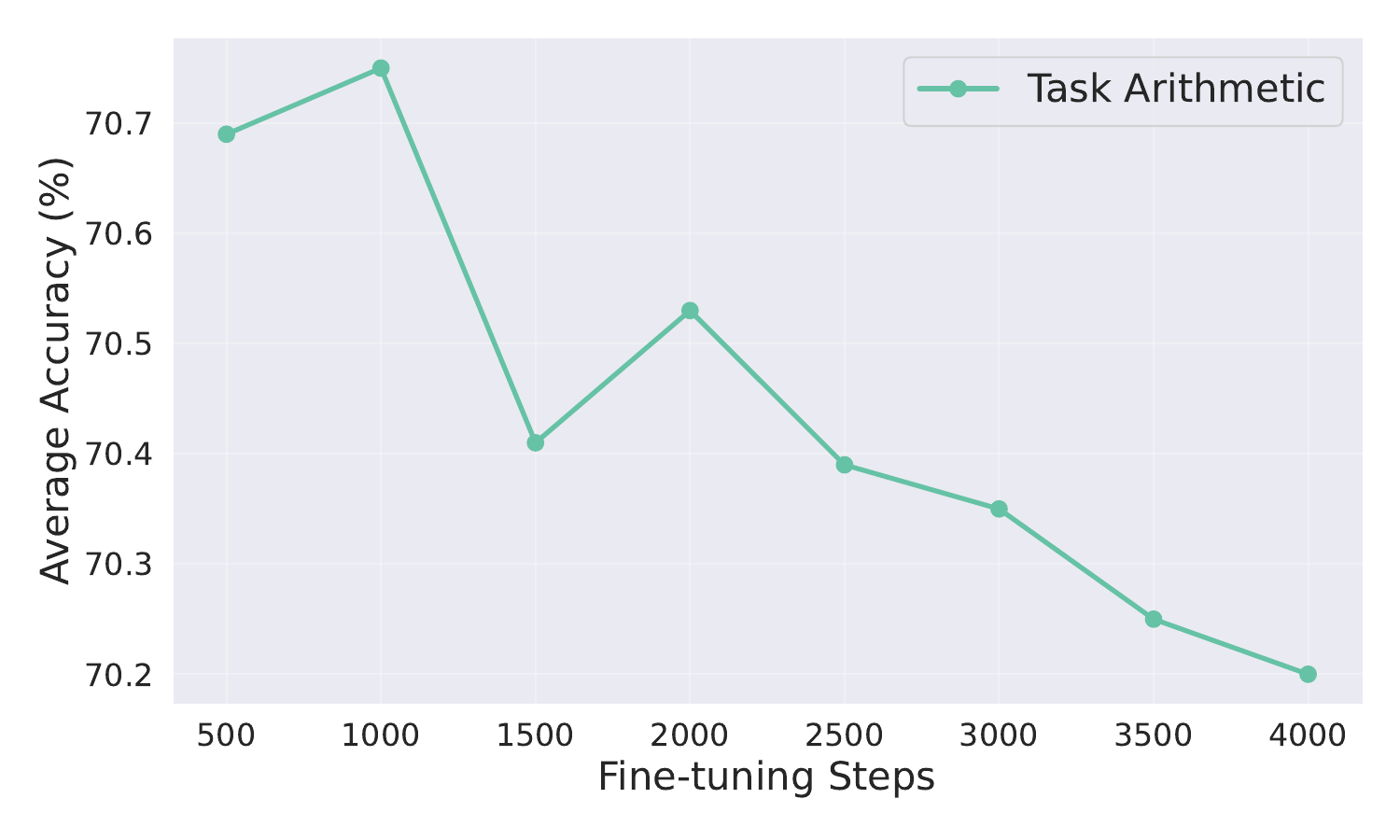}}\hfill
\subfloat[Weight Average]{\includegraphics[width=0.48\textwidth]{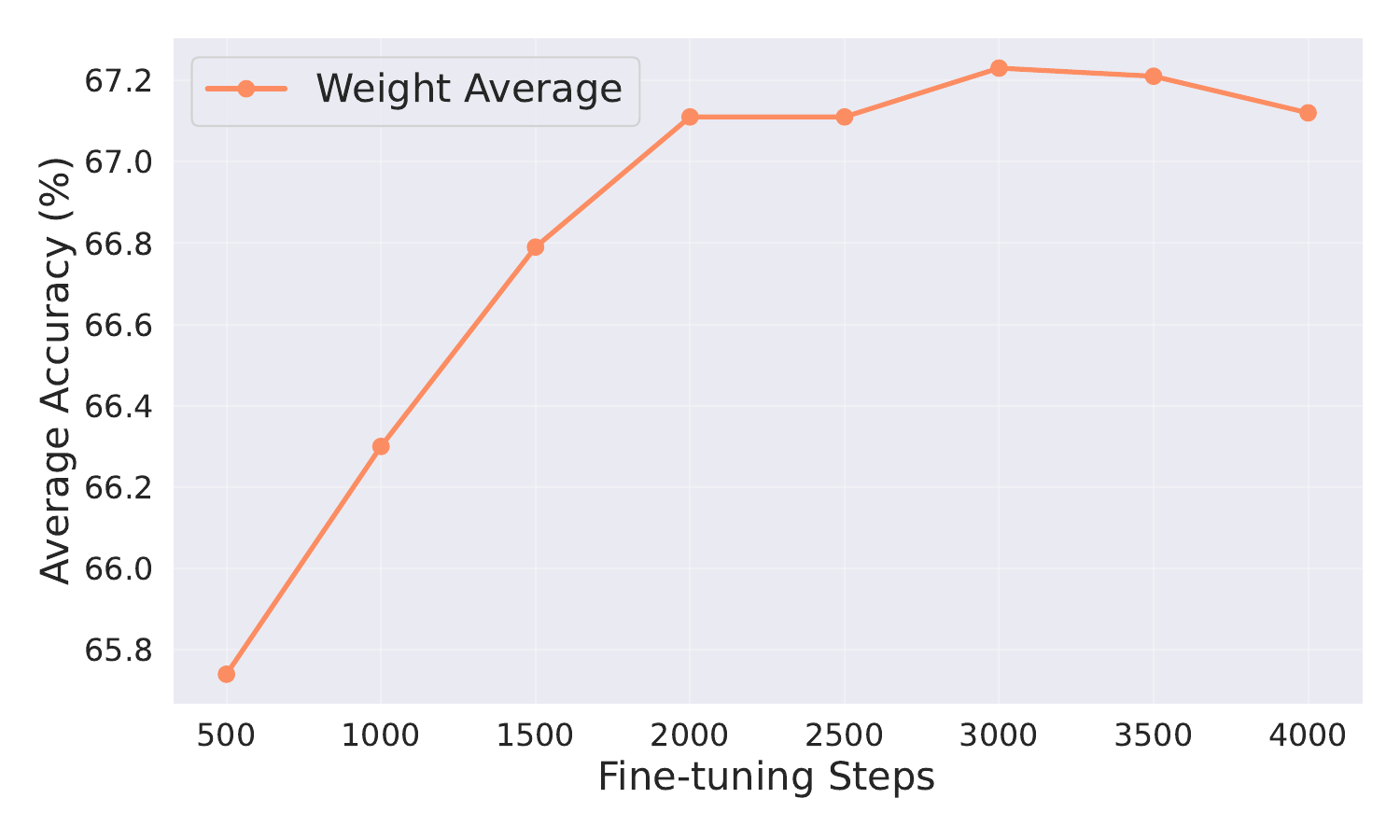}}\\[0.3em]
\subfloat[DARE]{\includegraphics[width=0.48\textwidth]{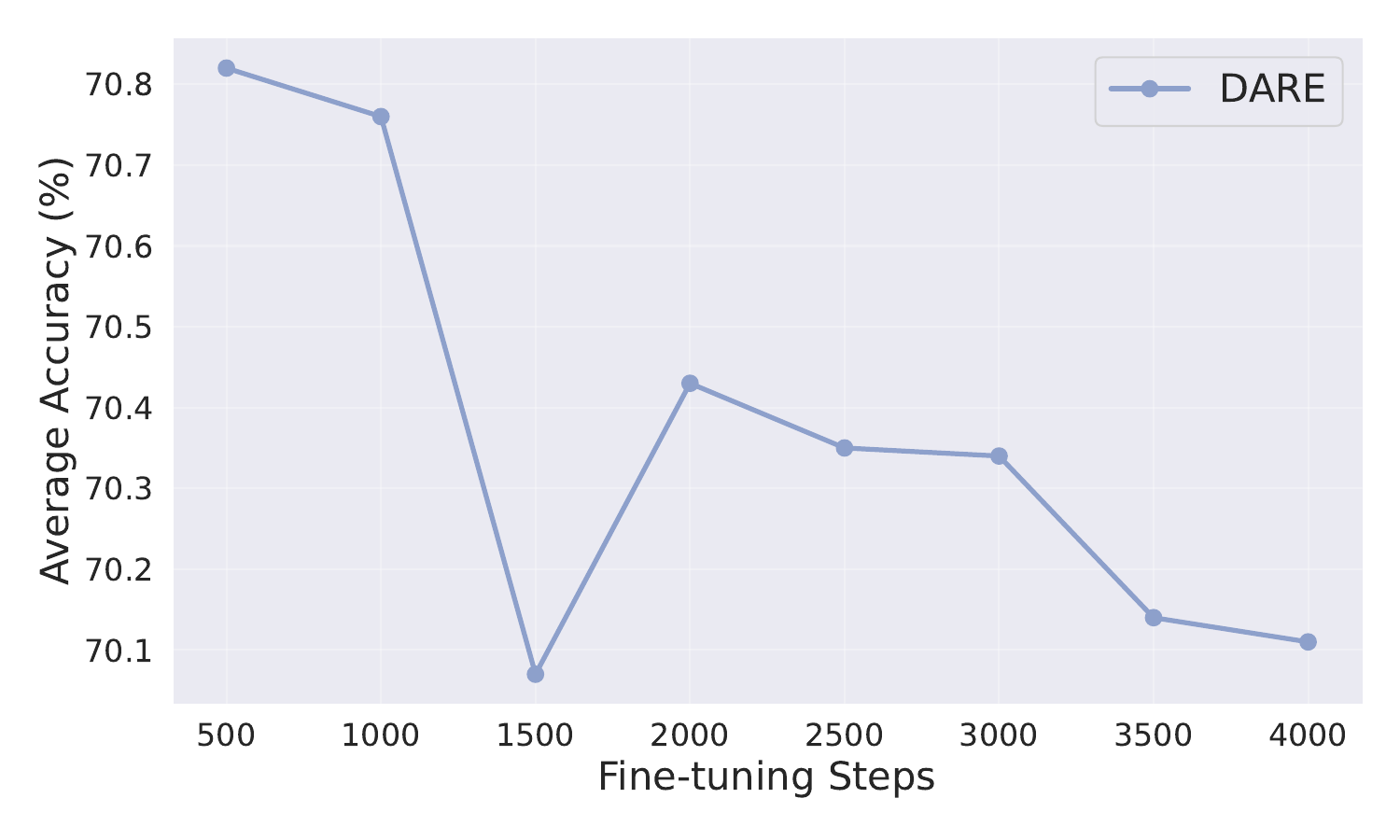}}\hfill
\subfloat[TSV Merging]{\includegraphics[width=0.48\textwidth]{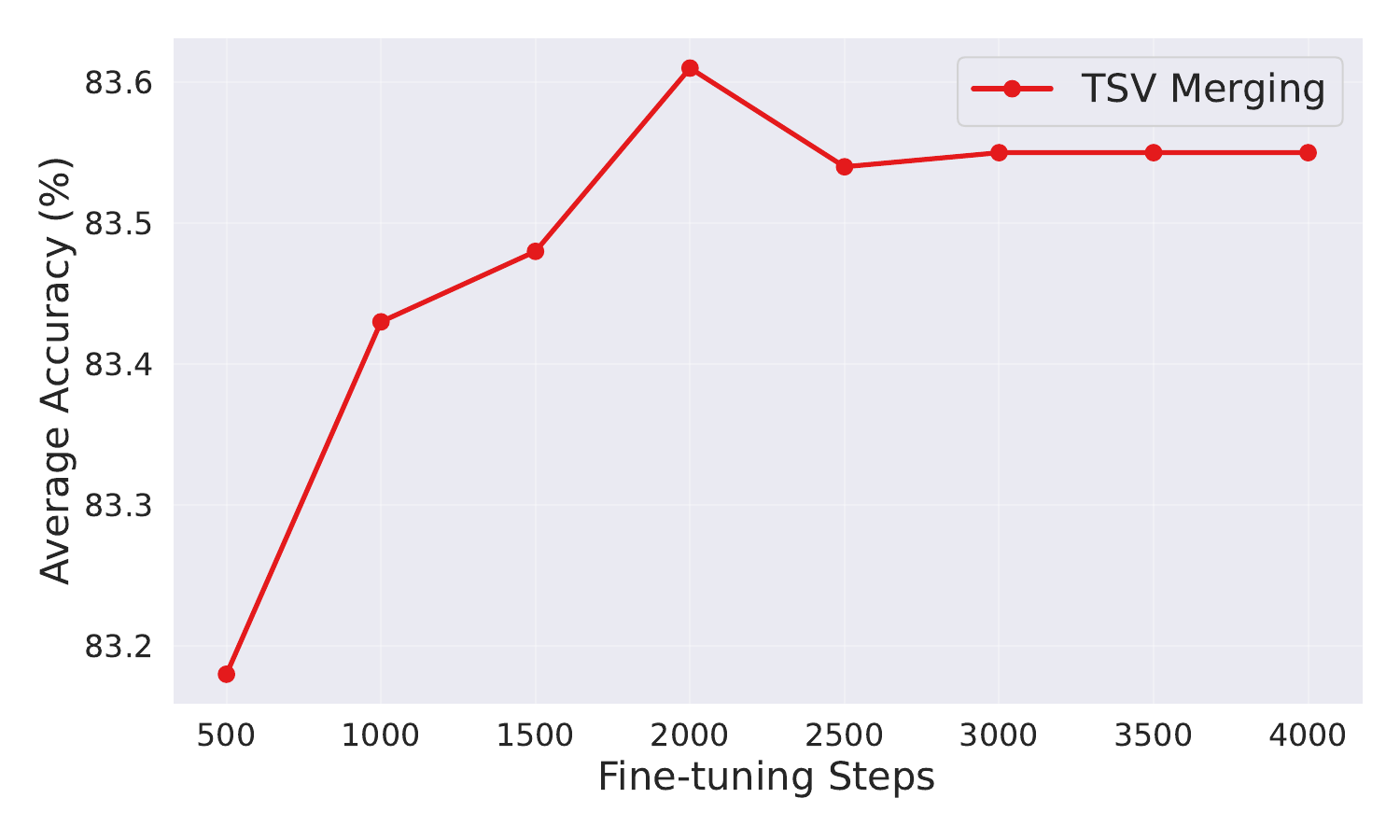}}
\end{minipage}}
\caption{\textbf{Average merging accuracy on CLIP-ViT-B/32 TA8 against the fine-tuning step at which each expert was checkpointed.} Across all four merging methods, accuracy first rises and then declines as fine-tuning progresses, with the peak occurring well before single-task convergence. This unimodal pattern is the empirical signature of Theorem~\ref{thm:fixed-eta}: in the early phase the target-task improvement $1-\gamma^T$ dominates; once the loss approaches its minimum, the cross-task interference $\mathcal{O}(\delta\eta T)$ and curvature $\mathcal{O}(\eta^2 T^2)$ terms grow large enough to outweigh the single-task gains, and merged accuracy decreases. MLLM training is organized in epochs rather than steps, so we fix the number of epochs to $1$ and reduce the learning rate, which keeps fine-tuned experts close to the base model in parameter space while still improving on the target task.}
\label{fig:clip-merge-vs-steps}
\end{figure*}

%% file: appendices/C_additional_results.tex
\section{Additional Analyses}
\label{app:tables}
\label{app:additional-results}

This appendix provides protocol details for the multimodal benchmark, per-task vision/language results, ablations of the closed-form solvers, spectral diagnostics, and additional MLLM scaling and checkpoint-merging studies.

\subsection{Per-Task Vision Results on CLIP-ViT}
This subsection expands the CLIP-ViT TA8 summary in Table~\ref{tab:clip-summary} with per-task results for the three evaluated backbones. Per-task accuracy on the eight vision tasks for CLIP-ViT-B/32, B/16, and L/14 is reported in Tables~\ref{tab:clip-b32},~\ref{tab:clip-b16}, and~\ref{tab:clip-l14}.

\begin{table*}[!t]
\centering
\caption{CLIP-ViT-B/32: per-task accuracy ($\%$) on the 8 vision tasks. Best per column in \textbf{bold}; second-best avg \underline{underlined}.}
\label{tab:clip-b32}
\setlength{\tabcolsep}{4pt}
\renewcommand{\arraystretch}{1.05}
\resizebox{0.8\textwidth}{!}{%
\begin{tabular}{l|cccccccc|c}
\toprule
\textbf{Method} & \textbf{SUN397} & \textbf{Cars} & \textbf{RESISC45} & \textbf{EuroSAT} & \textbf{SVHN} & \textbf{GTSRB} & \textbf{MNIST} & \textbf{DTD} & \textbf{Avg.} \\
\midrule
Weight Average                   & 65.44 & 62.43 & 70.63 & 75.74 & 64.51 & 54.96 & 86.28 & 50.59 & 66.32 \\
Task Arithmetic                  & 57.01 & 55.70 & 64.75 & 73.30 & 77.93 & 68.50 & 96.07 & 47.13 & 67.55 \\
TIES Merging                     & 67.01 & 64.15 & 74.30 & 74.52 & 77.74 & 69.38 & 94.13 & 53.99 & 71.90 \\
TA w/ DARE                       & 57.06 & 55.40 & 64.48 & 73.30 & 78.07 & 68.38 & 96.06 & 46.97 & 67.46 \\
TIES w/ DARE                     & 39.25 & 43.10 & 52.65 & 62.37 & 81.39 & 71.48 & 97.47 & 39.95 & 60.96 \\
TSV Merging                      & 67.62 & 71.65 & 84.70 & 93.44 & 91.90 & 92.53 & 98.86 & 63.83 & 83.07 \\
Iso-C                            & \textbf{71.66} & \textbf{73.44} & 84.76 & 88.04 & 78.69 & 84.62 & 96.69 & 65.21 & 80.39 \\
$\tau^{\rm cf}=DC^\dagger$        & 66.82 & 70.25 & 82.48 & 90.11 & 93.27 & 92.83 & 99.13 & 63.72 & 82.33 \\
\rowcolor{gray!10}WUDI Merging                     & 68.47 & 72.68 & 84.44 & 95.26 & 94.90 & 95.00 & 99.29 & 67.02 & 84.63 \\
\rowcolor{gray!10}OptMerge                         & 67.16 & 72.11 & 85.25 & 94.85 & \textbf{95.27} & \textbf{95.66} & \textbf{99.33} & 66.60 & 84.53 \\
\midrule
\rowcolor{gray!10}\swudi{}-soft (ablation) & 69.29 & 72.50 & 86.35 & 95.44 & 94.53 & 94.76 & 99.27 & 68.62 & 85.10 \\
\rowcolor{gray!10}\swudi{}    & 70.06 & 73.03 & \textbf{87.27} & \textbf{95.81} & 94.22 & 95.00 & 99.29 & 69.68 & \textbf{85.55} \\
\rowcolor{gray!10}\aswudi{}          & 69.99 & 72.81 & 87.22 & 95.70 & 94.36 & 95.00 & 99.30 & \textbf{69.84} & \underline{85.53} \\
\bottomrule
\end{tabular}}
\end{table*}

\begin{table*}[!t]
\centering
\caption{CLIP-ViT-B/16: per-task accuracy ($\%$) on the 8 vision tasks. Best per column in \textbf{bold}; second-best avg \underline{underlined}.}
\label{tab:clip-b16}
\setlength{\tabcolsep}{4pt}
\renewcommand{\arraystretch}{1.05}
\resizebox{0.8\textwidth}{!}{%
\begin{tabular}{l|cccccccc|c}
\toprule
\textbf{Method} & \textbf{SUN397} & \textbf{Cars} & \textbf{RESISC45} & \textbf{EuroSAT} & \textbf{SVHN} & \textbf{GTSRB} & \textbf{MNIST} & \textbf{DTD} & \textbf{Avg.} \\
\midrule
Weight Average                   & 68.74 & 69.05 & 75.06 & 83.30 & 74.98 & 62.57 & 93.75 & 51.17 & 72.33 \\
Task Arithmetic                  & 65.91 & 68.31 & 75.49 & 84.52 & 88.87 & 81.96 & 98.08 & 53.99 & 77.14 \\
TIES Merging                     & 70.65 & 71.23 & 79.89 & 87.52 & 83.29 & 76.29 & 96.43 & 55.48 & 77.60 \\
TA w/ DARE                       & 65.88 & 68.28 & 75.54 & 84.26 & 88.96 & 82.16 & 98.10 & 53.99 & 77.15 \\
TIES w/ DARE                     & 56.10 & 61.14 & 70.68 & 77.59 & 92.22 & 85.96 & 98.76 & 51.91 & 74.30 \\
TSV Merging                      & 73.12 & 80.74 & 89.75 & 96.19 & 94.15 & 94.10 & 99.08 & 69.68 & 87.10 \\
Iso-C                            & 75.13 & 81.06 & 90.35 & 94.70 & 86.21 & 89.13 & 97.68 & 66.28 & 85.07 \\
$\tau^{\rm cf}=DC^\dagger$        & 73.61 & 79.37 & 91.65 & 96.93 & 94.28 & 96.41 & 99.32 & 72.77 & 88.04 \\
\rowcolor{gray!10}WUDI Merging                     & 75.07 & 82.10 & 92.13 & 97.85 & 95.92 & 96.65 & 99.37 & 74.26 & 89.17 \\
\rowcolor{gray!10}OptMerge                         & 74.91 & \textbf{82.51} & \textbf{92.92} & 97.81 & \textbf{96.15} & \textbf{97.40} & \textbf{99.40} & 74.84 & 89.49 \\
\midrule
\rowcolor{gray!10}\swudi{}-soft (ablation) & 75.79 & 81.95 & 92.54 & \textbf{97.89} & 95.86 & 96.84 & \textbf{99.40} & \textbf{75.96} & \underline{89.53} \\
\rowcolor{gray!10}\swudi{}     & \textbf{76.01} & 82.18 & 92.73 & \textbf{97.89} & 95.74 & 97.01 & 99.37 & 75.64 & \textbf{89.57} \\
\rowcolor{gray!10}\aswudi{}       & 75.91 & 81.97 & 92.81 & 97.78 & 95.84 & 96.86 & 99.34 & 75.43 & 89.49 \\
\bottomrule
\end{tabular}}
\end{table*}

\begin{table*}[!t]
\centering
\caption{CLIP-ViT-L/14: per-task accuracy ($\%$) on the 8 vision tasks. Best per column in \textbf{bold}; second-best avg \underline{underlined}.}
\label{tab:clip-l14}
\setlength{\tabcolsep}{4pt}
\renewcommand{\arraystretch}{1.05}
\resizebox{0.8\textwidth}{!}{%
\begin{tabular}{l|cccccccc|c}
\toprule
\textbf{Method} & \textbf{SUN397} & \textbf{Cars} & \textbf{RESISC45} & \textbf{EuroSAT} & \textbf{SVHN} & \textbf{GTSRB} & \textbf{MNIST} & \textbf{DTD} & \textbf{Avg.} \\
\midrule
Weight Average                   & 72.53 & 81.54 & 82.32 & 88.52 & 81.63 & 74.02 & 96.62 & 61.76 & 79.87 \\
Task Arithmetic                  & 72.02 & 79.00 & 80.57 & 84.63 & 87.49 & 83.48 & 98.05 & 58.51 & 80.47 \\
TIES Merging                     & 74.77 & 83.16 & 86.51 & 89.70 & 89.67 & 85.19 & 97.75 & 63.88 & 83.83 \\
TA w/ DARE                       & 72.09 & 78.85 & 80.46 & 84.48 & 87.60 & 83.57 & 98.03 & 58.83 & 80.49 \\
TIES w/ DARE                     & 65.78 & 69.59 & 69.29 & 73.30 & 87.39 & 80.69 & 97.84 & 50.74 & 74.33 \\
TSV Merging                      & 78.18 & 89.79 & 93.52 & 96.70 & 95.58 & 96.48 & 99.08 & 75.27 & 90.57 \\
Iso-C                            & 79.78 & 90.76 & 94.37 & 96.48 & 92.92 & 95.38 & 98.77 & 76.76 & 90.65 \\
$\tau^{\rm cf}=DC^\dagger$        & 79.52 & 90.13 & 93.70 & 97.41 & 96.35 & 97.84 & 99.31 & 79.26 & 91.69 \\
\rowcolor{gray!10}WUDI Merging                     & 80.03 & 90.71 & 93.89 & 98.33 & 96.93 & 98.01 & 99.30 & 80.11 & 92.16 \\
\rowcolor{gray!10}OptMerge                         & 80.15 & 90.86 & 94.29 & 98.33 & \textbf{97.10} & \textbf{98.44} & 99.32 & 80.59 & 92.38 \\
\midrule
\rowcolor{gray!10}\swudi{}-soft (ablation) & 80.55 & 91.00 & 94.17 & \textbf{98.52} & 97.03 & 98.08 & 99.33 & 80.74 & 92.43 \\
\rowcolor{gray!10}\swudi{}            & 80.56 & \textbf{91.10} & \textbf{94.46} & 98.37 & 96.83 & 98.19 & 99.38 & \textbf{81.17} & \underline{92.51} \\
\rowcolor{gray!10}\aswudi{}                       & \textbf{80.59} & 91.06 & 94.40 & 98.48 & 97.07 & 98.25 & \textbf{99.39} & 80.90 & \textbf{92.52} \\
\bottomrule
\end{tabular}}
\end{table*}

\subsection{Spectral Diagnostics}
This subsection gathers the spectrum-level evidence used to motivate layer-wise adaptive truncation. We first compare architectures and fine-tuning regimes, then include additional diagnostic panels.

\subsubsection{Per-Architecture Spectral Statistics}
\label{app:tab-spectra}
Per-architecture spectral statistics on the per-layer interference operators $C^{(\ell)}=\sum_i \tau_i^\top \tau_i / \|\tau_i\|_F^2$, referenced from Sec.~\ref{sec:diag}, are summarized in Table~\ref{tab:spectra}. Figs.~\ref{fig:mllm-magnitude} and~\ref{fig:mllm-norm} provide the empirical hook for the LoRA-vs-full-FT contrast that motivates the dual rank-rule design.

\begin{figure}[!t]
\centering
\resizebox{0.9\textwidth}{!}{
\subfloat[InternVL2.5-1B (full FT)]{\includegraphics[width=0.42\textwidth]{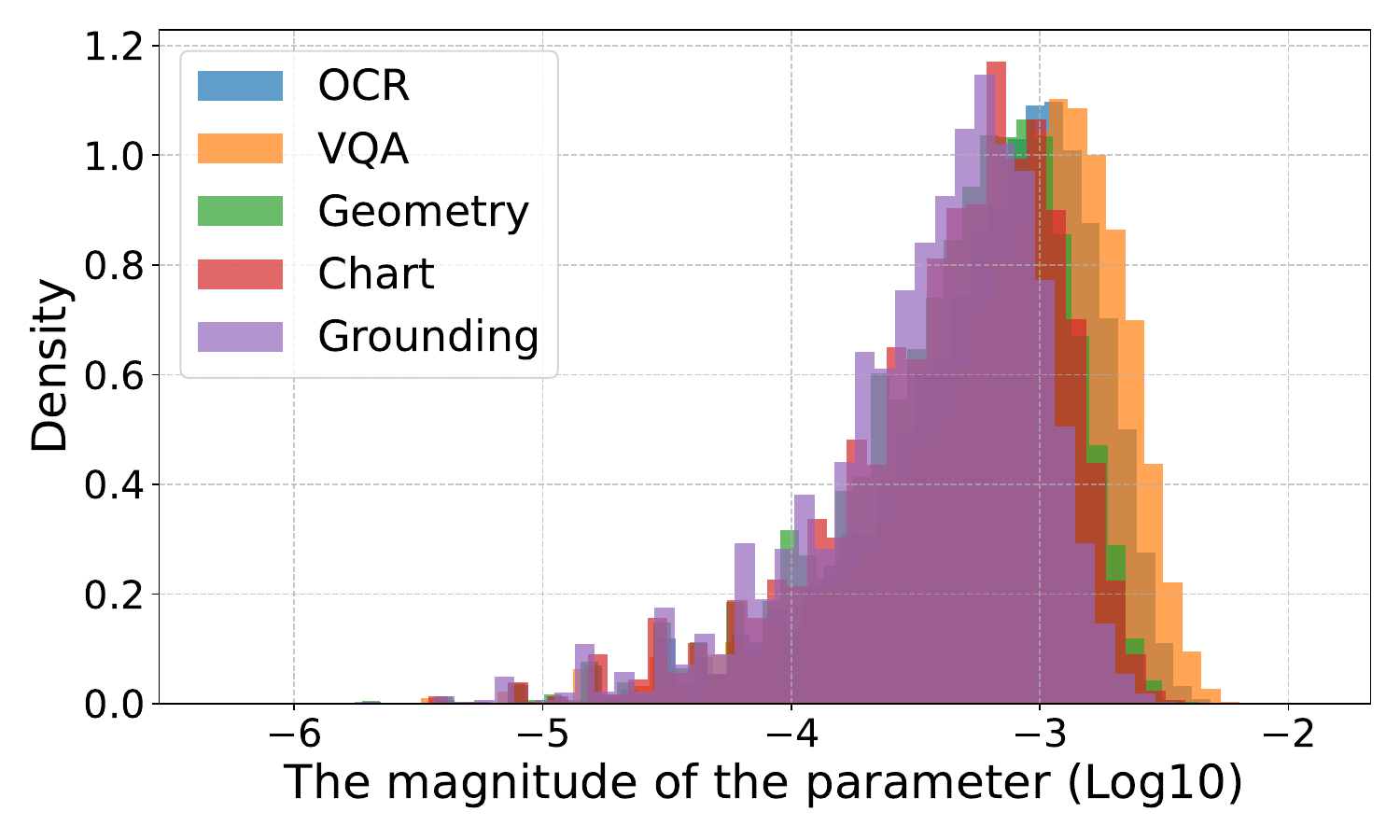}}\hfill
\subfloat[Qwen2-VL-7B (LoRA $r{=}8$)]{\includegraphics[width=0.42\textwidth]{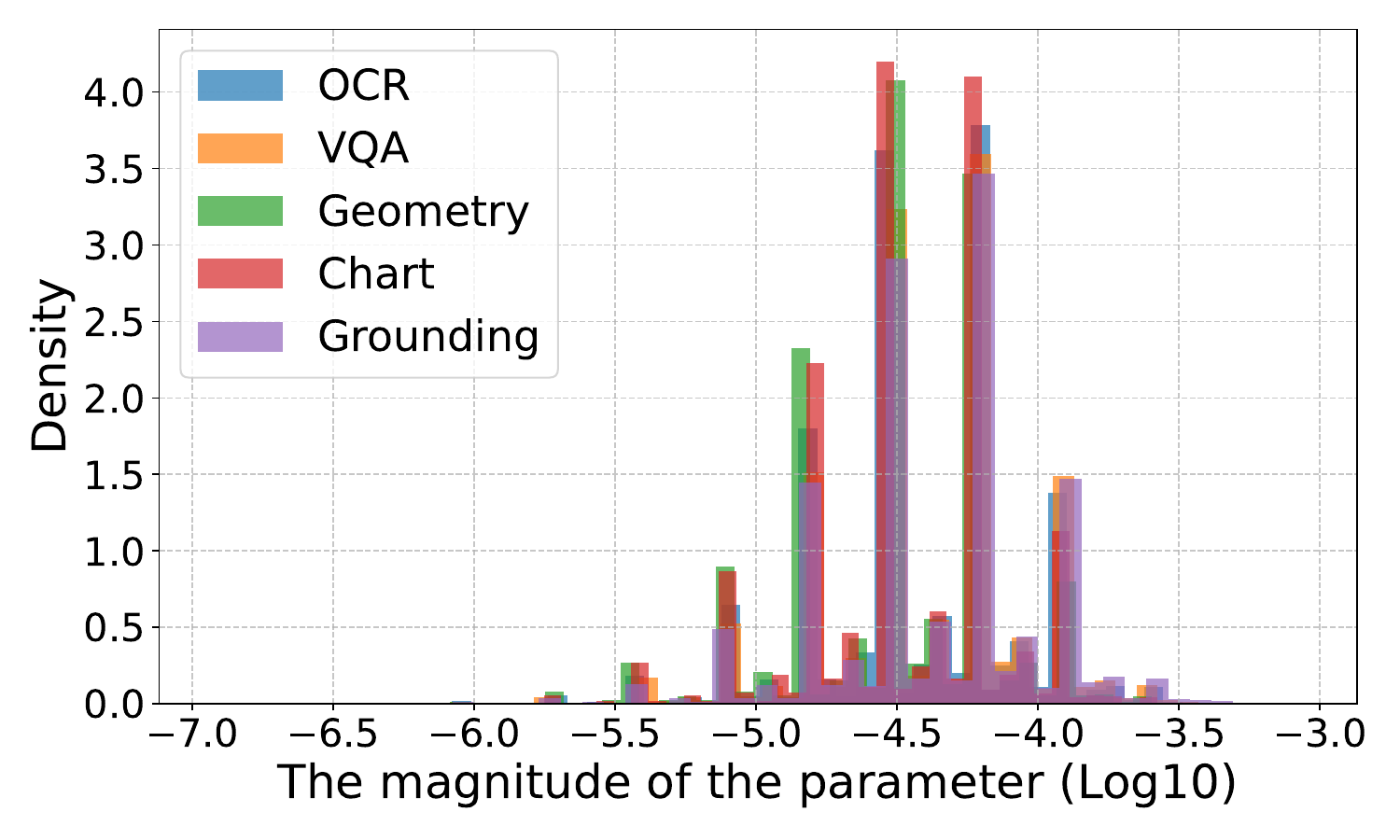}}
}
\caption{\textbf{Task-vector magnitude distribution on the MLLM benchmark.} InternVL2.5 (full fine-tuning) exhibits a right-skewed distribution typical of dense parameter updates, whereas Qwen2-VL (LoRA) displays a multi-modal distribution: the low-rank constraint and LoRA scaling factor restrict deltas to a reduced subspace, causing them to cluster along a few dominant magnitudes. Both backbones show distinct distributions across tasks, supporting layer-wise rather than global rank rules.}
\label{fig:mllm-magnitude}
\end{figure}

\begin{figure}[!t]
\centering
\resizebox{0.9\textwidth}{!}{
\subfloat[InternVL2.5-1B (full FT)]{\includegraphics[width=0.42\textwidth]{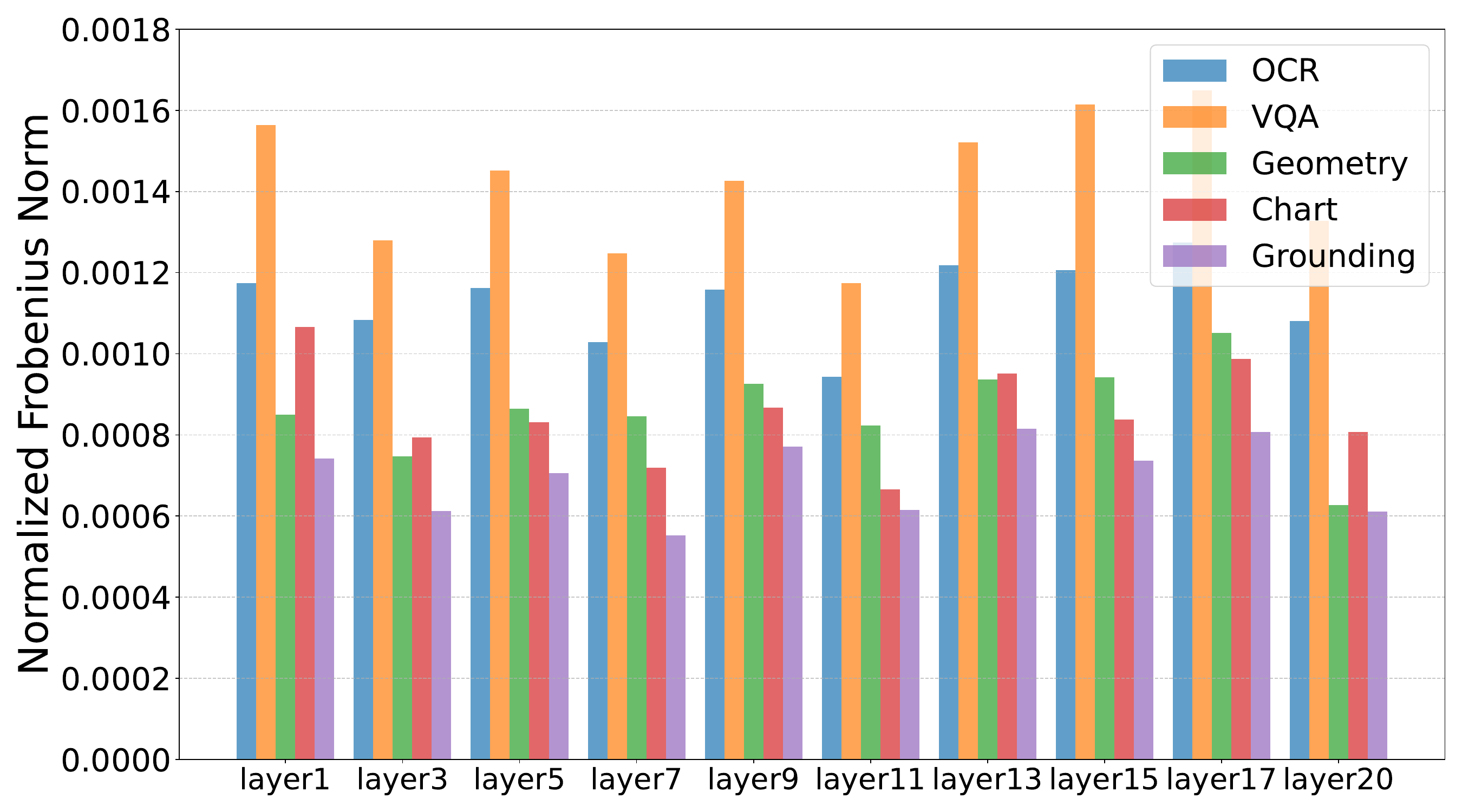}}\hfill
\subfloat[Qwen2-VL-7B (LoRA $r{=}8$)]{\includegraphics[width=0.42\textwidth]{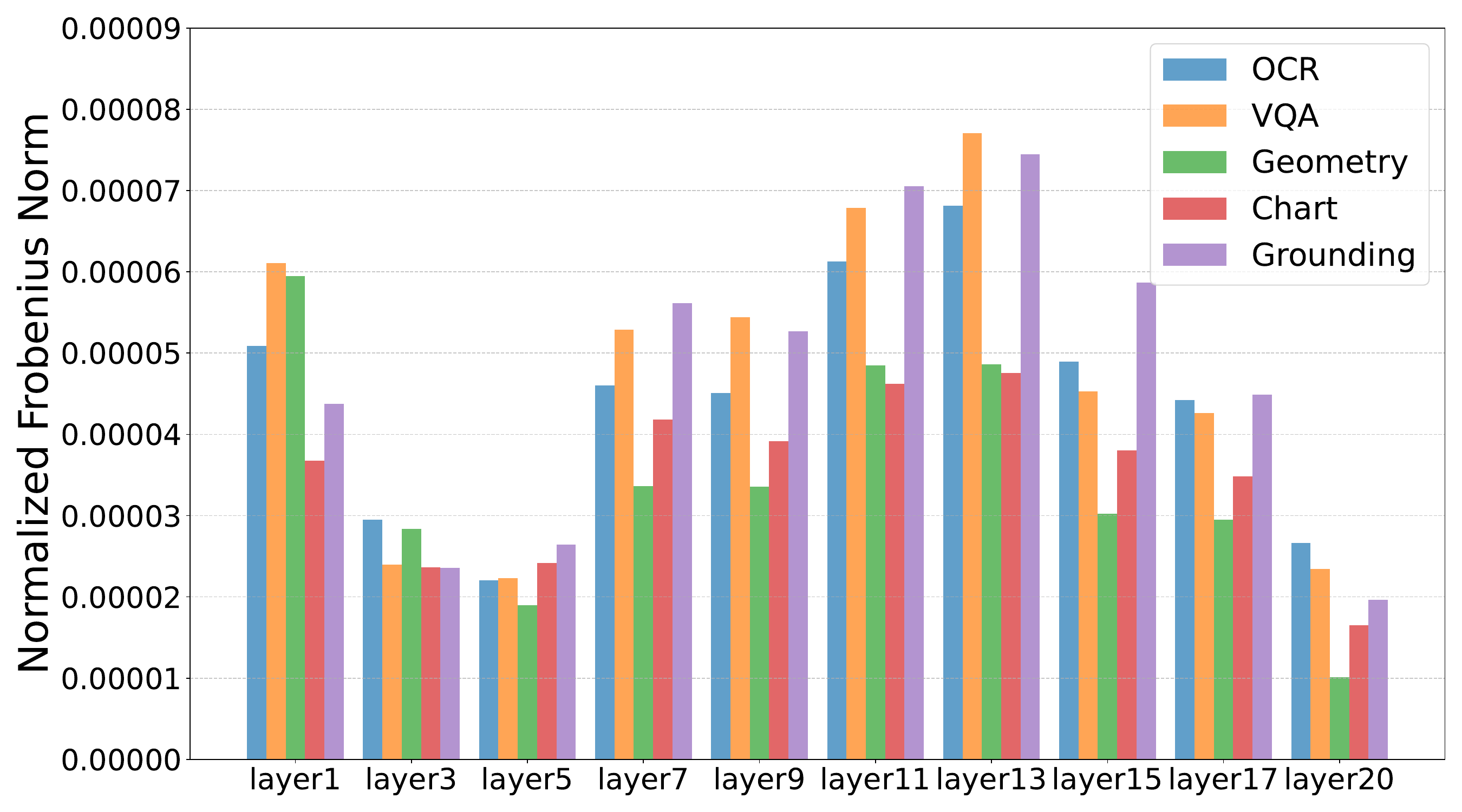}}
}
\caption{\textbf{Normalized Frobenius norm of task vectors across layers.} Norms are divided by the number of parameters of the corresponding linear layer. The Frobenius norm varies substantially across both layers and tasks, and the variation pattern differs by architecture and fine-tuning regime. Layer-wise rank adaptation in \aswudi{} addresses this heterogeneity directly. The small magnitudes in absolute terms (well below $1\%$ of the base-model weight norm) are consistent with the conference-version observation that fine-tuned MLLMs and base models occupy adjacent regions of the loss landscape with linear connectivity~\cite{wu2023pi}.}
\label{fig:mllm-norm}
\end{figure}

\begin{table*}[!t]
\centering
\caption{Per-architecture spectral diagnostics motivating adaptive rank selection.
For each benchmark, we summarize the per-layer operators
$C^{(\ell)}=\sum_i \tau_i^\top \tau_i / \|\tau_i\|_F^2$ by update magnitude, effective-rank ratio, spectral conditioning, layer-wise variability, and the ranks retained by \aswudi{}-\texttt{psqrt}.
$\lambda_{90}:=\lambda_{\lceil 0.9\,d_i\rceil}$ is the eigenvalue at the $90\%$ rank position from the top under descending eigenvalue order.}
\label{tab:spectra}
\setlength{\tabcolsep}{6pt}
\renewcommand{\arraystretch}{1.05}
\resizebox{\textwidth}{!}{%
\begin{tabular}{l|ccccc}
\toprule
\textbf{Statistic} & CLIP-B/32 & Flan-T5 LoRA & Llama-3.2-3B & Qwen2-VL-7B (LoRA) & InternVL2.5-1B \\
\midrule
\# linear layers ($\ell$)                                    & 72 & 72 & 196 & 560 & 168 \\
$\|\tau_i^{(\ell)}\|_F/\|W_0^{(\ell)}\|_F$ (mean over $i,\ell$) & 0.0173 & 0.0111 & 0.0093 & 0.0059 (LoRA $\Delta$) & 0.0181 \\
Effective-rank ratio $r_{\rm eff}/d_i$ (mean)                & 0.177 & 0.0043 & 0.266 & 0.041 & 0.408 \\
$\lambda_{\max}/\lambda_{\rm med}$ (median)                  & 85    & $1.3{\cdot}10^{8}$ & 80 & $4.2{\cdot}10^{6}$ & 73 \\
$\lambda_{90}/\lambda_{\max}$ (median)                       & $2.9{\cdot}10^{-3}$ & $4.5{\cdot}10^{-10}$ & $5.3{\cdot}10^{-3}$ & $1.7{\cdot}10^{-8}$ & $4.1{\cdot}10^{-3}$ \\
Layer-wise CV of $\lambda_{\max}$                            & 0.601 & 0.257 & 0.720 & 0.844 & 0.512 \\
\aswudi{}-\texttt{psqrt} mean $K/d_i$                        & 0.554 & 0.013 & 0.602 & 0.149 & 0.615 \\
\aswudi{}-\texttt{psqrt} min/median/max $K$                  & 26/180/512 & 4/12/64 & 16/512/3072 & 128/570/4010 & 64/472/2048 \\
\bottomrule
\end{tabular}}
\end{table*}

\subsubsection{Task-Vector Proxy and Optimizer-Filter Diagnostics}
\label{app:additional-diagnostics}
This subsection presents two per-layer diagnostics evaluated on CLIP-ViT-B/32. Fig.~\ref{fig:dropped-panels-1}(a) illustrates the capture gap between the task-vector subspaces and the input activation subspace, serving as the empirical basis for the task-vector proxy and Assumption~\ref{ass:cov}. Furthermore, Fig.~\ref{fig:dropped-panels-1}(b) demonstrates the exact equivalence between SGD on the \wudi{} quadratic objective and the Landweber spectral filter. This result confirms Proposition~\ref{prop:flow} and validates the SGD/Landweber identity deferred from Sec.~\ref{sec:flow}.

\begin{figure*}[!t]
\centering
\includegraphics[width=0.90\textwidth]{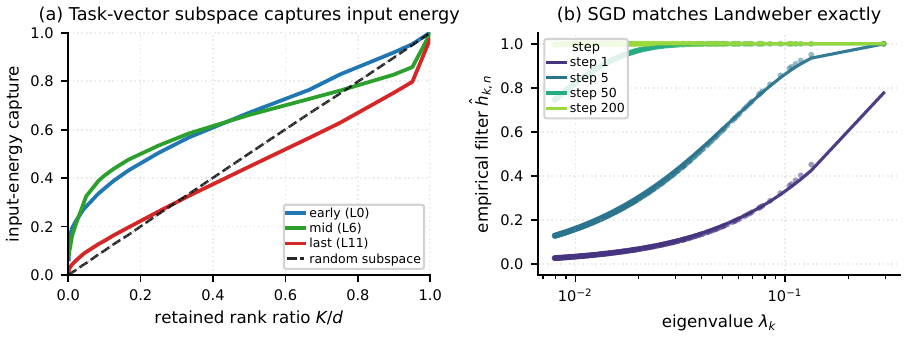}
\caption{\textbf{Task-vector proxy and optimizer-filter diagnostics.} \textbf{(a)} Task-vector subspaces capture input energy in early and middle CLIP-ViT-B/32 layers (capture gap $+0.18$--$0.43$ vs.\ random subspaces); the last MLP layer is a documented exception (gap $\approx 0$). \textbf{(b)} SGD on the \wudi{} quadratic exactly matches the Landweber spectral filter $1-(1-\eta\lambda_k)^{n_{\rm eff}}$ at all checkpoints ($R^2=1.0000$, $n_{\rm eff}\approx 2\!\cdot\!\text{step}$), confirming Proposition~\ref{prop:flow}.}
\label{fig:dropped-panels-1}
\end{figure*}

\subsection{\optmerge{} Analysis and Rank-Truncation Evidence}
This subsection revisits two pieces of evidence from the \optmerge{} analysis. The component-wise analysis explains how the iterative baseline was stabilized, while the truncation-ratio sweep motivates the hard spectral cutoff used by \swudi{}.

\subsubsection{Component-Wise Analysis}
\label{app:tab-optmerge-abl}
\optmerge{} introduces three modifications to the iterative \wudi{} objective: replacing Adam with SGD, initializing $\tau_m$ with the mean of the task vectors, and applying a low-rank approximation to $\tau_i$. We analyze the contribution of each component on Qwen2-VL (LoRA capability merging) and Vicuna-7B (modality merging), which are the two settings most affected by the narrow active subspace characteristic of LoRA. Replacing Adam with SGD in isolation is detrimental, as the optimizer struggles to escape the norm-inflation regime documented in Fig.~\ref{fig:optmerge-norm-history}. However, incorporating the mean initialization recovers and further improves accuracy, while the low-rank approximation of $\tau_i$ yields an additional marginal gain. Furthermore, these components have a neutral or mildly positive effect in the modality-merging setting. This indicates that they do not degrade performance in regimes for which \optmerge{} was not explicitly tuned.

\begin{figure}[!t]
\centering
\includegraphics[width=0.5\textwidth]{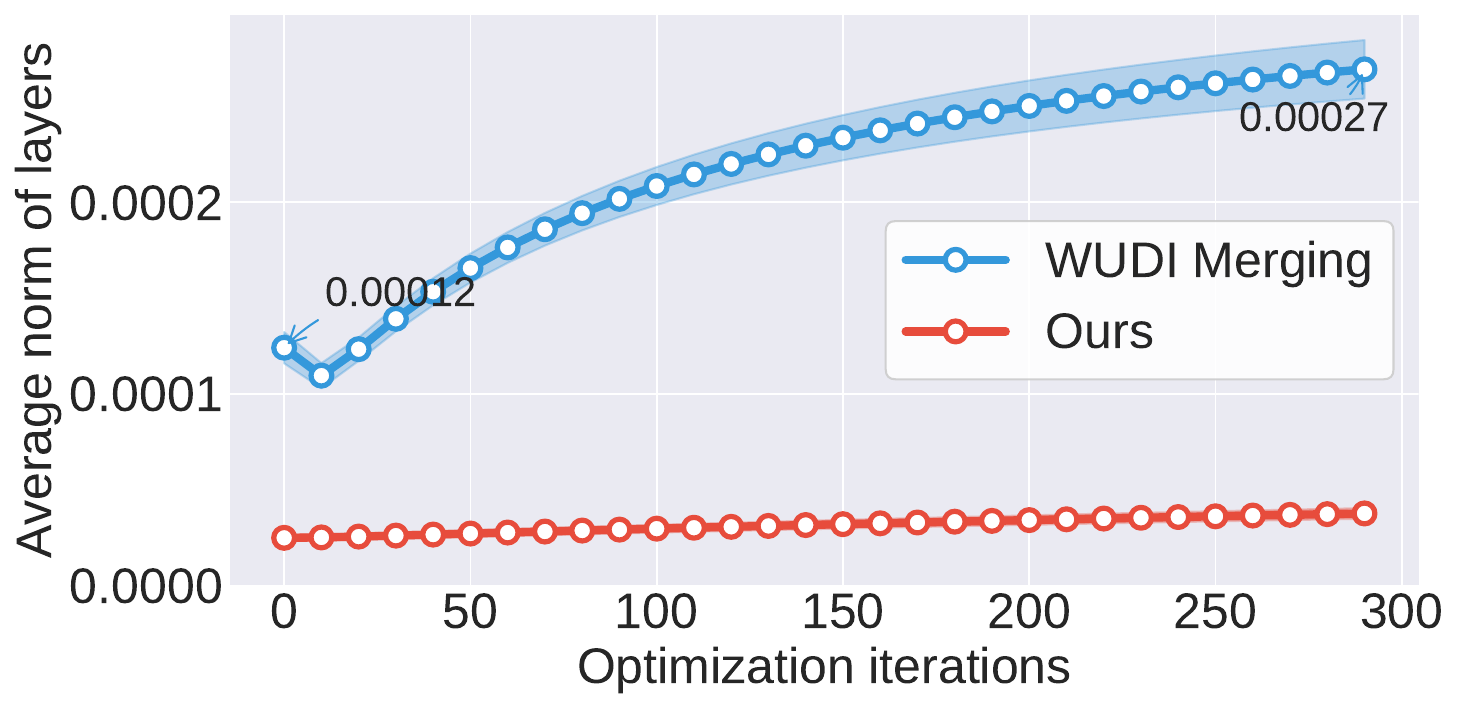}
\caption{\textbf{Frobenius-norm trajectory of $\tau_m$ under iterative \wudi{}/\optmerge{} on the Qwen2-VL LoRA setting (averaged over linear layers).} The unregularized iterative loss inflates $\|\tau_m\|_F$ throughout optimization (the norm-shortcut behavior of Sec.~\ref{sec:noise}; see also Fig.~\ref{fig:norm-shortcut}). Mean initialization plus the low-rank truncation of $\tau_i$ keep the trajectory norm-bounded while reducing the loss successfully. The closed-form spectral solvers \swudi{}/\aswudi{} avoid the inflation altogether by suppressing small-$\lambda_k$ directions in the eigenbasis.}
\label{fig:optmerge-norm-history}
\end{figure}

\subsubsection{Evidence for Head-Spectrum Truncation}
\label{app:tab-swudi-k}
Table~\ref{tab:swudi-k} presents a truncation-ratio sweep ($k/d_i\in\{0.1,0.2,0.3,0.4,0.5\}$) conducted for \optmerge{} in the InternVL2.5-1B capability merging setting. We include this analysis because the truncation index $k$ plays an identical role in both the hard-truncation factor of \swudi{} and the low-rank denoising of $\tau_i$ in \optmerge{}. Average performance remains essentially stable for $k\in[0.1,0.3]$ (ranging from $56.63\%$ to $57.43\%$), but declines for $k\ge 0.4$ as more low-eigenvalue directions are incorporated into the inversion. This observation aligns with the analysis in Sec.~\ref{sec:noise}: the head of the spectrum accounts for almost all the proxy reduction, whereas including tail directions introduces noise rather than signal. Consequently, this trend provides empirical justification for setting the default \swudi{} configuration to a small truncation ratio.

\begin{table*}[!t]
\centering
\caption{\optmerge{} truncation-ratio sweep on InternVL2.5-1B, used to motivate \swudi{} rank choices. Each row reports per-task accuracy ($\%$).}
\label{tab:swudi-k}
\setlength{\tabcolsep}{2pt}
\renewcommand{\arraystretch}{1.05}
\resizebox{0.9\textwidth}{!}{%
\begin{tabular}{l|cc|cc|c|cc|ccc|c}
\toprule
\multirow{2}{*}{\textbf{$k$ ratio}} & \multicolumn{2}{c|}{\textbf{VQA}} & \multicolumn{2}{c|}{\textbf{Geometry}} & \textbf{Chart} & \multicolumn{2}{c|}{\textbf{OCR}} & \multicolumn{3}{c|}{\textbf{Grounding}} & \multirow{2}{*}{\textbf{Avg.}}\\
\cmidrule{2-11}
 & VizWiz & GQA & MathVista & MATH-Vision & ChartQA & TextVQA & OCRVQA & RefCOCO & RefCOCO+ & RefCOCOg & \\
\midrule
$10\%$ & 30.90 & 57.26 & 51.49 & 18.42 & 68.40 & 76.10 & 46.39 & 76.36 & 69.99 & 73.96 & 56.93 \\
$20\%$ & 30.97 & 57.13 & 54.48 & 21.05 & 68.72 & 76.01 & 46.35 & 75.97 & 69.72 & 73.94 & \textbf{57.43} \\
$30\%$ & 31.55 & 57.15 & 54.50 & 21.05 & 68.72 & 76.27 & 45.67 & 73.63 & 66.84 & 70.92 & 56.63 \\
$40\%$ & 31.49 & 56.92 & 55.77 & 25.00 & 67.36 & 76.06 & 45.96 & 65.55 & 58.40 & 59.64 & 54.22 \\
$50\%$ & 31.37 & 56.68 & 56.75 & 23.68 & 68.08 & 75.81 & 45.02 & 61.45 & 54.80 & 56.19 & 52.98 \\
\bottomrule
\end{tabular}}
\end{table*}

%% file: appendices/D_reproducibility.tex
\section{Our MLLMerging Benchmark}
\label{app:reproduce}

This appendix documents MLLMerging, the benchmark introduced in Sec.~\ref{sec:bench-mllm} for merging multimodal large language models (MLLMs). It isolates the merging algorithm as the only free variable: experts share a backbone, are fine-tuned on capability-aligned data, and are merged purely in parameter space without access to the original training data. The benchmark provides the training suite, full fine-tuning and LoRA expert checkpoints, and a matched evaluation protocol, enabling fair comparison across merging methods.

\subsection{Motivation and Benchmark Scope}
\label{app:reproduce-why}

Existing model-merging benchmarks are dominated by vision-only classifiers or text-only language tasks. MLLMs introduce additional complications because a single model must preserve visual perception, language reasoning, grounding, OCR, and modality-specific alignment. MLLMerging targets three gaps that are not fully covered by earlier benchmarks.

\noindent\textbf{Training-evaluation mismatch.} Public MLLMs are often trained on mixtures of proprietary, licensed, and open-source data, while they are evaluated on standalone suites such as MMBench~\cite{liu2024mmbench}, SEED-Bench~\cite{li2024seed}, MME~\cite{fu2306mme}, and MMStar~\cite{chen2024are}. The same backbone can therefore exhibit very different capability profiles depending on the fine-tuning data. MLLMerging aligns capability-specific training data with capability-specific evaluation suites, so that the merging algorithm, rather than the upstream data composition, is the primary variable.

\noindent\textbf{Task expertise versus instruction following.} Capability datasets such as VQA, OCR, and grounding provide strong task supervision, but they may not match the broad instruction-following distribution of modern MLLMs. The benchmark therefore reports both per-capability evaluations (Tables~\ref{tab:intern} and~\ref{tab:qwen}) and integrated multimodal QA evaluations (Table~\ref{tab:integrated}).

\noindent\textbf{Capability and modality composition.} Beyond combining task-specialized experts that share a full MLLM backbone, the benchmark also studies modality merging: vision-, audio-, and video-language experts share an LLM backbone but use modality-specific encoders and connectors. This setting tests whether a parameter-space merge can preserve complementary sensory channels without online routing or joint retraining.

\subsection{Capability-Merging Tasks and Data}
\label{app:reproduce-data}

A core contribution of MLLMerging is the curated capability-merging data suite. Unlike prior studies that rely on a few vision-classification heads or a single text corpus, we construct a large, capability-aligned training pool. This ensures each expert is a true specialist, isolating the merging algorithm as the sole variable during evaluation. The suite encompasses five complementary MLLM capabilities (VQA, Geometry, Chart understanding, OCR, and Grounding), aggregating approximately $1.37$M instruction-tuning samples from over twenty public datasets (Table~\ref{tab:data-detail}). We deliberately collect at least $100$K samples per capability and prioritize source diversity (e.g., incorporating ten OCR datasets ranging from scene text to document and table understanding). This prevents experts from overfitting to a single dataset's distribution, promoting broad generalization within their respective domains.

Two additional design choices ensure the suite is readily reusable as a benchmark. First, all data sources are standardized into a ShareGPT-style instruction-tuning format with a uniform grounding-coordinate convention (Appendix~\ref{app:reproduce-hp}). This guarantees that any backbone can be fine-tuned, and any merging method evaluated, under identical supervision conditions. Second, the suite intentionally mixes English-only and bilingual (English/Chinese) sources. While InternVL2.5-1B utilizes the full multilingual dataset, Qwen2-VL-7B is restricted to the English-only subsets. This design allows us to evaluate merging algorithms on both multilingual full-parameter experts and monolingual low-rank (LoRA) experts within a unified framework. Ultimately, pairing this training suite with the capability-matched evaluation protocol (Appendix~\ref{app:reproduce-eval}) closes the train-evaluation gap often present in earlier MLLM benchmarks (Appendix~\ref{app:reproduce-why}). Consequently, any changes in downstream accuracy can be confidently attributed to the merging algorithm itself, rather than variations in upstream data composition.
\begin{table}[!tbp]
\centering
\caption{Capability training datasets used to construct the MLLMerging expert checkpoints: five capabilities, over twenty public datasets, and $\approx 1.37$M instruction-tuning samples in total ($\geq 100$K per capability).}
\label{tab:data-detail}
\setlength{\tabcolsep}{6pt}
\renewcommand{\arraystretch}{1.16}
\resizebox{\textwidth}{!}{%
\begin{tabular}{l|c|p{12cm}}
\toprule
\textbf{Capability} & \textbf{Total} & \textbf{Datasets (language)} \\
\midrule
VQA       & 588K & GQA (en)~\cite{hudson2019gqa}, VQAv2 (en)~\cite{goyal2017making}, OKVQA (en)~\cite{marino2019ok}, LLaVA-Instruct (zh)~\cite{liu2024improved}, CogVLM-Singleround (en \& zh)~\cite{wang2024cogvlm}, CogVLM-Multiround (en \& zh)~\cite{wang2024cogvlm} \\
Geometry  & 190K & GeoQA+ (zh)~\cite{cao2022augmented}, G-LLaVA (en)~\cite{gao2023g} \\
Chart     & 218K & ChartQA (en)~\cite{masry2022chartqa}, DVQA (en)~\cite{kafle2018dvqa} \\
OCR       & 238K & OCRVQA (en)~\cite{mishra2019ocr}, TextCaps (en)~\cite{sidorov2020textcaps}, SynthDoG (en)~\cite{kim2022ocr}, LLaVAR (en)~\cite{zhang2023llavar}, ST-VQA (en)~\cite{biten2019scene}, TextVQA (en)~\cite{singh2019towards}, DocVQA (en)~\cite{mathew2021docvqa}, DeepForm (en)~\cite{svetlichnaya2020deepform}, KLC (en)~\cite{stanislawek2021kleister}, TabFact (en)~\cite{chen2019tabfact} \\
Grounding & 135K & RefCOCO (en)~\cite{kazemzadeh2014referitgame,yu2016modeling,mao2016generation}, VG (en)~\cite{krishna2017visual} \\
\bottomrule
\end{tabular}}
\end{table}

\subsection{Backbones and Expert Construction}
\label{app:reproduce-hp}

\noindent\textbf{Capability merging.} We employ two representative MLLM backbones. InternVL2.5-1B-Instruct~\cite{chen2024expanding} is fully fine-tuned for one epoch with a learning rate of $4{e}{-5}$ and a warmup ratio of $3{e}{-2}$. Qwen2-VL-7B-Base~\cite{wang2024qwen2} is fine-tuned using LoRA~\cite{hu2022lora} with a rank of $r{=}8$, a learning rate of $10^{-5}$, and a warmup ratio of $10^{-1}$. These two configurations allow us to evaluate the merging methods on both dense full-parameter deltas and low-rank LoRA deltas.

\noindent\textbf{Data preprocessing.} Following standard training practices for InternVL and Qwen2-VL, we utilize only the training splits. We filter out corrupted images and samples where the combined question-answer length exceeds $8192$ tokens. The remaining data is then converted into the ShareGPT-style instruction-tuning format. Grounding coordinates are linearly mapped to the $[0,1000)$ range and enclosed within Qwen2-VL box tokens (\eg, \texttt{<|box\_start|>$\cdots$<|box\_end|>}~\cite{wang2024qwen2}).

\subsection{Modality-Merging Track}
\label{app:reproduce-modality}

For modality merging (Sec.~\ref{sec:exp_modality}, Table~\ref{tab:omni}), we follow~\cite{chen2024model} and pair Vicuna-7B-v1.5~\cite{zheng2023judging} with three modality-specific encoder/connector pairs. The vision, audio, and video experts share the same LLM backbone but are trained on different bi-modal data. Table~\ref{tab:modalities} lists the modality components.

\begin{table}[!tbp]
\centering
\caption{Modality components and training data for the three single-modality experts.}
\label{tab:modalities}
\setlength{\tabcolsep}{4pt}
\renewcommand{\arraystretch}{1.20}
\resizebox{\textwidth}{!}{%
\begin{tabular}{c|p{3.0cm}|c|p{3.0cm}|p{4.5cm}|p{3.0cm}}
\toprule
\textbf{Modality} & \textbf{Encoder} & \textbf{Connector} & \textbf{Alignment Data} & \textbf{Fine-tuning Data} & \textbf{Reference} \\
\midrule
Vision & CLIP-ViT-L-336px~\cite{radford2021learning} & MLP & LCS 558K~\cite{liu2023visual} & LLaVA-mixed 665K~\cite{liu2024improved} & LLaVA-1.5~\cite{liu2024improved} \\
\midrule
Audio  & BEATs-Iter3+~\cite{chen2023beats} & Q-Former~\cite{li2023blip} & WaveCaps 400K~\cite{mei2024wavcaps} & OpenAQA filtered 350K~\cite{gong2024listen} & X-InstructBLIP~\cite{panagopoulou2023x} \\
\midrule
Video  & LanguageBind~\cite{zhu2023languagebind} & MLP & LCS 558K~\cite{liu2023visual}, Valley 702K~\cite{luo2023valley} & Video-ChatGPT 100K~\cite{maaz2024video}, LLaVA-mixed subset 140K~\cite{liu2024improved} & Video-LLaVA~\cite{lin2024video} \\
\bottomrule
\end{tabular}}
\end{table}

The modality experts are trained in two stages. Stage~1 aligns each modality encoder to the LLM by training only the connector. Stage~2 fine-tunes the connector and the LLM, with LoRA of rank $r{=}128$ applied to all linear modules in the LLM. At merging time, the modality-specific encoders and connectors are kept intact, and only the LLM LoRA deltas are merged. The resulting Omni model can process vision, audio, and video inputs while using a single merged LLM backbone.

\subsection{Evaluation Protocol}
\label{app:reproduce-eval}

Capability evaluation is conducted using VLMEvalKit~\cite{duan2024vlmevalkit} and \texttt{lmms-eval}~\cite{zhang2024lmms} with consistent decoding, preprocessing, and answer-extraction settings. The five-capability suite includes VizWiz~\cite{gurari2018vizwiz} and GQA~\cite{hudson2019gqa} for VQA; MathVista~\cite{lu2024mathvista} and MATH-Vision~\cite{wang2024measuring} for Geometry; ChartQA~\cite{masry2022chartqa} for Chart understanding; TextVQA~\cite{singh2019towards} and OCRVQA~\cite{mishra2019ocr} for OCR; and RefCOCO/+/g~\cite{kazemzadeh2014referitgame,yu2016modeling,mao2016generation} for Grounding.

\noindent\textbf{Math geometry subsets.}\label{app:geometry-subsets} While the original conference paper~\cite{wei2026optmerge} restricted MathVista to its geometry-related subsets and MATH-Vision to four specific geometry categories, the updated protocol in Tables~\ref{tab:intern} and~\ref{tab:qwen} reports the official overall results for both MathVista and MATH-Vision.

\noindent\textbf{Integrated QA and modality evaluation.} For integrated multimodal QA, we employ MMMU~\cite{yue2024mmmu}, DocVQA~\cite{mathew2021docvqa}, ScienceQA~\cite{lu2022learn}, AI2D~\cite{kembhavi2016diagram}, and InfographicVQA~\cite{mathew2022infographicvqa}. Modality merging is evaluated on AVQA~\cite{yang2022avqa} and MUSIC-AVQA~\cite{li2022learning}, which assess spatio-temporal reasoning across audio-visual scenes.

\subsection{Answer Extraction Prompt}
\label{app:answer-extraction}

For MathVista and MATH-Vision, free-form model outputs are normalized by GPT-4o-mini using the prompt below. The template variables \texttt{\{question\}} and \texttt{\{prediction\}} denote the original question and the model's raw response.

\begingroup
\scriptsize
\begin{spverbatim}
Please read the following examples. Then extract the answer from the model response and type it at the end of the prompt.

Hint: Please answer the question requiring an integer answer and provide the final value,
e.g., 1, 2, 3, at the end.
Question: Which number is missing?
Model response: The number missing in the sequence is 14.
Extracted answer: 14

Hint: Please answer the question requiring a floating-point number with one decimal place and provide the final value,
e.g., 1.2, 1.3, 1.4, at the end.
Question: What is the fraction of females facing the camera?
Model response: The fraction of females facing the camera is 0.6,
which means that six out of ten females in the group are facing the camera.
Extracted answer: 0.6

Hint: Please answer the question requiring a floating-point number with two decimal places and provide the final value,
e.g., 1.23, 1.34, 1.45, at the end.
Question: How much money does Luca need to buy a sour apple candy and a butter-scotch candy? (Unit: $)
Model response: Luca needs $1.45 to buy a sour apple candy and a butterscotch candy.
Extracted answer: 1.45

Hint: Please answer the question requiring a Python list as an answer and provide the final list,
e.g., [1, 2, 3], [1.2, 1.3, 1.4], at the end.
Question: Between which two years does the line graph saw its maximum peak?
Model response: The line graph saw its maximum peak between 2007 and 2008.
Extracted answer: [2007, 2008]

Hint: Please answer the question and provide the correct option letter, e.g., A, B, C, D, at the end.
Question: What fraction of the shape is blue?
Choices: (A) 3/11 (B) 8/11 (C) 6/11 (D) 3/5
Model response: The correct answer is (B) 8/11.
Extracted answer: B

{question}
Model response: {prediction}
Extracted answer:
\end{spverbatim}
\endgroup